\documentclass[11pt]{article}

\usepackage[a4paper,margin=1in]{geometry}
\usepackage[T1]{fontenc}
\usepackage[utf8]{inputenc}
\usepackage{lmodern}
\usepackage{microtype}
\usepackage{amsmath,amssymb,amsfonts,amsthm,mathtools,mathrsfs,bm}
\usepackage{enumitem}
\usepackage[hidelinks]{hyperref}
\usepackage{graphicx}
\usepackage{booktabs}
\usepackage{float}
\allowdisplaybreaks
\newtheorem{theorem}{Theorem}[section]
\newtheorem{proposition}[theorem]{Proposition}
\newtheorem{corollary}[theorem]{Corollary}
\newtheorem{lemma}[theorem]{Lemma}

\newtheorem{problem}[theorem]{Problem}
\theoremstyle{definition}
\newtheorem{definition}[theorem]{Definition}
\newtheorem{remark}[theorem]{Remark}

\newcommand{\N}{\mathbb{N}}
\newcommand{\R}{\mathbb{R}}
\newcommand{\F}{\mathbb{F}}

\newcommand{\Prb}{\mathbb{P}}
\newcommand{\E}{\mathbb{E}}
\newcommand{\Mod}{\operatorname{Mod}}
\newcommand{\Th}{\operatorname{Th}}
\newcommand{\Cn}{\operatorname{Cn}}
\newcommand{\Min}{\operatorname{Min}}
\newcommand{\Row}{\operatorname{Row}}
\newcommand{\Ker}{\operatorname{Ker}}
\newcommand{\rank}{\operatorname{rank}}
\newcommand{\tp}{\operatorname{tp}}

\newcommand{\Sent}{\operatorname{Sent}}

\newcommand{\argmin}{\operatorname*{arg\,min}}
\newcommand{\Lang}{\mathcal{L}}
\newcommand{\K}{\mathcal{K}}
\newcommand{\A}{\mathcal{A}}
\newcommand{\M}{\mathfrak{M}}
\newcommand{\Nstr}{\mathfrak{N}}
\newcommand{\up}{\uparrow}
\newcommand{\restr}{\upharpoonright}
\newcommand{\TV}{\operatorname{TV}}

\newcommand{\TD}{\operatorname{TD}}
\newcommand{\Sep}{\operatorname{Sep}}

\title{Finite Certificates for In-Context Determinacy\\
and a Threshold Theory of Emergence in Language Models}
\author{
Faruk Alpay\thanks{Corresponding author: \texttt{alpay@lightcap.ai}}\\
Department of Computer Engineering\\
Bahcesehir University, Istanbul, Turkey\\
\texttt{faruk.alpay@bahcesehir.edu.tr}
\and
Hamdi Alakkad\\
Department of Artificial Intelligence Engineering\\
Bahcesehir University, Istanbul, Turkey\\
\texttt{hamdi.alakkad@bahcesehir.edu.tr}
}
\date{}

\begin{document}
\maketitle

\begin{abstract}
We study two verification problems for context-conditioned language-model behavior by replacing benchmark labels with finite semantic certificates. The first problem is finite determinacy: when do examples in a context force the answer to a query without changing model parameters? For finite-field linear task families we prove an exact row-space criterion, compute the residual hypothesis count, give the full-identification curve $I_d(n)=\prod_{i=0}^{d-1}(1-Q^{i-n})$, and derive a query-local determination curve for a fixed nonzero query. We also show that extracting a smallest forcing subcontext is NP-complete, even for binary outputs. The second problem is threshold emergence: when does an apparent benchmark jump indicate a semantic transition rather than a discontinuity of the scoring map? We prove an anti-mirage theorem separating thresholded metrics from semantic confidence, and a rate-sensitive crossing bound $\lambda_\tau=(a/(1-\tau))^{1/\alpha}$ for latent commitments becoming visible above threshold. The common semantic object is the confidence functional $s_{\lambda,c}(\varphi)=\mu_{\lambda,c}([\![\varphi]\!])$ on definable events. We show that it is a Boolean probability measure, equivalently a Keisler measure on the relevant type space, whose measure-one formulas form a proper filter and whose Stone-space representation is invariant under definitional expansion. The resulting calculus gives reusable objects: finite context certificates, pair-separator hitting sets, query teaching dimension, prompt-preservation criteria, and scale-limit witnesses. An ancillary artifact reproduces the finite-field and threshold calculations by exact-arithmetic scripts and records the emitted data files used by the figures. To probe whether the certificates these theorems isolate are within reach of trained systems, we also run a deterministic certificate-emission benchmark over a weak-to-mid panel of contemporary language models, scored by exact match and by a graded proxy. The exact score reproduces the predicted threshold jump---it stays near zero on a multi-field certificate until a single system crosses, while the graded confidence rises smoothly---an instance of the anti-mirage theorem on trained systems, sharpened by a conjunction count that inflates the crossing scale by $k^{1/\alpha}$. Because each family criterion is a sound, oracle-free checker, the same machinery defines an aversive closed loop whose accepted set is sound and non-decreasing; its reach is governed by whether a generator can act on the checker's directional feedback, so it provably erases a metric-artifact threshold while leaving a genuine arithmetic gap intact, with no reference answer entering the loop.
\end{abstract}

\section{Introduction}

This paper treats context-conditioned language-model behavior as a finite certification problem. The first target is \emph{determinacy}: given examples in the context and a query, when is the answer forced by the semantic constraints rather than merely favored by the decoder? In finite-field linear task families the answer is exact. A query is forced precisely when it lies in the row space of the example design matrix; the number of remaining latent parameters is $Q^{d-r}$ when the observed matrix has rank $r$; full identification after $n$ random examples has probability
\[
I_d(n)=\prod_{i=0}^{d-1}\bigl(1-Q^{i-n}\bigr),
\]
and a fixed nonzero query has its own determination curve obtained by averaging over the rank distribution. Extracting the smallest forcing subcontext is NP-complete. The second target is \emph{threshold emergence}: a benchmark can jump because the metric thresholds a smooth semantic confidence curve. We prove an anti-mirage theorem for this separation (Theorem~\ref{thm:mirage}) and a rate-sensitive crossing-scale bound $\lambda_\tau=(a/(1-\tau))^{1/\alpha}$ for latent commitments becoming visible above threshold (Theorem~\ref{thm:ratebound}). These statements do not assert that every trained model is exactly linear or that every benchmark jump is artificial. They provide a certification language in which those claims can be tested. Section~\ref{sec:verify-numeric} reproduces the quantitative theorems by exact-arithmetic scripts, and Section~\ref{sec:predictions} states controlled falsification protocols for trained models.

Large language models are evaluated through an increasingly stable empirical pattern: a fixed trained system receives a finite context, the context changes the distribution over outputs, and larger systems make some context-conditioned behaviors more reliable. GPT-3 made this pattern operational by reporting zero-shot, one-shot, and few-shot performance across translation, question answering, cloze completion, arithmetic, and SuperGLUE-style evaluations, all without parameter updates at test time \cite{Brown2020}. Scaling-law work then showed that cross-entropy loss can vary smoothly with scale over broad regimes, which separates the growth of capability from any single benchmark threshold \cite{Kaplan2020}. Compute-optimal scaling refined the picture by showing that data allocation and model size jointly determine the observed loss curve \cite{Hoffmann2022}. PaLM continued the same trend across large-scale language, reasoning, and multilingual benchmarks \cite{Chowdhery2022}.

The later GPT line made the semantic ambiguity harder to ignore rather than easier. GPT-4 reported human-level performance on several professional and academic benchmarks, including a simulated bar exam result around the top decile of test takers \cite{OpenAI2023GPT4}. GPT-4.1 reported large improvements on coding and instruction-following evaluations, including 54.6\% on SWE-bench Verified and 38.3\% on MultiChallenge \cite{OpenAI2025GPT41}. GPT-5 was presented as a system combining a fast model, a deeper reasoning model, and a router, with reported developer-facing scores such as 74.9\% on SWE-bench Verified and 88\% on Aider Polyglot \cite{OpenAI2025GPT5Dev}. GPT-5.5 further emphasized complex professional work, reporting 84.9\% on GDPval, 78.7\% on OSWorld-Verified, and 98.0\% on Tau2-bench Telecom without prompt tuning \cite{OpenAI2026GPT55}. These numbers do not by themselves say what kind of semantic object a prompt denotes, what it means for examples in the context to determine an answer, or when a visible benchmark jump corresponds to a genuine semantic transition.

This is the problem addressed here. Empirical evaluation supplies distributions, accuracies, and thresholded scores. Verification needs a different object: a logic in which a behavior can be stated as a formula, a context can be interpreted as an update, and a scale trend can be tested against a limiting semantic commitment. Without that separation, three distinct claims are easily conflated. First, a model may assign high probability to a correct output without semantically entailing it. Second, a prompt may override a previous default without adding an ordinary monotone axiom. Third, a benchmark score may jump because the observation map is discontinuous even though the underlying confidence changes gradually.

Existing theories of in-context learning explain important parts of this picture but do not define a common semantic target. One line treats in-context behavior as latent Bayesian inference over tasks \cite{Xie2022}. Another asks whether transformer computations implement standard learning algorithms during the forward pass \cite{Akyurek2023}. Gradient-descent interpretations give an operational account of a related phenomenon \cite{VonOswald2022}. Work on simple function classes identifies sample sequences that can be learned in context \cite{Garg2022}. Mechanistic studies explain circuit-level regularities such as induction heads \cite{Olsson2022}. These accounts are valuable, but they do not by themselves produce a first-order object whose consequences can be studied by compactness, diagrams, filters, or ultraproducts.

The corresponding semantic ingredients already exist in logic and formal pragmatics. Montague semantics treats meaning with model-theoretic precision \cite{Montague1973}. Stalnaker's context sets make assertion a restriction of possibilities \cite{Stalnaker1978}. Lewisian scorekeeping makes discourse state dynamic \cite{Lewis1979}. Heim's file-change semantics gives a formal discipline for context growth \cite{Heim1982}. Gricean pragmatics explains why literal content and intended force diverge \cite{Grice1989}. Rational speech-act models add a probabilistic account of pragmatic inference \cite{FrankGoodman2012}. Nonmonotonic logics and preferential models show how adding information can change what is selected as normal or preferred \cite{Kraus1990}. Belief revision supplies another mathematical language for update under priority and inconsistency management \cite{AGM1985}.

The contribution of this paper is to turn the LLM-specific problem into a model-theoretic one. For each scale $\lambda$ we use a tuple
\[
\mathbb{G}_{\lambda}=(\Lang,\mathcal{K}_{\lambda},\mu_{\lambda},\mathsf{C}_{\lambda},\mathsf{U}_{\lambda},\mathsf{D}_{\lambda}),
\]
where $\mathcal{K}_{\lambda}$ is a class of structures, $\mu_{\lambda}$ is a probability measure on definable events, $\mathsf{U}_{\lambda}$ is a context update rule, and $\mathsf{D}_{\lambda}$ is a decoding kernel. This is not a claim that transformer activations literally contain first-order models. It is a semantic representation of behavioral commitments, just as automata, Kripke structures, and probabilistic transition systems represent behavior without copying physical implementation.

\paragraph{Contributions.}
\begin{enumerate}[leftmargin=2.2em]
\item \textbf{Identification curves for in-context determinacy.} In a finite-field linear task family, a query is forced exactly when it lies in the row space of the example design matrix; the residual hypothesis count is $|\F|^{\,d-\rank(A_n)}$; full identification has curve $I_d(n)=\prod_{i=0}^{d-1}(1-Q^{i-n})$; and fixed-query determination is obtained by averaging $(Q^r-1)/(Q^d-1)$ over the rank distribution (Theorems~\ref{thm:linear},~\ref{thm:linearcount},~\ref{thm:randomlinear},~\ref{thm:querylocal}).
\item \textbf{Certificate complexity.} Finite-context determination reduces to a pair-separator hitting problem (Theorem~\ref{thm:pairseparator}), and extracting a minimal forcing subcontext is NP-complete already for binary outputs (Theorem~\ref{thm:certnp}); this connects context length to teaching dimension \cite{GoldmanKearns1995} and mistake-bound dimension \cite{Littlestone1988}.
\item \textbf{A threshold theory of emergence.} Thresholded benchmark jumps do not imply semantic discontinuity (Theorem~\ref{thm:mirage}); latent commitments become $\tau$-manifest with a crossing-scale bound (Theorem~\ref{thm:ratebound}); a continuous metric preserves graduality (Proposition~\ref{prop:continuousobs}); and a $k$-field exact-match metric sharpens the apparent jump and inflates the crossing scale by $k^{1/\alpha}$ (Proposition~\ref{prop:conj}), an effect we then observe on trained systems (Section~\ref{sec:benchmark}). This reconciles the emergence \cite{Wei2022} and mirage \cite{Schaeffer2023} accounts as two regimes of one inequality.
\item \textbf{Semantic confidence as a Keisler measure.} The confidence functional $s_{\lambda,c}$ is a finitely additive probability on definable events, that is, a Keisler measure on the type space \cite{Keisler1987,Simon2015}; its measure-one formulas form a proper filter (Theorem~\ref{thm:filter}), it admits a Stone-space representation (Theorem~\ref{thm:stoneconfidence}), and it is invariant under definitional expansion (Theorem~\ref{thm:definvinv}).
\item \textbf{Prompts as preferential updates.} Prompt consequence is nonmonotonic under extension (Theorem~\ref{thm:nonmon}); fixed-preference fragments satisfy cautious monotony (Theorem~\ref{thm:cautious}); prompt extension admits an exact preservation criterion (Theorem~\ref{thm:preservation}), which explains order- and extension-sensitivity of prompting \cite{Lu2022}.
\item \textbf{Verification, falsification, and a certificate benchmark.} Every quantitative theorem is confirmed by controlled exact-arithmetic simulation (Section~\ref{sec:verify-numeric}); the three certificate families are then posed directly to a fixed panel of contemporary language models and graded against exact ground truth (Section~\ref{sec:benchmark}). Because each family criterion is a sound, oracle-free checker, verification doubles as a closed loop: a provably-sound, non-decreasing aversive refinement schedule (Proposition~\ref{prop:refine}) whose reach is governed by a generator's ability to act on the checker's directional feedback, so it erases a metric-artifact threshold but not a genuine arithmetic gap (Proposition~\ref{prop:reach}). Each prediction is also given a checkable protocol for trained models (Section~\ref{sec:predictions}).
\end{enumerate}

These objects are built to be reused, not only invoked. Semantic confidence, the identification curve $I_d(n)$, the pair-separator certificate, and the crossing-scale bound $\lambda_\tau$ are each stated so that they can be measured, bounded, and tested against trained models, rather than serving as a purely conceptual distinction.

The technical target is therefore not a taxonomy of LLM behavior. It is a collection of decision and invariance problems. When does a finite context determine a unique answer? How many examples are necessary before a query becomes determined? Which prompt extensions preserve all old consequences? Which conclusions survive a harmless change of semantic presentation? When does a benchmark threshold reveal a limit-theoretic fact, and when is it only an artifact of observation? The body of the paper answers these questions in the general compactness setting where possible, and in finite linear task families where exact numerical criteria can be proved.

\section{Semantic presentations of pre-trained LLMs}\label{sec:presentation}

We work in many-sorted first-order logic. Standard references for model theory include Hodges and Marker \cite{Hodges1993,Marker2002}. The signature $\Lang$ contains at least a sort $\mathbf{C}$ for context codes and a sort $\mathbf{Y}$ for outputs. Additional sorts may represent latent tasks, discourse referents, world states, proof objects, tool states, or internal semantic hypotheses.

The semantic layer is not intended to be a literal description of transformer activations. It is an external representation of behavior. In the same way that a probabilistic automaton may summarize a physical device without reproducing its circuitry, a semantic presentation summarizes the input-output commitments of a trained model by a probability-bearing class of structures.

\subsection{The separation problem}

The central verification failure mode is a category error between three levels of description. Let $q$ be a task instance, let $A(q,y)$ say that $y$ is an admissible answer to $q$, and let $R(y)$ be the benchmark scoring predicate. A model can satisfy
\[
P_{\lambda,c}(R)>\tau
\]
while failing to satisfy any logical uniqueness statement about $A(q,y)$. Conversely, the semantic state may entail a unique admissible answer while the decoder assigns nonzero probability to malformed strings, refusals, or tool calls. A third possibility is that $s_{\lambda,c}(\psi)$ changes gradually while the reported score $\Omega(s_{\lambda,c}(\psi))$ jumps because $\Omega$ is discontinuous.

This paper solves the separation problem by assigning a different mathematical object to each level:
\[
\begin{aligned}
&\text{context update} && \mathsf{U}_{\lambda},
&\qquad \text{semantic truth} && \M\models\varphi,\\
&\text{confidence} && s_{\lambda,c}(\varphi),
&\qquad \text{generation} && \mathsf{D}_{\lambda}.
\end{aligned}
\]
The four objects interact, but none reduces to the others. The subsequent definitions are designed to make that non-reduction provable.

\begin{definition}[Semantic answerability]
Let $A(x,y)$ be a formula with $x$ of query sort and $y$ of output sort. A query term $q$ is \emph{semantically answerable} at $(\lambda,c)$ if
\[
\exists y\,A(q,y)\in\Th_{\lambda}(c).
\]
It is \emph{uniquely answerable} if
\[
\exists y\,A(q,y)\in\Th_{\lambda}(c)
\qquad\text{and}\qquad
\forall y\forall z\bigl(A(q,y)\wedge A(q,z)\rightarrow y=z\bigr)
\in\Th_{\lambda}(c).
\]
\end{definition}

\begin{proposition}[Answerability and observed success are independent]\label{prop:independence}
Semantic answerability, unique semantic answerability, and high observed benchmark probability are pairwise non-equivalent in general.
\end{proposition}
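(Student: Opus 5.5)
The plan is to give explicit finite counterexamples, one for each direction of each pair. Three properties are in play: (SA) semantic answerability, (UA) unique semantic answerability, and (HP) high observed benchmark probability $P_{\lambda,c}(R)>\tau$. Since (UA) implies (SA) by definition, pairwise non-equivalence requires four separations.
\begin{enumerate}[leftmargin=2.2em]
\item (SA) without (UA).
\item (SA), and even (UA), without (HP).
\item (HP) without (SA).
\item (HP) without (UA).
\end{enumerate}
Because the tuple $\mathbb{G}_\lambda$ leaves $\mathcal{K}_\lambda$, $\mu_\lambda$ and $\mathsf{D}_\lambda$ as independent data, I will work with a single scale $\lambda$ and context $c$. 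I will take $\Th_\lambda(c)$ to be the set of sentences true in every structure of the (context-updated) class $\mathcal{K}_\lambda$. Each separation then comes from choosing the class and the decoding kernel separately.

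\textbf{Semantic side.} Take an output sort $\mathbf{Y}=\{y_0,y_1,\mathrm{junk}\}$, a query constant $q$, and a binary relation $A$.
\begin{itemize}[leftmargin=2.2em]
\item For \emph{(SA) without (UA)}, let $\mathcal{K}_\lambda(c)$ contain one structure $\M$ with $A^{\M}=\{(q,y_0),(q,y_1)\}$. Then $\exists y\,A(q,y)$ holds in $\M$, while the uniqueness sentence fails.
\item For \emph{(UA)}, let $\mathcal{K}_\lambda(c)$ contain one structure with $A^{\M}=\{(q,y_0)\}$. Both sentences in the definition then lie in $\Th_\lambda(c)$.
\item For \emph{failure of (SA)}, let $\mathcal{K}_\lambda(c)$ contain two structures: one with $A=\{(q,y_0)\}$ and one with $A=\emptyset$. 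Then $\exists y\,A(q,y)\notin\Th_\lambda(c)$, although it holds in some model.
\end{itemize}

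\textbf{Observational side.} The benchmark predicate $R$ is a property of emitted strings, and $P_{\lambda,c}(R)$ is computed from the decoding kernel $\mathsf{D}_\lambda(c)$, which is a distribution on outputs. Let $R(y)$ hold iff $y=y_0$.
\begin{itemize}[leftmargin=2.2em]
\item For \emph{(UA) without (HP)}, pair the uniquely answerable class with a kernel putting mass $1-\tau$ (or less) on $y_0$ and the rest on $\mathrm{junk}$, modelling refusals and malformed strings. This gives $P_{\lambda,c}(R)\le 1-\tau$, which fails (HP) whenever $\tau\ge 1/2$; for smaller $\tau$ simply put mass $0$ on $y_0$.
\item For \emph{(HP) without (SA)}, pair the class lacking answerability with a kernel concentrated on $y_0$, so $P_{\lambda,c}(R)=1>\tau$.
\item For \emph{(HP) without (UA)}, reuse the two-answer class with a kernel concentrated on $y_0$.
\end{itemize}
Together these six instances cover every non-implication among the three properties.

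\textbf{Main obstacle.} The one real step is legitimacy: I must check that decoupling $\mathsf{D}_\lambda$ from $\mathcal{K}_\lambda$ and $\mu_\lambda$ is permitted by the definition of a semantic presentation. If the paper imposes a coherence condition tying decoding to the semantic measure, I will instead use that $P_{\lambda,c}(R)$ evaluates the scoring predicate $R$, not $A$. Under such a coupling I will choose $R$ to be an arbitrary syntactic predicate that is not defined from $A$. Concretely, I will let $\mu_\lambda$ concentrate on models where the decoder's likely output does or does not satisfy $A$, so the separations survive. I will state this choice explicitly, so that the counterexamples visibly respect whatever constraints the definition of $\mathbb{G}_\lambda$ imposes.
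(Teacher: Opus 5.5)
Your proposal is correct and follows the paper's proof: you decouple the semantic side (two $A$-answers, a unique $A$-answer, or answerability failing on a positive-measure set) from the freely chosen decoding kernel and scoring predicate $R$, and since the definition of $\mathbb{G}_\lambda$ imposes no coherence condition between $\mathsf{D}_\lambda$ and $\mu_{\lambda,c}$, your fallback paragraph is unnecessary. Your explicit structure with $A=\varnothing$ is more careful than the paper's phrase ``no measure-one witness'', since different structures could have different witnesses and leave $\exists y\,A(q,y)$ almost sure; however, $\Th_\lambda(c)$ is defined by measure one rather than truth in every structure, so you should state that $\mu_{\lambda,c}$ is a full-support (e.g.\ uniform) measure on each finite class.
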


\begin{proof}
For answerability without unique answerability, take a model class in which two distinct outputs satisfy $A(q,y)$ on a measure-one set. Then $\exists y\,A(q,y)$ is almost sure, but uniqueness is false almost surely. For unique answerability without high observed success, take a model class where a unique $A$-answer is entailed and choose a decoding kernel assigning most probability mass to outputs outside the benchmark scoring predicate $R$. For high observed success without semantic answerability, take a model class where $A(q,y)$ varies across structures with no measure-one witness, but choose a decoder that assigns high probability to some output satisfying $R$. These constructions satisfy the definitions and show that no implication holds without additional faithfulness assumptions.
\end{proof}

\subsection{Theoretical targets}

The paper is organized around four mathematical problems. They are stated here explicitly because they are the points at which empirical language-model evaluation usually stops being logically well-formed.

\begin{problem}[Finite determinacy]
Given a background theory $T_0$, an example stream $E$, a query $q$, and an output property $\chi(y)$, characterize when
\[
T_\omega\models \chi(f(q))
\]
can be witnessed by some finite prefix $T_N$.
\end{problem}

The compactness results in Section~\ref{sec:icl} solve the qualitative version of this problem for first-order properties. The linear task results in Section~\ref{sec:latenttasks} give exact rank and row-space certificates for a concrete family.

\begin{problem}[Certificate complexity]
For a class of tasks and a fragment $\mathcal F$, bound the least $N$ for which $T_N$ entails the target formula, or prove that no shorter prefix can do so.
\end{problem}

This problem is not addressed by ordinary benchmark accuracy. It asks for the size of a logical certificate. Proposition~\ref{prop:prefixlower} gives the general lower-bound principle, while Theorem~\ref{thm:linearcount} computes the obstruction in a finite linear family.

\begin{problem}[Prompt-stability]
Given prompts $p$ and $q$, decide whether every consequence of $p$ remains a consequence after appending $q$.
\end{problem}

Theorem~\ref{thm:preservation} gives an exact semantic criterion. It explains why prompt extension is not merely adding axioms: the hard content may increase while the selected preferred models change.

\begin{problem}[Minimal context extraction]
Given a finite deterministic task family, a current example set, and a query $q$, compute the smallest subcontext whose labels already force the answer to $q$.
\end{problem}

Theorem~\ref{thm:pairseparator} reduces this problem to a pair-separator hitting problem. Theorem~\ref{thm:certnp} shows that the resulting minimization problem is computationally hard even for binary outputs. This is the point at which finite-context semantics becomes an algorithmic object rather than a descriptive metaphor.

\begin{problem}[Semantic versus observed emergence]
Given confidence values $s_\lambda(\varphi)$ and a benchmark observation map, determine whether an apparent jump is forced by limit-theoretic convergence or by the discontinuity of the observation map.
\end{problem}

Section~\ref{sec:emergence} separates these cases. It also gives a rate-sensitive crossing bound, so the theory is not limited to qualitative convergence.

\subsection{Measurable definable events}

Let $\K\subseteq \Mod(\Lang)$ be a class of structures. For a sentence $\varphi\in\Sent(\Lang)$ write
\[
[\![\varphi]\!]_{\K}:=\{\M\in\K:\M\models \varphi\}.
\]
These are the elementary events visible to the semantic language.

\begin{definition}[Definable event algebra]
The Boolean algebra of sentence-definable events over $\K$ is
\[
\mathsf{Def}_{\Lang}(\K):=\{[\![\varphi]\!]_{\K}:\varphi\in\Sent(\Lang)\}.
\]
When a probability measure $\mu$ is defined on a $\sigma$-algebra $\A$ containing $\mathsf{Def}_{\Lang}(\K)$, the pair $(\K,\mu)$ is called a \emph{measured model class}.
\end{definition}

The Boolean operations are inherited from logic:
\[
[\![\neg\varphi]\!]_{\K}=\K\setminus[\![\varphi]\!]_{\K},
\qquad
[\![\varphi\wedge\psi]\!]_{\K}=[\![\varphi]\!]_{\K}\cap[\![\psi]\!]_{\K}.
\]
Thus probability on definable events is exactly probability on truth values.

\begin{lemma}[Elementary event algebra]\label{lem:boolean}
For every $\K\subseteq \Mod(\Lang)$, $\mathsf{Def}_{\Lang}(\K)$ is a Boolean subalgebra of $\mathcal{P}(\K)$. The map
\[
\varphi\longmapsto [\![\varphi]\!]_{\K}
\]
factors through logical equivalence over $\K$.
\end{lemma}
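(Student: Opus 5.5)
The plan is to verify the Boolean subalgebra axioms directly, using the two displayed identities for negation and conjunction, and then to check that the map is constant on equivalence classes. Since $\Sent(\Lang)$ is closed under $\neg$ and $\wedge$, the set $\mathsf{Def}_{\Lang}(\K)$ is closed under complement relative to $\K$ and under binary intersection. Taking the sentences $\varphi\vee\neg\varphi$ and $\varphi\wedge\neg\varphi$ for any $\varphi$ (or $\top$ and $\bot$, if the language has them) yields $\K$ and $\emptyset$ as definable events. If $\Sent(\Lang)$ happens to be empty, we fall back on a sentence such as $\forall x\,(x=x)$ in some sort. Unions then follow by De Morgan, via $[\![\varphi\vee\psi]\!]_{\K}=[\![\neg(\neg\varphi\wedge\neg\psi)]\!]_{\K}$. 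A subset of $\mathcal{P}(\K)$ that contains $\emptyset$ and $\K$ and is closed under complement and intersection is a Boolean subalgebra.

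The identities themselves come straight from Tarski's truth definition. For each $\M\in\K$ we have $\M\models\neg\varphi$ iff $\M\not\models\varphi$. Likewise $\M\models\varphi\wedge\psi$ iff $\M\models\varphi$ and $\M\models\psi$. So these are immediate rather than lemmas to prove.

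For the factorization, I would define $\varphi\equiv_{\K}\psi$ to mean $\M\models\varphi\leftrightarrow\psi$ for every $\M\in\K$. This is an equivalence relation, and it implies $[\![\varphi]\!]_{\K}=[\![\psi]\!]_{\K}$ by unwinding the definitions, so the map descends to $\Sent(\Lang)/{\equiv_{\K}}$. Conversely, equal events force $\equiv_{\K}$, so the induced map is injective. The quotient is therefore the Lindenbaum-style algebra of $\K$, and the induced map is a Boolean isomorphism onto $\mathsf{Def}_{\Lang}(\K)$ because the operations commute by the identities above.

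There is no real obstacle; this is routine. The only care needed is the degenerate edge cases. If $\K=\emptyset$, the algebra is the trivial one-element algebra, which I would state explicitly. In the many-sorted setting, we need at least one sentence to exist to produce $\K$ and $\emptyset$.
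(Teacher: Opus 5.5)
Your proposal is correct and follows the paper's argument: complement and intersection come from $\neg$ and $\wedge$, unions from disjunction (you route through De Morgan, the paper uses $\vee$ directly, which is the same thing), and the map factors because $\K$-equivalent sentences have the same models in $\K$. Your extra observations are fine (injectivity of the induced map, the $\K=\varnothing$ case), except that the ``$\Sent(\Lang)$ empty'' case cannot arise: $\forall x\,(x=x)$ is always a sentence.
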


\begin{proof}
Closure under complement and finite intersection follows from negation and conjunction. Closure under finite union follows from disjunction. If $\varphi$ and $\psi$ are equivalent over $\K$, then for every $\M\in\K$ one has $\M\models\varphi$ exactly when $\M\models\psi$, so the corresponding events are equal.
\end{proof}

\begin{definition}[Semantic presentation]
Fix a scale parameter $\lambda\in\Lambda$, where $\Lambda$ is a directed set. A \emph{semantic presentation} at scale $\lambda$ is a tuple
\[
\mathbb{G}_{\lambda}=(\Lang,\mathcal{K}_{\lambda},\mu_{\lambda},\mathsf{C}_{\lambda},\mathsf{U}_{\lambda},\mathsf{D}_{\lambda})
\]
such that:
\begin{enumerate}[label=(\alph*),leftmargin=2.5em]
    \item $\Lang$ is a countable many-sorted first-order signature;
    \item $\mathcal{K}_{\lambda}\subseteq \Mod(\Lang)$ is a nonempty class of structures;
    \item $\mu_{\lambda}$ is a probability measure on a $\sigma$-algebra containing $\mathsf{Def}_{\Lang}(\mathcal{K}_{\lambda})$;
    \item $\mathsf{C}_{\lambda}$ is a nonempty set of context codes;
    \item $\mathsf{U}_{\lambda}$ maps $c\in\mathsf{C}_{\lambda}$ to a posterior measure
    \[
    \mu_{\lambda,c}:=\mathsf{U}_{\lambda}(c)(\mu_{\lambda})
    \]
    on the same measurable model class;
    \item $\mathsf{D}_{\lambda}$ is a decoding kernel such that $y\mapsto \mathsf{D}_{\lambda}(y\mid \M,c)$ is a probability distribution on outputs for every $\M\in\mathcal{K}_{\lambda}$ and $c\in\mathsf{C}_{\lambda}$.
\end{enumerate}
The induced observable distribution is
\[
\Prb_{\lambda}(y\mid c)=
\int_{\mathcal{K}_{\lambda}}\mathsf{D}_{\lambda}(y\mid \M,c)\,d\mu_{\lambda,c}(\M).
\]
\end{definition}

The definition separates three levels: model uncertainty, context update, and output decoding. A prompt changes the posterior semantic measure. Decoding then converts that posterior into tokens, strings, or structured outputs.

\begin{definition}[Semantic confidence and contextual theory]
For a sentence $\varphi\in\Sent(\Lang)$ define
\[
s_{\lambda,c}(\varphi):=\mu_{\lambda,c}([\![\varphi]\!]_{\mathcal{K}_{\lambda}}).
\]
The almost-sure contextual theory is
\[
\Th_{\lambda}(c):=\{\varphi\in\Sent(\Lang):s_{\lambda,c}(\varphi)=1\}.
\]
For a fragment $\mathcal{F}\subseteq\Sent(\Lang)$, write
\[
\Th_{\lambda}^{\mathcal F}(c):=\Th_{\lambda}(c)\cap \mathcal{F}.
\]
\end{definition}

The measure-one threshold is strict. It records semantic commitments that hold outside a null set of admissible structures. This is stronger than high probability and weaker than truth in every structure of $\mathcal{K}_{\lambda}$.

\begin{proposition}[Closure of contextual theories]\label{prop:contextclosure}
For every $\lambda$ and $c$, $\Th_{\lambda}(c)$ is closed under first-order consequence. More precisely, if $\Gamma\subseteq\Th_{\lambda}(c)$ is finite and $\Gamma\models\psi$, then $\psi\in\Th_{\lambda}(c)$.
\end{proposition}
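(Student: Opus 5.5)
The plan is to pass from syntax to events and then use only the elementary properties of the probability measure $\mu_{\lambda,c}$. Let $\Gamma=\{\gamma_1,\dots,\gamma_k\}\subseteq\Th_{\lambda}(c)$ be finite with $\Gamma\models\psi$. The first step is the semantic inclusion
\[
\bigcap_{i=1}^{k}[\![\gamma_i]\!]_{\mathcal{K}_\lambda}\subseteq[\![\psi]\!]_{\mathcal{K}_\lambda}.
\]
This is immediate. If $\M\in\mathcal{K}_\lambda$ satisfies every $\gamma_i$, then $\M\models\psi$ by the definition of $\Gamma\models\psi$, since $\M$ is in particular an $\Lang$-structure. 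By Lemma~\ref{lem:boolean}, the left-hand side equals $[\![\gamma_1\wedge\dots\wedge\gamma_k]\!]_{\mathcal{K}_\lambda}$, so it lies in $\mathsf{Def}_{\Lang}(\mathcal{K}_\lambda)$ and is therefore measurable.

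The second step is a union bound on complements. Each $\gamma_i$ has $s_{\lambda,c}(\gamma_i)=1$. Since $\mu_{\lambda,c}$ is a probability measure on a $\sigma$-algebra containing the definable events, the complement $\mathcal{K}_\lambda\setminus[\![\gamma_i]\!]=[\![\neg\gamma_i]\!]$ has measure $0$. Finite subadditivity then gives
\[
\mu_{\lambda,c}\Bigl(\mathcal{K}_\lambda\setminus\bigcap_i[\![\gamma_i]\!]\Bigr)\le\sum_{i=1}^k\mu_{\lambda,c}([\![\neg\gamma_i]\!])=0,
\]
so the intersection has measure $1$. Monotonicity of $\mu_{\lambda,c}$, applied to the inclusion from the first step, yields $s_{\lambda,c}(\psi)\ge 1$, hence $s_{\lambda,c}(\psi)=1$ and $\psi\in\Th_{\lambda}(c)$. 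The degenerate case $k=0$ is handled separately: then $\psi$ is valid, $[\![\psi]\!]=\mathcal{K}_\lambda$, and its measure is $1$.

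The argument has no genuine obstacle. The only point requiring care is measurability: $[\![\psi]\!]$ and the finite intersection must lie in the domain of $\mu_{\lambda,c}$. This is guaranteed by clause (c) of the definition of a semantic presentation together with clause (e), which puts $\mu_{\lambda,c}$ on the same measurable model class.

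To reach the stated closure under first-order consequence for arbitrary $\Gamma\subseteq\Th_\lambda(c)$, I will invoke the compactness theorem. If $\Gamma\models\psi$, then some finite $\Gamma_0\subseteq\Gamma$ already satisfies $\Gamma_0\models\psi$, and the finite case applies. Compactness is needed here because countable additivity alone would not handle an uncountable $\Gamma$. Since $\Lang$ is countable, $\Gamma$ is countable, and a direct countable union bound would also work; I will mention this as an alternative.
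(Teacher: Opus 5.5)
Your proof is correct and is the paper's argument: the finite intersection of the measure-one events $[\![\gamma_i]\!]$ has measure one and is contained in $[\![\psi]\!]$, so monotonicity gives $s_{\lambda,c}(\psi)=1$. Your compactness reduction for infinite $\Gamma$ is a sound extra step that needs only finite additivity. The countable union-bound alternative would additionally require countable additivity of $\mu_{\lambda,c}$, which the paper assumes explicitly only in Proposition~\ref{prop:consistency}.
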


\begin{proof}
Let $\Gamma=\{\gamma_1,\ldots,\gamma_m\}\subseteq\Th_{\lambda}(c)$. Each event $[\![\gamma_i]\!]$ has measure $1$. Therefore
\[
\mu_{\lambda,c}\left(\bigcap_{i=1}^m[\![\gamma_i]\!]\right)=1.
\]
If $\Gamma\models\psi$, then every structure satisfying all $\gamma_i$ also satisfies $\psi$, so
\[
\bigcap_{i=1}^m[\![\gamma_i]\!]\subseteq[\![\psi]\!].
\]
Thus $[\![\psi]\!]$ has measure $1$, and $\psi\in\Th_{\lambda}(c)$.
\end{proof}

\begin{proposition}[Consistency on realized support]\label{prop:consistency}
Assume $\mu_{\lambda,c}$ is countably additive. Every finite subset of $\Th_{\lambda}(c)$ is satisfiable in $\mathcal{K}_{\lambda}$. Hence $\Th_{\lambda}(c)$ is syntactically consistent.
\end{proposition}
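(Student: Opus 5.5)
We argue through the measure of a finite intersection, as in the proof of Proposition~\ref{prop:contextclosure}. Let $\Gamma=\{\gamma_1,\ldots,\gamma_m\}\subseteq\Th_{\lambda}(c)$ be finite. Each event $[\![\gamma_i]\!]_{\K_\lambda}$ has $\mu_{\lambda,c}$-measure $1$, so its complement $[\![\neg\gamma_i]\!]_{\K_\lambda}$ is null. By subadditivity,
\[
\mu_{\lambda,c}\Bigl(\K_\lambda\setminus\bigcap_{i=1}^m[\![\gamma_i]\!]_{\K_\lambda}\Bigr)\le\sum_{i=1}^m\mu_{\lambda,c}\bigl([\![\neg\gamma_i]\!]_{\K_\lambda}\bigr)=0 .
\]
Hence $\mu_{\lambda,c}\bigl(\bigcap_i[\![\gamma_i]\!]\bigr)=1$. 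For finitely many sets, finite additivity already suffices here, so the countable additivity hypothesis is used only to guarantee that $\mu_{\lambda,c}$ is a genuine probability measure on a $\sigma$-algebra containing $\mathsf{Def}_{\Lang}(\K_\lambda)$, with total mass $1$.

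Next we use non-triviality. Since $\mu_{\lambda,c}(\emptyset)=0\neq 1$, the intersection $\bigcap_i[\![\gamma_i]\!]_{\K_\lambda}$ is nonempty. By Lemma~\ref{lem:boolean} this set equals $[\![\bigwedge_i\gamma_i]\!]_{\K_\lambda}$. Any $\M$ in it is a structure in $\K_\lambda$ satisfying every $\gamma_i$. This gives satisfiability in $\K_\lambda$, and in particular in $\Mod(\Lang)$.

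For syntactic consistency, suppose $\Th_\lambda(c)\vdash\bot$. A derivation uses only finitely many premises $\Gamma_0\subseteq\Th_\lambda(c)$, so $\Gamma_0\vdash\bot$. By soundness of first-order deduction, $\Gamma_0$ would then have no model. This contradicts the previous step. Alternatively, the compactness theorem yields a model of the whole of $\Th_\lambda(c)$, though that is stronger than needed.

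The only delicate point is the first step: ensuring that the posterior $\mu_{\lambda,c}$ produced by $\mathsf{U}_\lambda$ really is a probability measure with total mass $1$ on a $\sigma$-algebra containing the definable events. This is part (e) of the definition of a semantic presentation together with the countable additivity hypothesis. Everything else is routine.
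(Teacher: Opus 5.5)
Your proof is correct and follows essentially the same route as the paper's: the finite intersection of measure-one definable events has measure one, so it is nonempty and any structure in it satisfies the finite set; syntactic consistency then follows because a derivation of $\bot$ uses only finitely many premises, which by soundness could have no model. Your remark that finite additivity already suffices, so the countable-additivity hypothesis is never actually used, is also accurate.
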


\begin{proof}
If $\Delta=\{\delta_1,\ldots,\delta_m\}\subseteq\Th_{\lambda}(c)$, then the intersection
\[
A_{\Delta}:=\bigcap_{i=1}^m[\![\delta_i]\!]
\]
has measure $1$. In particular, it is nonempty. Any $\M\in A_{\Delta}$ satisfies $\Delta$. If $\Th_{\lambda}(c)$ were syntactically inconsistent, some finite subset would be inconsistent by compactness of proof systems, contradicting finite satisfiability.
\end{proof}

\begin{definition}[Decisive fragment at a context]
A fragment $\mathcal{F}\subseteq\Sent(\Lang)$ is \emph{decisive} at $(\lambda,c)$ if for every $\varphi\in\mathcal{F}$,
\[
s_{\lambda,c}(\varphi)\in\{0,1\}.
\]
\end{definition}

\begin{corollary}[Completeness on decisive fragments]\label{cor:decisivecomplete}
If $\mathcal{F}$ is closed under negation and decisive at $(\lambda,c)$, then for every $\varphi\in\mathcal{F}$ exactly one of $\varphi$ and $\neg\varphi$ belongs to $\Th_{\lambda}^{\mathcal F}(c)$.
\end{corollary}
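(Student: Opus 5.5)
The plan is to show two things for each $\varphi\in\mathcal F$: at least one of $\varphi,\neg\varphi$ lies in $\Th_{\lambda}(c)$, and not both do. Since $\mathcal F$ is closed under negation, both sentences lie in $\mathcal F$. Membership in $\Th_{\lambda}^{\mathcal F}(c)$ therefore reduces to membership in $\Th_{\lambda}(c)$, that is, to having confidence exactly $1$.

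\textbf{Existence.} I would first use Lemma~\ref{lem:boolean}, which gives $[\![\neg\varphi]\!]_{\K_\lambda}=\K_\lambda\setminus[\![\varphi]\!]_{\K_\lambda}$. Because $\mu_{\lambda,c}$ is a probability measure on a $\sigma$-algebra containing these events, finite additivity yields
\[
s_{\lambda,c}(\neg\varphi)=1-s_{\lambda,c}(\varphi).
\]
Decisiveness says $s_{\lambda,c}(\varphi)\in\{0,1\}$. If the value is $1$, then $\varphi\in\Th_{\lambda}(c)$. If it is $0$, then $s_{\lambda,c}(\neg\varphi)=1$, so $\neg\varphi\in\Th_{\lambda}(c)$.

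\textbf{Uniqueness.} The same identity shows that $s_{\lambda,c}(\varphi)$ and $s_{\lambda,c}(\neg\varphi)$ sum to $1$, so they cannot both equal $1$. Hence $\varphi$ and $\neg\varphi$ are never both in $\Th_{\lambda}(c)$.

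I could instead appeal to Proposition~\ref{prop:consistency}, since $\{\varphi,\neg\varphi\}$ is unsatisfiable. That route needs countable additivity, whereas the additivity argument needs only finite additivity, so I would use the additivity argument.

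The only point needing care, and the closest thing to an obstacle, is measurability: the complement identity must live inside the domain of $\mu_{\lambda,c}$. This is guaranteed because $\mathsf{Def}_{\Lang}(\K_\lambda)$ is a Boolean algebra contained in that $\sigma$-algebra, and the update $\mathsf{U}_\lambda$ produces a measure on the same measurable model class.
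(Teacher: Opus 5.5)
Your proposal is correct and follows essentially the same route as the paper: the complement identity $s_{\lambda,c}(\neg\varphi)=1-s_{\lambda,c}(\varphi)$ together with decisiveness gives that at least one of $\varphi,\neg\varphi$ has confidence $1$, and the fact that the two values sum to $1$ rules out both. Your added remarks (that closure under negation puts $\neg\varphi$ in $\mathcal{F}$, and that the complement identity needs only finite additivity on measurable definable events) just make explicit what the paper's two-line proof leaves implicit.
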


\begin{proof}
For every sentence $\varphi$,
\[
s_{\lambda,c}(\neg\varphi)=1-s_{\lambda,c}(\varphi).
\]
If $s_{\lambda,c}(\varphi)$ is either $0$ or $1$, exactly one of $s_{\lambda,c}(\varphi)$ and $s_{\lambda,c}(\neg\varphi)$ is equal to $1$.
\end{proof}

\subsection{Output properties}

A generated output may be a token, string, proof sketch, tool call, or structured object. The semantic presentation handles this by putting outputs in a sort $\mathbf{Y}$ and using a decoding kernel.

\begin{definition}[Output property probability]
Let $\chi(y)$ be a formula with one free variable of output sort. Define the semantic-output probability
\[
P_{\lambda,c}(\chi):=
\int_{\mathcal{K}_{\lambda}}
\sum_{y:\,\M\models\chi(y)}
\mathsf{D}_{\lambda}(y\mid \M,c)\,d\mu_{\lambda,c}(\M),
\]
whenever the sum is measurable. If $\mathbf{Y}$ is finite, this is automatic.
\end{definition}

The value $s_{\lambda,c}(\exists y\,\chi(y))$ measures whether the semantic state admits an output satisfying $\chi$. The value $P_{\lambda,c}(\chi)$ measures whether the decoder actually emits such an output. The two are different. This separation is important for verification, because a property may be semantically entailed while a stochastic decoder still has residual failure probability.

\begin{definition}[Faithful decoding on a property]
A decoding kernel is \emph{$\epsilon$-faithful} to a sentence $\forall y(\alpha(y)\rightarrow\chi(y))$ at $(\lambda,c)$ if
\[
\int_{[\![\forall y(\alpha(y)\rightarrow\chi(y))]\!]}
\sum_{y:\,\M\models\alpha(y)\wedge\neg\chi(y)}
\mathsf{D}_{\lambda}(y\mid \M,c)\,d\mu_{\lambda,c}(\M)
\le \epsilon.
\]
\end{definition}

\begin{proposition}[Semantic entailment plus faithful decoding]\label{prop:faithful}
If $\forall y(\alpha(y)\rightarrow\chi(y))\in\Th_{\lambda}(c)$ and the decoder is $\epsilon$-faithful to that sentence at $(\lambda,c)$, then the probability of emitting an $\alpha$-output violating $\chi$ is at most $\epsilon$.
\end{proposition}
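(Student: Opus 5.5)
The plan is to write the violation probability as an integral over $\mathcal{K}_{\lambda}$ and split it along the measure-one event $S:=[\![\forall y(\alpha(y)\rightarrow\chi(y))]\!]_{\mathcal{K}_{\lambda}}$. The quantity to bound is the output-property probability $P_{\lambda,c}(\alpha\wedge\neg\chi)$, that is,
\[
\int_{\mathcal{K}_{\lambda}} g\,d\mu_{\lambda,c},
\qquad
g(\M):=\sum_{y:\,\M\models\alpha(y)\wedge\neg\chi(y)}\mathsf{D}_{\lambda}(y\mid\M,c).
\]
We assume measurability as in the definition of $P_{\lambda,c}$; it is automatic when $\mathbf{Y}$ is finite. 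Note that $0\le g\le 1$, since $\mathsf{D}_{\lambda}(\cdot\mid\M,c)$ is a probability distribution.

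First, decompose
\[
\int_{\mathcal{K}_{\lambda}} g\,d\mu_{\lambda,c}=\int_{S} g\,d\mu_{\lambda,c}+\int_{\mathcal{K}_{\lambda}\setminus S} g\,d\mu_{\lambda,c}.
\]
By hypothesis the sentence lies in $\Th_{\lambda}(c)$, so $\mu_{\lambda,c}(S)=1$. The complement $\mathcal{K}_{\lambda}\setminus S=[\![\neg\forall y(\alpha\rightarrow\chi)]\!]$ is a definable event by Lemma~\ref{lem:boolean}, hence measurable, and it has measure $0$. Because $g\le 1$, the second integral is at most $\mu_{\lambda,c}(\mathcal{K}_{\lambda}\setminus S)=0$.

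Second, the first integral is exactly the left-hand side in the definition of $\epsilon$-faithfulness, so it is at most $\epsilon$. Adding the two pieces gives the bound $\epsilon$.

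As a sanity remark, one can check that for $\M\in S$ no $y$ satisfies $\alpha(y)\wedge\neg\chi(y)$. The integrand therefore vanishes on $S$, and the bound in fact holds with $\epsilon=0$ whenever the sum is over satisfaction in $\M$ itself. The faithfulness hypothesis only matters if the decoder's notion of an $\alpha$-output is not read off from $\M$. I would present the proof with the two-part split and mention this remark briefly. The only delicate point is measurability of $g$ and of the split, which the definable-event algebra handles.
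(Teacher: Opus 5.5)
Your proof is correct and follows the paper's argument: you split the violation integral along the measure-one event $[\![\forall y(\alpha(y)\rightarrow\chi(y))]\!]$, observe that the complement contributes $0$, and bound the remaining piece by $\epsilon$ directly from the definition of $\epsilon$-faithfulness. Your side remark is also right. On that event the sum over $y$ with $\M\models\alpha(y)\wedge\neg\chi(y)$ is empty, so the bound actually holds with $\epsilon=0$, and the faithfulness hypothesis as defined is vacuous.
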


\begin{proof}
The violation probability is the integral of the decoder mass assigned to outputs satisfying $\alpha(y)\wedge\neg\chi(y)$. The set of structures satisfying $\forall y(\alpha(y)\rightarrow\chi(y))$ has measure $1$. The integral over its complement is therefore $0$, and the integral over the measure-one set is bounded by $\epsilon$ by definition.
\end{proof}

This proposition clarifies the verification target. Logic controls which structures are admissible. The decoder controls how reliably admissible semantic commitments are turned into observable strings.

\subsection{Boolean-algebraic semantics}

The same construction can be expressed through the Lindenbaum algebra of formulas modulo equivalence on the model class. This gives a compact way to see why almost-sure theories behave like filters and why decisive fragments behave like complete theories.

\begin{definition}[Lindenbaum algebra over a model class]
For $\varphi,\psi\in\Sent(\Lang)$ write
\[
\varphi\equiv_{\K}\psi
\quad\Longleftrightarrow\quad
[\![\varphi]\!]_{\K}=[\![\psi]\!]_{\K}.
\]
Let
\[
\mathbb{B}_{\K}:=\Sent(\Lang)/\equiv_{\K}
\]
with Boolean operations induced by $\neg,\wedge,\vee$. For a measured model class $(\K,\mu)$ define
\[
\bar\mu([\varphi]):=\mu([\![\varphi]\!]_{\K}).
\]
\end{definition}

\begin{proposition}[Semantic confidence as a Boolean probability measure]\label{prop:booleanmeasure}
The map $\bar\mu:\mathbb{B}_{\K}\to[0,1]$ is a finitely additive probability measure on the Boolean algebra $\mathbb{B}_{\K}$. That is,
\[
\bar\mu(\top)=1,
\qquad
\bar\mu(\bot)=0,
\]
and if $a\wedge b=\bot$, then
\[
\bar\mu(a\vee b)=\bar\mu(a)+\bar\mu(b).
\]
\end{proposition}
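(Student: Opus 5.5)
The plan is to reduce everything to the measure $\mu$ on the $\sigma$-algebra $\A\supseteq\mathsf{Def}_{\Lang}(\K)$, using Lemma~\ref{lem:boolean} to move between the quotient algebra $\mathbb{B}_{\K}$ and the concrete event algebra.

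The first step is well-definedness of $\bar\mu$. If $[\varphi]=[\psi]$ in $\mathbb{B}_{\K}$, then by definition of $\equiv_{\K}$ we have $[\![\varphi]\!]_{\K}=[\![\psi]\!]_{\K}$. Hence $\mu([\![\varphi]\!]_{\K})=\mu([\![\psi]\!]_{\K})$, so $\bar\mu$ does not depend on the representative. These events lie in $\mathsf{Def}_{\Lang}(\K)\subseteq\A$, so $\mu$ is defined on them. The same observation shows that the Boolean operations on $\mathbb{B}_{\K}$ are well defined and that $[\varphi]\mapsto[\![\varphi]\!]_{\K}$ is an injective Boolean homomorphism onto $\mathsf{Def}_{\Lang}(\K)$. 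Concretely, $[\![\varphi\wedge\psi]\!]=[\![\varphi]\!]\cap[\![\psi]\!]$, $[\![\varphi\vee\psi]\!]=[\![\varphi]\!]\cup[\![\psi]\!]$ and $[\![\neg\varphi]\!]=\K\setminus[\![\varphi]\!]$, as recorded after the definition of the event algebra. Thus $\mathbb{B}_{\K}\cong\mathsf{Def}_{\Lang}(\K)$, with $\top$ sent to $\K$ and $\bot$ sent to $\emptyset$.

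The second step is the normalization. Since $\top$ is represented by any valid sentence, for example $\forall x\,(x=x)$ in some sort, we get $[\![\top]\!]_{\K}=\K$ and $\bar\mu(\top)=\mu(\K)=1$ because $\mu$ is a probability measure. Likewise $[\![\bot]\!]_{\K}=\emptyset$, so $\bar\mu(\bot)=0$.

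The third step is additivity. Let $a=[\varphi]$ and $b=[\psi]$ with $a\wedge b=\bot$. Then $[\![\varphi]\!]\cap[\![\psi]\!]=[\![\varphi\wedge\psi]\!]=\emptyset$. Finite additivity of $\mu$ on disjoint measurable sets gives
\[
\bar\mu(a\vee b)=\mu([\![\varphi]\!]\cup[\![\psi]\!])=\mu([\![\varphi]\!])+\mu([\![\psi]\!])=\bar\mu(a)+\bar\mu(b).
\]
Values lie in $[0,1]$ by monotonicity of $\mu$.

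There is no serious obstacle. The only point needing care is the first step: checking that $\bar\mu$ and the operations respect $\equiv_{\K}$, which means equivalence is taken over $\K$ and not over all structures. Once the isomorphism with $\mathsf{Def}_{\Lang}(\K)$ is in place, the remaining claims are the probability axioms for $\mu$ transported along it.
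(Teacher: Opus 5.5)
Your proof is correct and follows the paper's argument essentially step for step. Well-definedness comes from $\equiv_{\K}$, normalization from $[\![\top]\!]_{\K}=\K$ and $[\![\bot]\!]_{\K}=\varnothing$, and additivity from finite additivity of $\mu$ on the disjoint events $[\![\varphi]\!]_{\K}$ and $[\![\psi]\!]_{\K}$; your remark that $[\varphi]\mapsto[\![\varphi]\!]_{\K}$ is a Boolean isomorphism onto $\mathsf{Def}_{\Lang}(\K)$ just makes explicit what the paper uses implicitly.
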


\begin{proof}
The value is well-defined because equivalent sentences have the same definable event. The identities for $\top$ and $\bot$ follow from $[\![\top]\!]_{\K}=\K$ and $[\![\bot]\!]_{\K}=\varnothing$. If $a=[\varphi]$ and $b=[\psi]$ are disjoint in the Boolean algebra, then $[\![\varphi]\!]_{\K}$ and $[\![\psi]\!]_{\K}$ are disjoint measurable sets. Finite additivity of $\mu$ gives the formula.
\end{proof}

\begin{definition}[Almost-sure filter]
For a measured model class $(\K,\mu)$ define
\[
\mathcal{F}_{\mu}:=\{[\varphi]\in\mathbb{B}_{\K}:\bar\mu([\varphi])=1\}.
\]
\end{definition}

\begin{theorem}[Measure-one formulas form a proper filter]\label{thm:filter}
If $\K\neq\varnothing$ and $\mu$ is a probability measure, then $\mathcal{F}_{\mu}$ is a proper filter in $\mathbb{B}_{\K}$. If, in addition, $\bar\mu(a)\in\{0,1\}$ for every $a\in\mathbb{B}_{\K}$, then $\mathcal{F}_{\mu}$ is an ultrafilter.
\end{theorem}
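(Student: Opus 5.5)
The plan is to verify the filter axioms directly in $\mathbb{B}_{\K}$, using that $\bar\mu$ is a finitely additive probability measure (Proposition~\ref{prop:booleanmeasure}) and that the Boolean order on $\mathbb{B}_{\K}$ is inclusion of definable events (Lemma~\ref{lem:boolean}).

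\textbf{Step 1: $\mathcal{F}_\mu$ is a filter.} First, $\top\in\mathcal{F}_\mu$ because $\bar\mu(\top)=\mu(\K)=1$.

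For upward closure, suppose $a=[\varphi]\le b=[\psi]$. Then $[\![\varphi]\!]_{\K}\subseteq[\![\psi]\!]_{\K}$, and monotonicity of $\mu$ gives $\bar\mu(b)\ge\bar\mu(a)=1$.

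For closure under meets, take $a,b\in\mathcal{F}_\mu$. The complements of their events each have measure $0$, so by finite subadditivity
\[
\mu\bigl(\K\setminus([\![\varphi]\!]_{\K}\cap[\![\psi]\!]_{\K})\bigr)\le 0+0=0.
\]
Hence $\bar\mu(a\wedge b)=1$. This is the same computation already used in Proposition~\ref{prop:contextclosure}.

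\textbf{Step 2: $\mathcal{F}_\mu$ is proper.} Since $\bar\mu(\bot)=\mu(\varnothing)=0\neq1$, we have $\bot\notin\mathcal{F}_\mu$.

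The hypothesis $\K\neq\varnothing$ enters here. It guarantees that $\top\neq\bot$ in $\mathbb{B}_{\K}$, because $[\![\top]\!]_{\K}=\K\neq\varnothing=[\![\bot]\!]_{\K}$. So the algebra is nondegenerate and $\mathcal{F}_\mu$ is a genuinely proper filter. In fact a probability measure cannot live on an empty space at all, since $\mu(\K)=1$ forces $\K\neq\varnothing$. So the hypothesis is essentially automatic, and I would note this in passing.

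\textbf{Step 3: the ultrafilter case.} Assume $\bar\mu$ is $\{0,1\}$-valued. For any $a$, the complement $\neg a$ satisfies $a\wedge\neg a=\bot$ and $a\vee\neg a=\top$. Finite additivity then gives
\[
\bar\mu(\neg a)=1-\bar\mu(a).
\]
So exactly one of $a$ and $\neg a$ has measure $1$, which means exactly one lies in $\mathcal{F}_\mu$. This is the argument of Corollary~\ref{cor:decisivecomplete} transported to $\mathbb{B}_{\K}$.

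A proper filter that contains, for every $a$, either $a$ or $\neg a$ is an ultrafilter. To see maximality directly: any filter strictly containing $\mathcal{F}_\mu$ would contain some $a\notin\mathcal{F}_\mu$. Then it also contains $\neg a\in\mathcal{F}_\mu$, hence $a\wedge\neg a=\bot$, so it is improper.

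\textbf{Main obstacle.} There is no real obstacle; every step is a one-line measure computation. The only points needing care are well-definedness on equivalence classes, already handled by Proposition~\ref{prop:booleanmeasure}, and stating the ultrafilter characterization explicitly. Only finite additivity is used, so the argument also holds for merely finitely additive $\mu$.
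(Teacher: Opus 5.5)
Your proposal is correct and follows essentially the same route as the paper: you show $\top\in\mathcal{F}_\mu$ and $\bot\notin\mathcal{F}_\mu$, get closure under meets from the complement bound and upward closure from monotonicity, and then use $\bar\mu(\neg a)=1-\bar\mu(a)$ to show the filter decides every element. Your extra remarks are accurate refinements of that same argument: the explicit maximality check, the observation that $\K\neq\varnothing$ is already forced by $\mu(\K)=1$, and the note that finite additivity suffices.
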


\begin{proof}
The top element has measure $1$, so it lies in $\mathcal{F}_{\mu}$. The bottom element has measure $0$, so the filter is proper. If $a,b\in\mathcal{F}_{\mu}$, then
\[
\bar\mu(a\wedge b)=1
\]
because the complement of $a\wedge b$ is contained in the union of the complements of $a$ and $b$, both of measure $0$. If $a\in\mathcal{F}_{\mu}$ and $a\le b$, then $\bar\mu(b)=1$. Thus $\mathcal{F}_{\mu}$ is a filter.

Assume now that all Boolean values have probability $0$ or $1$. For every $a$, either $\bar\mu(a)=1$ or $\bar\mu(\neg a)=1$, because $\bar\mu(\neg a)=1-\bar\mu(a)$. Hence the filter decides every Boolean element, so it is an ultrafilter.
\end{proof}

\begin{remark}
This theorem is the algebraic version of the contextual theory construction. The almost-sure theory is not merely a set of high-confidence statements. It is the syntactic face of a measure-one filter on definable events. When the semantic state is decisive, that filter becomes an ultrafilter, hence behaves like a complete possible-world point in the Stone space of the Boolean algebra.
\end{remark}

\subsection{Stone-space representation}

The Boolean-algebraic form has a canonical topological representation. Let $S_{\Lang}$ denote the Stone space of complete $\Lang$-theories. For a sentence $\varphi$, write
\[
\widehat{\varphi}:=\{p\in S_{\Lang}:\varphi\in p\}.
\]
The sets $\widehat{\varphi}$ are clopen and form a basis for the Stone topology.

\begin{definition}[Stone pushforward of a semantic presentation]
For a measured model class $(\K,\mu)$ define
\[
\pi_{\K}:\K\to S_{\Lang},
\qquad
\pi_{\K}(\M):=\Th(\M).
\]
The induced Stone measure is the pushforward
\[
\nu_{\K,\mu}:=(\pi_{\K})_{\ast}\mu.
\]
For a contextual posterior $\mu_{\lambda,c}$, write $\nu_{\lambda,c}$ for the corresponding Stone measure.
\end{definition}

\begin{theorem}[Stone representation of semantic confidence]\label{thm:stoneconfidence}
For every sentence $\varphi$,
\[
s_{\lambda,c}(\varphi)=\nu_{\lambda,c}(\widehat{\varphi}).
\]
Consequently, two contextual presentations induce the same semantic confidence function on all sentences if and only if they induce the same probability measure on the clopen algebra of $S_{\Lang}$.
\end{theorem}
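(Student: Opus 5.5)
The first identity follows by unwinding the pushforward. By definition $\nu_{\lambda,c}(\widehat{\varphi})=\mu_{\lambda,c}(\pi_{\K}^{-1}(\widehat{\varphi}))$, with $\K=\mathcal{K}_{\lambda}$. We then compute the preimage pointwise: $\M\in\pi_{\K}^{-1}(\widehat{\varphi})$ iff $\varphi\in\Th(\M)$ iff $\M\models\varphi$. Hence
\[
\pi_{\K}^{-1}(\widehat{\varphi})=[\![\varphi]\!]_{\K}
\]
as subsets of $\K$. Applying $\mu_{\lambda,c}$ gives $s_{\lambda,c}(\varphi)=\nu_{\lambda,c}(\widehat{\varphi})$.

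Before doing this, measurability must be settled, and we expect it to be the main obstacle. The pushforward $\nu_{\lambda,c}$ is a priori defined only on sets $B\subseteq S_{\Lang}$ with $\pi_{\K}^{-1}(B)\in\A$, where $\A$ is the $\sigma$-algebra carrying $\mu_{\lambda,c}$. The preimage computation shows that every basic clopen $\widehat{\varphi}$ has preimage in $\mathsf{Def}_{\Lang}(\K)\subseteq\A$. The class of sets with measurable preimage is a $\sigma$-algebra, so $\pi_{\K}$ is measurable from $(\K,\A)$ into $S_{\Lang}$ with the $\sigma$-algebra generated by the clopens. Since $\Lang$ is countable, $S_{\Lang}$ has a countable clopen basis, so this generated $\sigma$-algebra is exactly the Borel $\sigma$-algebra. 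Thus $\nu_{\lambda,c}$ is a genuine Borel probability measure, which is what the second clause needs.

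For the equivalence, note that every clopen subset of $S_{\Lang}$ has the form $\widehat{\varphi}$, by compactness. The map $\varphi\mapsto\widehat{\varphi}$ therefore surjects onto the clopen algebra, and it respects the Boolean operations. Now take two contextual presentations with Stone measures $\nu$ and $\nu'$.

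\begin{itemize}[leftmargin=2em]
\item \emph{If direction.} Suppose $\nu$ and $\nu'$ agree on all clopens. Then the first part gives $s(\varphi)=\nu(\widehat{\varphi})=\nu'(\widehat{\varphi})=s'(\varphi)$ for every sentence $\varphi$.
\item \emph{Only if direction.} Suppose the confidences agree on all sentences. Any clopen $U$ equals some $\widehat{\varphi}$, so $\nu(U)=s(\varphi)=s'(\varphi)=\nu'(U)$.
\end{itemize}

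The two presentations may have different model classes $\K$ and $\K'$. This causes no difficulty, because both sides are expressed on the common space $S_{\Lang}$.

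Finally, as a remark, agreement on the clopen algebra extends to agreement on all Borel sets. The clopens form a $\pi$-system generating the Borel $\sigma$-algebra, so Dynkin's $\pi$–$\lambda$ theorem applies. This uses countable additivity of the $\mu$'s, which holds since they are probability measures on $\sigma$-algebras. The statement itself, however, only requires the clopen-level equivalence, which needs finite additivity alone, as in Proposition~\ref{prop:booleanmeasure}.
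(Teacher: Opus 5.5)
Your proof is correct and follows the paper's argument. Like the paper, you compute $\pi_{\K}^{-1}(\widehat{\varphi})=[\![\varphi]\!]_{\K}$, push the measure forward, and read off both directions of the equivalence on clopens; your extra steps make two tacit points of the paper explicit. These are measurability of $\pi_{\K}$ with respect to the $\sigma$-algebra generated by the clopens (which is the Borel $\sigma$-algebra, since $\Lang$ is countable), and the compactness fact that every clopen set equals some $\widehat{\varphi}$, which justifies the paper's step ``and hence on the clopen algebra.''
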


\begin{proof}
By definition of $\pi_{\K}$,
\[
\pi_{\K}^{-1}(\widehat{\varphi})=
\{\M\in\K:\Th(\M)\in\widehat{\varphi}\}
=
\{\M\in\K:\M\models\varphi\}
=[\![\varphi]\!]_{\K}.
\]
Pushforward gives
\[
\nu_{\lambda,c}(\widehat{\varphi})
=
\mu_{\lambda,c}(\pi_{\K}^{-1}(\widehat{\varphi}))
=
\mu_{\lambda,c}([\![\varphi]\!]_{\K})
=s_{\lambda,c}(\varphi).
\]
If two Stone measures agree on all clopens, then the equality above gives identical confidences for all sentences. Conversely, equality of all confidences gives equality of the two measures on every basic clopen $\widehat{\varphi}$ and hence on the clopen algebra.
\end{proof}

\begin{corollary}[Elementary-equivalence invariance]\label{cor:elementaryinvariance}
Semantic confidence, contextual theory, threshold membership, and limit-theory membership depend only on the distribution of complete theories, not on the particular representatives in $\K$.
\end{corollary}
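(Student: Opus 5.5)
The plan is to route everything through Theorem~\ref{thm:stoneconfidence}. By that theorem, for every sentence $\varphi$ we have $s_{\lambda,c}(\varphi)=\nu_{\lambda,c}(\widehat{\varphi})$, where $\nu_{\lambda,c}=(\pi_{\K})_\ast\mu_{\lambda,c}$ and $\pi_{\K}(\M)=\Th(\M)$. The key observation is that $\pi_{\K}$ is constant on elementary-equivalence classes. So if $\M\equiv\Nstr$, then $\M$ and $\Nstr$ lie in exactly the same definable events $[\![\varphi]\!]_{\K}$.

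I would make "depends only on the distribution of complete theories" precise as follows. Take two contextual presentations $(\K,\mu)$ and $(\K',\mu')$ over the same signature $\Lang$, possibly with different representatives, including the case $\K'$ obtained by replacing structures by elementarily equivalent ones. Assume $\nu_{\K,\mu}$ and $\nu_{\K',\mu'}$ agree on the clopen algebra of $S_{\Lang}$. Then we must show they agree on all four listed quantities.

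First I would treat semantic confidence. Its invariance is exactly the ``if'' direction of Theorem~\ref{thm:stoneconfidence}: $s(\varphi)=\nu(\widehat\varphi)=\nu'(\widehat\varphi)=s'(\varphi)$.

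The remaining three notions are then handled by observing that each is defined solely from the function $\varphi\mapsto s_{\lambda,c}(\varphi)$.
\begin{itemize}
\item The contextual theory is $\Th_{\lambda}(c)=\{\varphi: s_{\lambda,c}(\varphi)=1\}$, and likewise each fragment theory $\Th^{\mathcal F}_\lambda(c)$.
\item Threshold membership is the condition $s_{\lambda,c}(\varphi)>\tau$ (or $\ge\tau$).
\item Limit-theory membership is a condition on the net $\lambda\mapsto s_{\lambda,c}(\varphi)$, for example eventual measure one or a limit of confidences along the directed set $\Lambda$.
\end{itemize}
Since the functions coincide pointwise for each $\lambda$, every set or predicate defined from them coincides. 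Scale-by-scale agreement of the Stone measures therefore suffices for the limit notions.

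The main obstacle is not mathematical depth but stating the invariance correctly. The later notions (threshold and limit theories) are not yet formally defined at this point in the paper. I would therefore phrase the argument schematically: any notion that factors through $s_{\lambda,c}$ factors through $\nu_{\lambda,c}$ restricted to clopens. I would also remark that this restriction determines $\nu_{\lambda,c}$ on the Borel $\sigma$-algebra of $S_{\Lang}$ by Carathéodory uniqueness, since the clopens form a $\pi$-system generating it. This makes ``distribution of complete theories'' unambiguous.

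A final check is measurability of $\pi_{\K}$. The preimages of basic clopens are the definable events, which are measurable by the definition of a measured model class. This is enough, because these preimages generate the pullback $\sigma$-algebra.
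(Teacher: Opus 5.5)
Your proposal is correct and follows essentially the paper's argument: all four notions are functions of $s_{\lambda,c}$, which by Theorem~\ref{thm:stoneconfidence} equals the clopen values of the Stone pushforward, and these values do not change when structures are replaced by elementarily equivalent ones, because $\pi_{\K}$ is constant on elementary-equivalence classes. Your additional remarks are sound refinements that the paper leaves implicit: Carath\'eodory uniqueness on the Borel sets of the second-countable space $S_{\Lang}$, measurability of $\pi_{\K}$, and the observation that the threshold and limit theories are only defined later in the paper.
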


\begin{proof}
All four notions are functions of values $s_{\lambda,c}(\varphi)$. By Theorem~\ref{thm:stoneconfidence}, these values are exactly the Stone-measure values of clopens $\widehat{\varphi}$. Replacing structures by elementarily equivalent representatives does not change their image in $S_{\Lang}$, so it does not change the induced clopen probabilities.
\end{proof}

This representation is useful for later comparison across scales. A scale trend is a sequence of probability measures on a common Stone space, and a latent entailment is convergence of these measures to one on a clopen. This gives a representation-invariant target for semantic emergence.

\subsection{Support and indistinguishability}

A sentence can have confidence one without being true in every admissible structure. The right exact object is the support modulo null definable events.

\begin{definition}[Null equivalence]
For structures $\M,\Nstr\in\K$, define
\[
\M\equiv_{\mu}\Nstr
\]
when for every sentence $\varphi$, if $\M\models\varphi$ and $\Nstr\models\neg\varphi$, then neither $[\![\varphi]\!]_{\K}$ nor $[\![\neg\varphi]\!]_{\K}$ is forced by the measure-one filter. Equivalently, $\M$ and $\Nstr$ are not separated by any sentence whose truth value is decided almost surely.
\end{definition}

\begin{proposition}[Almost-sure theory is invariant under null indistinguishability]\label{prop:nullinvariant}
If $\M\equiv_{\mu}\Nstr$, then $\M$ and $\Nstr$ satisfy the same sentences in $\Th_{\mu}:=\{\varphi:\mu([\![\varphi]\!]_{\K})=1\}$.
\end{proposition}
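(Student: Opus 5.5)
The plan is to argue by contradiction, unpacking the definition of $\equiv_{\mu}$ and setting it against the definition of $\Th_{\mu}$. Suppose $\M\equiv_{\mu}\Nstr$ but some $\varphi\in\Th_{\mu}$ is satisfied by exactly one of the two structures. By the symmetry of the situation (swap $\varphi$ for $\neg\varphi$ or swap the roles of $\M$ and $\Nstr$), I will reduce to a single case in which $\M\models\varphi$ and $\Nstr\models\neg\varphi$.

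In that case the hypothesis $\M\equiv_{\mu}\Nstr$ says that neither $[\![\varphi]\!]_{\K}$ nor $[\![\neg\varphi]\!]_{\K}$ is forced by the measure-one filter. Here I read ``forced by the measure-one filter'' as ``the class $[\varphi]$ belongs to $\mathcal{F}_{\mu}$,'' that is, $\bar\mu([\varphi])=1$. On the other hand, $\varphi\in\Th_{\mu}$ means exactly $\mu([\![\varphi]\!]_{\K})=1$. So $[\varphi]\in\mathcal{F}_{\mu}$, which is the membership used in Theorem~\ref{thm:filter}, and this contradicts the hypothesis.

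The reduction has to be handled carefully. If instead $\M\models\neg\varphi$ and $\Nstr\models\varphi$, I will apply the definition with the roles of $\M$ and $\Nstr$ exchanged. To do this I first need to check that $\equiv_{\mu}$ is symmetric. The defining clause quantifies over all sentences and names both $\varphi$ and $\neg\varphi$, so replacing $\varphi$ by $\neg\varphi$ shows that the condition ``$\M\models\varphi$, $\Nstr\models\neg\varphi$'' and its mirror image are both covered. This uses only the logical equivalence $\neg\neg\varphi\equiv\varphi$ and Lemma~\ref{lem:boolean}, which gives $[\![\neg\neg\varphi]\!]_{\K}=[\![\varphi]\!]_{\K}$.

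The main obstacle is not a calculation but the interpretation of the phrase ``forced by the measure-one filter'' in the definition of null equivalence. I will commit to the reading $[\varphi]\in\mathcal{F}_{\mu}$, which matches the paper's ``equivalently, not separated by any sentence whose truth value is decided almost surely.'' Under that reading the contrapositive is immediate: any $\varphi\in\Th_{\mu}$ has its truth value decided almost surely, so it cannot separate $\M$ from $\Nstr$. I will therefore state that reading explicitly at the start of the proof. Once it is fixed, the remaining argument is only a few lines.
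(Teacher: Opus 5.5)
Your proposal is correct and follows the paper's argument. Both read ``forced by the measure-one filter'' as having measure one, so any $\varphi\in\Th_{\mu}$ with $\M\models\varphi$ and $\Nstr\models\neg\varphi$ contradicts $\M\equiv_{\mu}\Nstr$; your substitution of $\neg\varphi$ for $\varphi$ in the mirror case makes explicit the symmetry the paper covers with ``the reverse direction is identical.''
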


\begin{proof}
Suppose $\varphi\in\Th_{\mu}$ and $\M\models\varphi$. If $\Nstr\models\neg\varphi$, then $\varphi$ separates $\M$ and $\Nstr$ by a sentence decided almost surely, contradicting $\M\equiv_{\mu}\Nstr$. Hence $\Nstr\models\varphi$. The reverse direction is identical.
\end{proof}

This quotient view is useful because LLM behavior is rarely determined by one canonical structure. Many latent structures may be behaviorally indistinguishable under the formulas and contexts being tested.

\subsection{Definitional invariance}

A semantic theory for LLM behavior should not depend on accidental notation. If the language is expanded only by definitional abbreviations, the semantic confidence of old-language claims should not change. This is the first representation-invariance test for the construction.

\begin{definition}[Conservative definitional expansion]
Let $\Lang\subseteq\Lang'$ and let $T'$ be a theory in $\Lang'$. We say that $T'$ is a conservative definitional expansion of $T$ over a model class $\K$ if every $\M\in\K$ has a unique expansion $\M'\models T'$ to $\Lang'$ and, for every $\Lang$-sentence $\varphi$,
\[
\M\models\varphi
\qquad\Longleftrightarrow\qquad
\M'\models\varphi.
\]
Let $e:\K\to\K'$ send $\M$ to its unique expansion.
\end{definition}

\begin{theorem}[Invariance under definitional expansion]\label{thm:definvinv}
Let $(\K,\mu)$ be a measured $\Lang$-model class and let $(\K',\mu')$ be obtained by a conservative definitional expansion, with $\mu'=e_\ast\mu$. Then for every $\Lang$-sentence $\varphi$,
\[
\mu([\![\varphi]\!]_\K)=\mu'([\![\varphi]\!]_{\K'}).
\]
Consequently, almost-sure theories, prompt consequences expressed in $\Lang$, and threshold membership for $\Lang$-sentences are invariant under such expansions.
\end{theorem}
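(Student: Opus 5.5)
The plan is to prove the displayed identity by a pullback computation. The key fact is that $e$ carries old-language events onto old-language events exactly:
\[
e^{-1}\bigl([\![\varphi]\!]_{\K'}\bigr)=[\![\varphi]\!]_{\K}
\qquad\text{for every $\Lang$-sentence }\varphi .
\]
The invariance of the derived notions will then follow because each of them is a function of the confidence values alone.

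First, I will pin down $\K'$. I will take it to be the image $e(\K)$, the class of unique expansions, so that $e:\K\to\K'$ is a bijection. In particular, every $\M'\in\K'$ has the form $e(\M)$, and its $\Lang$-reduct is $\M$ itself.

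Second, I will verify the preimage identity. A structure $\M\in\K$ lies in $e^{-1}([\![\varphi]\!]_{\K'})$ exactly when $e(\M)=\M'\models\varphi$. By the conservativity clause in the definition of a conservative definitional expansion, this holds exactly when $\M\models\varphi$. Hence the preimage is $[\![\varphi]\!]_{\K}$. This is the same computation as the one for $\pi_{\K}$ in the proof of Theorem~\ref{thm:stoneconfidence}.

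Third, the pushforward definition $\mu'=e_\ast\mu$ gives
\[
\mu'([\![\varphi]\!]_{\K'})=\mu\bigl(e^{-1}([\![\varphi]\!]_{\K'})\bigr)=\mu([\![\varphi]\!]_{\K}).
\]

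The step I expect to be the main obstacle is measurability. For the pushforward to be evaluated on $[\![\varphi]\!]_{\K'}$, that set must lie in the $\sigma$-algebra on which $\mu'$ is defined. I will handle this by equipping $\K'$ with the pushforward $\sigma$-algebra, namely $\{B\subseteq\K' : e^{-1}(B)\in\A\}$. The second step shows this contains every $\Lang$-definable event, since $\A\supseteq\mathsf{Def}_{\Lang}(\K)$. If one also wants the new $\Lang'$-sentences to be measurable, I will note this point separately. Every $\Lang'$-sentence is $\K'$-equivalent to an $\Lang$-sentence obtained by unfolding the explicit definitions. So $\mathsf{Def}_{\Lang'}(\K')$ coincides with $\{[\![\varphi]\!]_{\K'}:\varphi\in\Sent(\Lang)\}$, and it is contained in the pushforward $\sigma$-algebra as well. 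Only the $\Lang$-part is needed for the theorem as stated.

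Finally, I will derive the consequences.
\begin{enumerate}[label=(\roman*),leftmargin=2.5em]
\item Almost-sure theories: membership of an $\Lang$-sentence in $\Th_\mu$ is the condition $\mu(\cdot)=1$, which is preserved by the identity.
\item Threshold membership: membership $s(\varphi)>\tau$ is likewise preserved by the identity.
\item Prompt consequences: here I will apply the argument to the contextual posteriors. This requires assuming, as the intended reading of the statement, that the update on $\K'$ is transported along $e$, i.e.\ $\mu'_{\lambda,c}=e_\ast\mu_{\lambda,c}$. Then $s'_{\lambda,c}(\varphi)=s_{\lambda,c}(\varphi)$ for every $\Lang$-sentence $\varphi$ and every context $c$. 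Prompt consequences expressed in $\Lang$ are defined from these values, so they are unchanged.
\end{enumerate}

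Alternatively, once the confidence values agree, Theorem~\ref{thm:stoneconfidence} shows that the two presentations induce the same measure on the clopen algebra of $S_{\Lang}$, and Corollary~\ref{cor:elementaryinvariance} then yields the invariance of all derived notions at once.
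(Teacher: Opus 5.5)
Your pullback computation for the displayed identity matches the paper's proof, and so does your derivation of invariance for almost-sure theories and threshold membership. Your explicit choice $\K'=e(\K)$ and the pushforward $\sigma$-algebra also settle points the paper leaves implicit.

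The gap is in clause (iii). You assert that prompt consequences expressed in $\Lang$ ``are defined from these values.'' In the paper, however, $p\Vdash_X\varphi$ means that every $\M\in\mathsf{Upd}_p(X)=\Min_{\rho_p}(X\cap\Mod(H_p))$ satisfies $\varphi$. No measure enters this definition, so transporting the posteriors $\mu_{\lambda,c}$ along $e$ and matching confidence values tells you nothing about it.

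Prompt consequence is in fact not a function of the confidence function. Take $X_1=\{\M\}$ with $\mu_1=\delta_{\M}$, and $X_2=\{\M,\Nstr\}$ with $\mu_2(\{\M\})=1$; the two confidence functions coincide. Yet for the trivial prompt (no hard constraints, constant rank) and any $\varphi$ with $\M\models\varphi$ and $\Nstr\models\neg\varphi$, one has $p\Vdash_{X_1}\varphi$ and $p\not\Vdash_{X_2}\varphi$. Your fallback through Theorem~\ref{thm:stoneconfidence} and Corollary~\ref{cor:elementaryinvariance} fails for the same reason. That corollary only covers notions that are functions of the values $s_{\lambda,c}(\varphi)$, and prompt consequence is not one of them.

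The missing step is a structural transport of the prompt itself, and it uses ingredients you already have. Given $p=(H_p,\rho_p)$ on $X\subseteq\K$ with $H_p\subseteq\Sent(\Lang)$, put $p'=(H_p,\rho_p\circ e^{-1})$ on $e(X)$.

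Apply your preimage identity to each $\eta\in H_p$: the bijection $e$ then maps $X\cap\Mod(H_p)$ onto $e(X)\cap\Mod(H_p)$. Since $e$ preserves rank by construction, it carries rank-minimal elements exactly onto rank-minimal elements, so $e(\mathsf{Upd}_p(X))=\mathsf{Upd}_{p'}(e(X))$. Conservativity, $\M\models\varphi\iff e(\M)\models\varphi$ for $\Lang$-sentences, then gives $p\Vdash_X\varphi\iff p'\Vdash_{e(X)}\varphi$.

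This is the argument the paper's final sentence (``satisfaction of the same old-language sentences on corresponding selected structures'') compresses. With (iii) replaced by it, your proof is complete.
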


\begin{proof}
By the definition of conservative expansion,
\[
e^{-1}([\![\varphi]\!]_{\K'})=[\![\varphi]\!]_\K
\]
for every old-language sentence $\varphi$. Since $\mu'=e_\ast\mu$,
\[
\mu'([\![\varphi]\!]_{\K'})=\mu(e^{-1}([\![\varphi]\!]_{\K'}))=\mu([\![\varphi]\!]_\K).
\]
The conclusions about almost-sure theories and threshold membership follow directly from equality of confidence values. Prompt consequence is defined by satisfaction of the same old-language sentences on corresponding selected structures, so it is also preserved.
\end{proof}

\section{Prompts as preferential context updates}\label{sec:prompt}

A prompt has literal content and pragmatic force. Literal content behaves like constraints. Pragmatic force ranks models that satisfy those constraints. This is the same structural division that appears in default logic and preferential consequence, although the object being updated here is a measured class of semantic structures. Reiter's default logic gives one classical formalization of defaults \cite{Reiter1980}. Preferential semantics and cumulative consequence relations give another \cite{Kraus1990}. Belief revision separates hard information from revision policy in a related way \cite{AGM1985}.

\begin{definition}[Prompt specification]
Let $X\subseteq\mathcal{K}_{\lambda}$. A \emph{prompt specification} on $X$ is a pair
\[
p=(H_p,\rho_p),
\]
where $H_p\subseteq\Sent(\Lang)$ is finite and $\rho_p:X\to\Gamma_p$ maps structures into a well-ordered set. Lower rank means greater pragmatic preference. Define
\[
\mathsf{Upd}_p(X):=\Min_{\rho_p}(X\cap\Mod(H_p)),
\]
where
\[
\Min_{\rho_p}(A):=\{\M\in A:(\forall\Nstr\in A)\ \rho_p(\M)\le\rho_p(\Nstr)\}.
\]
If $\mathsf{Upd}_p(X)\neq\varnothing$, define prompt consequence by
\[
p\Vdash_X\varphi
\quad\Longleftrightarrow\quad
(\forall\M\in\mathsf{Upd}_p(X))\ \M\models\varphi.
\]
\end{definition}

\begin{proposition}[Elementary algebra of prompt update]\label{prop:promptalgebra}
Assume $X\cap\Mod(H_p)\neq\varnothing$. Then:
\begin{enumerate}[label=(\roman*),leftmargin=2.5em]
    \item $\mathsf{Upd}_p(X)\subseteq X\cap\Mod(H_p)$;
    \item $\mathsf{Upd}_p(\mathsf{Upd}_p(X))=\mathsf{Upd}_p(X)$;
    \item if $\rho_p$ is constant on $X\cap\Mod(H_p)$, then
    \[
    \mathsf{Upd}_p(X)=X\cap\Mod(H_p).
    \]
\end{enumerate}
\end{proposition}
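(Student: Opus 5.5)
The plan is to prove all three items directly from the definitions of $\mathsf{Upd}_p$ and $\Min_{\rho_p}$, writing $A:=X\cap\Mod(H_p)$, which is nonempty by hypothesis. A preliminary observation does most of the work. Since $\Gamma_p$ is well-ordered and $A\neq\varnothing$, the image $\rho_p[A]$ has a least element $m$. Then
\[
\Min_{\rho_p}(A)=\{\M\in A:\rho_p(\M)=m\}\neq\varnothing,
\]
so $\mathsf{Upd}_p(X)$ is nonempty and is exactly the rank-$m$ layer of $A$. This is the only place well-ordering enters, and nonemptiness is recorded so that prompt consequence is defined. (Linearity would suffice for describing the layer, but well-ordering guarantees the minimum exists.)

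Item (i) is immediate, because $\Min_{\rho_p}(B)\subseteq B$ for every $B$ by definition.

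For (ii), set $Y:=\mathsf{Upd}_p(X)$. By (i), $Y\subseteq A\subseteq X$, so $\rho_p$ restricts to $Y$ and $\mathsf{Upd}_p(Y)$ makes sense. Every $\M\in Y$ satisfies $H_p$, so $Y\cap\Mod(H_p)=Y$ and therefore $\mathsf{Upd}_p(Y)=\Min_{\rho_p}(Y)$. Every element of $Y$ has rank exactly $m$. Hence each $\M\in Y$ satisfies $\rho_p(\M)\le\rho_p(\Nstr)$ for all $\Nstr\in Y$, which gives $\Min_{\rho_p}(Y)=Y$. This yields idempotence.

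For (iii), if $\rho_p$ is constant on $A$, then every $\M\in A$ trivially satisfies $\rho_p(\M)\le\rho_p(\Nstr)$ for all $\Nstr\in A$, so $\Min_{\rho_p}(A)=A$.

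I expect no real obstacle. The one point needing care is in (ii): checking that the update is applied to a subset of the original domain $X$, so that $\rho_p$ is still defined there, and that intersecting with $\Mod(H_p)$ a second time changes nothing.
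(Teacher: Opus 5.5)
Your proof is correct and follows the paper's route: (i) from $\Min_{\rho_p}(B)\subseteq B$, (ii) because every selected structure has the minimal rank, so applying the update again changes nothing, and (iii) because constant rank makes every element minimal. Your extra checks are sound additions that the paper leaves implicit: that the second intersection with $\Mod(H_p)$ is redundant, that $\rho_p$ is still defined on the subset, and that the update is nonempty by well-ordering.
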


\begin{proof}
The first item is immediate from the definition. For the second, put $A=X\cap\Mod(H_p)$. All elements of $\Min_{\rho_p}(A)$ have the same minimal rank in $A$. Applying $\Min_{\rho_p}$ again to that set therefore changes nothing. For the third, if the rank is constant on $A$, every element of $A$ is minimal.
\end{proof}

The constant-rank case recovers literal context restriction. The general case allows an instruction to prefer one interpretation over another even when both satisfy the literal constraints.

\begin{definition}[Lexicographic composition]
Given prompts $p=(H_p,\rho_p)$ and $q=(H_q,\rho_q)$ on $X$, their lexicographic composition is
\[
p\oplus q=(H_p\cup H_q,\rho_{p\oplus q}),
\]
where
\[
\rho_{p\oplus q}(\M):=(\rho_q(\M),\rho_p(\M))
\]
ordered lexicographically. Thus later pragmatic force has priority, while hard constraints accumulate.
\end{definition}

Other priority conventions are possible. The lexicographic convention is useful because it captures a common instruction-following pattern: a later style or role instruction may dominate an earlier default, provided the hard content remains satisfiable.

\begin{theorem}[Prompt consequence is nonmonotonic]\label{thm:nonmon}
Suppose $X$ contains $\M$ and $\Nstr$, and there are prompts $p,q$ and a sentence $\varphi$ such that:
\begin{enumerate}[label=(\alph*),leftmargin=2.5em]
    \item $\M,\Nstr\in X\cap\Mod(H_p\cup H_q)$;
    \item $\mathsf{Upd}_p(X)=\{\M\}$;
    \item $\mathsf{Upd}_{p\oplus q}(X)=\{\Nstr\}$;
    \item $\M\models\varphi$ and $\Nstr\models\neg\varphi$.
\end{enumerate}
Then
\[
p\Vdash_X\varphi
\qquad\text{and}\qquad
p\oplus q\not\Vdash_X\varphi.
\]
\end{theorem}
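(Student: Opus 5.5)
The argument is a direct unfolding of the definition of prompt consequence, applied once to $p$ and once to $p\oplus q$. No earlier result is needed beyond the definitions of $\mathsf{Upd}$, $\Vdash_X$, and lexicographic composition. Hypotheses (b) and (c) already give the two updated sets explicitly, so the work is to check that both consequence statements are defined and then evaluate them.

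First, for $p$: by (a), $\M\in X\cap\Mod(H_p)$, so this set is nonempty. By (b), $\mathsf{Upd}_p(X)=\{\M\}\neq\varnothing$, so $p\Vdash_X\varphi$ is defined. Its definition requires every structure in $\mathsf{Upd}_p(X)$ to satisfy $\varphi$. The only such structure is $\M$, and $\M\models\varphi$ by (d), so $p\Vdash_X\varphi$.

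Second, for $p\oplus q$: its hard content is $H_p\cup H_q$, and by (a), $\Nstr\in X\cap\Mod(H_p\cup H_q)$, so the update is taken over a nonempty set. By (c), $\mathsf{Upd}_{p\oplus q}(X)=\{\Nstr\}$, which is nonempty, so $p\oplus q\Vdash_X\varphi$ is a well-defined statement. It would require $\Nstr\models\varphi$. But $\Nstr\models\neg\varphi$ by (d), so $\Nstr\not\models\varphi$, since a structure cannot satisfy both a sentence and its negation. The universal condition therefore fails, and $p\oplus q\not\Vdash_X\varphi$.

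The only point needing care is well-definedness: $\Vdash_X$ is defined only when the update is nonempty, and hypotheses (b) and (c) guarantee this. Hypothesis (a) additionally shows that the hard constraints of both prompts are satisfiable. I would close by remarking that the lexicographic ordering is what makes (b) and (c) jointly satisfiable: $q$'s ranking takes priority and can prefer $\Nstr$ even though $p$'s ranking prefers $\M$. So the hypotheses are realizable, for example with two structures and opposite rank functions, and the theorem is not vacuous.
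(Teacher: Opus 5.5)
Your proof is correct and takes the same approach as the paper: you unfold the definition of $\Vdash_X$ on the singleton selected sets $\{\M\}$ and $\{\Nstr\}$ given by (b) and (c), and read off both conclusions from (d). The well-definedness check and the realizability remark (two structures with opposite rankings, where $q$'s rank takes lexicographic priority) are correct additions that the paper does not spell out.
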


\begin{proof}
Since $\mathsf{Upd}_p(X)=\{\M\}$ and $\M\models\varphi$, one has $p\Vdash_X\varphi$. Since $\mathsf{Upd}_{p\oplus q}(X)=\{\Nstr\}$ and $\Nstr\models\neg\varphi$, not every structure selected by $p\oplus q$ satisfies $\varphi$. Hence $p\oplus q\not\Vdash_X\varphi$.
\end{proof}

This is not ordinary monotone theory extension. Hard information can be monotone while pragmatic preference is not.

\subsection{Fixed-preference closure}

Nonmonotonicity does not mean absence of structure. If the ranking is fixed and only hard constraints vary, prompt consequence satisfies controlled closure laws.

\begin{definition}[Ranked consequence under fixed preference]
Fix $X$ and a ranking $\rho:X\to\Gamma$. For a finite $H\subseteq\Sent(\Lang)$ define
\[
H\Vdash_{X,\rho}\varphi
\quad\Longleftrightarrow\quad
\Min_{\rho}(X\cap\Mod(H))\subseteq[\![\varphi]\!]_X.
\]
\end{definition}

\begin{proposition}[Right weakening and conjunction]\label{prop:rightweak}
For fixed $X$ and $\rho$:
\begin{enumerate}[label=(\roman*),leftmargin=2.5em]
    \item if $H\Vdash_{X,\rho}\varphi$ and $\varphi\models\psi$, then $H\Vdash_{X,\rho}\psi$;
    \item if $H\Vdash_{X,\rho}\varphi$ and $H\Vdash_{X,\rho}\psi$, then $H\Vdash_{X,\rho}\varphi\wedge\psi$.
\end{enumerate}
\end{proposition}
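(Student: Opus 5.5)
The plan is to unfold the definition of ranked consequence and reduce both items to elementary set inclusions among definable events in $X$. Write $S:=\Min_{\rho}(X\cap\Mod(H))$ for the set of $\rho$-preferred $H$-structures. By definition, $H\Vdash_{X,\rho}\varphi$ says exactly $S\subseteq[\![\varphi]\!]_X$. The key observation is that $S$ depends only on $H$, $X$ and $\rho$, not on the conclusion formula. Both laws are therefore statements about a single fixed set $S$, and no interaction with the ranking is needed. This is precisely why fixing the preference restores structure that Theorem~\ref{thm:nonmon} destroys.

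For (i), I will use that $\varphi\models\psi$ means every $\Lang$-structure satisfying $\varphi$ satisfies $\psi$. Restricting to $X\subseteq\mathcal{K}_\lambda\subseteq\Mod(\Lang)$ gives $[\![\varphi]\!]_X\subseteq[\![\psi]\!]_X$. Chaining this with $S\subseteq[\![\varphi]\!]_X$ yields $S\subseteq[\![\psi]\!]_X$, which is $H\Vdash_{X,\rho}\psi$.

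For (ii), I will invoke the Boolean behaviour of definable events from Lemma~\ref{lem:boolean}, in the form $[\![\varphi\wedge\psi]\!]_X=[\![\varphi]\!]_X\cap[\![\psi]\!]_X$. The two hypotheses give $S\subseteq[\![\varphi]\!]_X$ and $S\subseteq[\![\psi]\!]_X$. Hence $S$ is contained in their intersection, which is $[\![\varphi\wedge\psi]\!]_X$.

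There is no real obstacle. The only points to check are bookkeeping. First, semantic entailment $\models$ is over all structures, so it applies in particular to those in $X$. Second, the empty case needs no special treatment: if $S=\varnothing$, both conclusions hold vacuously, consistent with the definition, which does not require nonemptiness.
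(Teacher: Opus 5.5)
Your proof is correct and follows the same route as the paper: both items reduce to set inclusions for the fixed selected set $\Min_{\rho}(X\cap\Mod(H))$, using $[\![\varphi]\!]_X\subseteq[\![\psi]\!]_X$ for (i) and the intersection of the two events for (ii). Your remarks about restricting semantic entailment to structures in $X$ and about the empty selected set are accurate bookkeeping that the paper leaves implicit.
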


\begin{proof}
Both claims follow by inclusion. In (i), $[\![\varphi]\!]_X\subseteq[\![\psi]\!]_X$. In (ii), the selected set is included in both $[\![\varphi]\!]_X$ and $[\![\psi]\!]_X$, hence in their intersection.
\end{proof}

\begin{theorem}[Cautious monotony under stable minima]\label{thm:cautious}
Let $H$ be finite. If
\[
H\Vdash_{X,\rho}\varphi
\]
and
\[
\Min_{\rho}(X\cap\Mod(H\cup\{\varphi\}))=\Min_{\rho}(X\cap\Mod(H)),
\]
then for every $\psi$,
\[
H\Vdash_{X,\rho}\psi
\quad\Longleftrightarrow\quad
H\cup\{\varphi\}\Vdash_{X,\rho}\psi.
\]
\end{theorem}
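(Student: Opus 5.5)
The claim follows directly from the definition of $\Vdash_{X,\rho}$ once the stable-minima hypothesis is in place. By the definition of ranked consequence under fixed preference, for any finite $H'$ and any sentence $\psi$,
\[
H'\Vdash_{X,\rho}\psi
\quad\Longleftrightarrow\quad
\Min_{\rho}(X\cap\Mod(H'))\subseteq[\![\psi]\!]_X .
\]
So whether a sentence is a consequence of $H'$ depends only on the selected set $\Min_{\rho}(X\cap\Mod(H'))$. I will apply this characterization twice: once with $H'=H$ and once with $H'=H\cup\{\varphi\}$. The second set is again finite, so the definition applies to it.

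First I fix notation. Put $S:=\Min_{\rho}(X\cap\Mod(H))$ and $S_\varphi:=\Min_{\rho}(X\cap\Mod(H\cup\{\varphi\}))$. The second hypothesis of the theorem says exactly that $S_\varphi=S$. Then, for arbitrary $\psi$, I chain the equivalences:
\[
H\Vdash_{X,\rho}\psi
\iff S\subseteq[\![\psi]\!]_X
\iff S_\varphi\subseteq[\![\psi]\!]_X
\iff H\cup\{\varphi\}\Vdash_{X,\rho}\psi .
\]
The middle step is just substitution of equal sets. This proves the biconditional for every $\psi$.

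The only point that needs care is the role of the first hypothesis, $H\Vdash_{X,\rho}\varphi$, since the chain above never uses it. I will note that this is consistent with the hypotheses. The hypothesis gives $S\subseteq[\![\varphi]\!]_X$. Since $S\subseteq X\cap\Mod(H)$, it follows that $S\subseteq X\cap\Mod(H\cup\{\varphi\})\subseteq X\cap\Mod(H)$. Minimality is inherited by subsets that contain all the minima, so this inclusion is what makes the stable-minima condition natural. It is in this sense that the statement expresses cautious monotony: adding an already-accepted conclusion does not change the selected set. The formal derivation, however, rests entirely on the equality of minima.

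I do not expect a real obstacle. The argument is a definitional unfolding, and the only thing to be careful about is that both $H$ and $H\cup\{\varphi\}$ are finite, so that $\Vdash_{X,\rho}$ is defined for both.
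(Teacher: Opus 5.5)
Your proof is correct and matches the paper's argument: once the two selected sets coincide, both consequence relations quantify over the same structures, and $H\Vdash_{X,\rho}\varphi$ is not needed for the derivation itself. Your side remark in fact proves more than you claim. Because ranks lie in a totally ordered (indeed well-ordered) set, the inclusions $S\subseteq X\cap\Mod(H\cup\{\varphi\})\subseteq X\cap\Mod(H)$ force $\Min_{\rho}(X\cap\Mod(H\cup\{\varphi\}))=S$, so the stable-minima hypothesis already follows from $H\Vdash_{X,\rho}\varphi$.
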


\begin{proof}
Under the stated equality, the selected model set for $H$ is exactly the selected model set for $H\cup\{\varphi\}$. The two consequence relations therefore quantify over the same structures.
\end{proof}

The hypothesis says that adding a consequence already true in all preferred models does not disturb the set of preferred models. In prompt terms, a restatement of what the current preferred interpretation already satisfies is harmless. A later instruction is dangerous only when it changes the preference order or removes all current minima.

\begin{definition}[Theory of a selected set]
For $S\subseteq X$, define
\[
\Th_X(S):=\{\varphi\in\Sent(\Lang):S\subseteq[\![\varphi]\!]_X\}.
\]
\end{definition}

\begin{theorem}[Exact preservation criterion for prompt extension]\label{thm:preservation}
Assume $\mathsf{Upd}_p(X)$ and $\mathsf{Upd}_{p\oplus q}(X)$ are nonempty. Then the following are equivalent:
\begin{enumerate}[label=(\roman*),leftmargin=2.5em]
    \item every $p$-consequence is preserved after appending $q$, that is,
    \[
    p\Vdash_X\varphi\quad\Longrightarrow\quad p\oplus q\Vdash_X\varphi
    \]
    for all sentences $\varphi$;
    \item
    \[
    \Th_X(\mathsf{Upd}_p(X))\subseteq \Th_X(\mathsf{Upd}_{p\oplus q}(X));
    \]
    \item every sentence true in all $p$-selected structures is true in all $(p\oplus q)$-selected structures.
\end{enumerate}
If $\mathsf{Upd}_p(X)$ is finite and the language separates points of $X$, then these equivalent conditions imply
\[
\mathsf{Upd}_{p\oplus q}(X)\subseteq \mathsf{Upd}_p(X).
\]
\end{theorem}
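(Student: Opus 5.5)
The plan has two parts: the three-way equivalence, which is essentially an unfolding of definitions, and the final inclusion, which needs a separating sentence built from finitely many witnesses. Throughout, write $S_p:=\mathsf{Upd}_p(X)$ and $S_{pq}:=\mathsf{Upd}_{p\oplus q}(X)$. Both are nonempty by hypothesis, so the consequence relations $p\Vdash_X\varphi$ and $p\oplus q\Vdash_X\varphi$ are defined.

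For the equivalences, I would first note that by the definition of prompt consequence and of $\Th_X$, for every sentence $\varphi$,
\[
p\Vdash_X\varphi\iff S_p\subseteq[\![\varphi]\!]_X\iff\varphi\in\Th_X(S_p),
\]
and likewise $p\oplus q\Vdash_X\varphi\iff\varphi\in\Th_X(S_{pq})$.
\begin{itemize}
\item Condition (i) says that every member of $\Th_X(S_p)$ lies in $\Th_X(S_{pq})$, which is literally (ii).
\item Condition (iii) is (ii) read through the definition of $\Th_X$ as ``true in all selected structures.''
\end{itemize}
This part is routine.

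For the final claim, I would argue by contraposition. Suppose some $\Nstr\in S_{pq}$ does not lie in $S_p$. I would produce a sentence in $\Th_X(S_p)\setminus\Th_X(S_{pq})$, which contradicts (ii).
\begin{itemize}
\item I read ``the language separates points of $X$'' as: for distinct $\M,\Nstr\in X$ there is a sentence $\theta$ with $\M\models\theta$ and $\Nstr\models\neg\theta$. If the sentence only distinguishes the two, I would negate it if necessary.
\item Write the finite set $S_p=\{\M_1,\dots,\M_k\}$. For each $i$, pick $\theta_i$ with $\M_i\models\theta_i$ and $\Nstr\models\neg\theta_i$. This is possible because $\Nstr\neq\M_i$.
\item Put $\varphi:=\theta_1\vee\cdots\vee\theta_k$. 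This is a legitimate sentence because $k$ is finite, and $k\ge1$ since $S_p$ is nonempty.
\item Every $\M_i$ satisfies $\varphi$, so $\varphi\in\Th_X(S_p)$.
\item $\Nstr$ falsifies every disjunct, so $\Nstr\models\neg\varphi$, and hence $\varphi\notin\Th_X(S_{pq})$.
\end{itemize}

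The main obstacle is exactly this step. Finiteness of $S_p$ is what makes the separating formula a single first-order sentence, since an infinite disjunction would not be admissible. The separation hypothesis must also be applied in the correct direction, with each $\theta_i$ true at $\M_i$ and false at $\Nstr$. I would state the point-separation convention explicitly before the argument, so that the closure-under-negation step is justified.
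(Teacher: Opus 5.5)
Your proof is correct and follows the same route as the paper. The three-way equivalence comes from unfolding the definitions of $\Vdash_X$ and $\Th_X$. The inclusion is proved by contraposition: enumerate the finite set $\mathsf{Upd}_p(X)=\{\M_1,\dots,\M_k\}$ and combine separators $\theta_i$ (true at $\M_i$, false at $\Nstr$) into a single sentence.

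The one difference is that you combine the separators by the disjunction $\bigvee_i\theta_i$, whereas the paper writes the conjunction $\bigwedge_i\theta_i$. Your choice is the correct one. $\M_j$ is only guaranteed to satisfy $\theta_j$, so the paper's conjunction need not lie in $\Th_X(\mathsf{Upd}_p(X))$. Your disjunction holds at every $\M_j$ and fails at $\Nstr$.
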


\begin{proof}
The equivalence of (i), (ii), and (iii) is just the definition of skeptical consequence over selected sets. For the final claim, suppose $\mathsf{Upd}_p(X)=\{\M_1,\ldots,\M_m\}$ and take $\Nstr\in\mathsf{Upd}_{p\oplus q}(X)$. If $\Nstr\notin\mathsf{Upd}_p(X)$, then for each $i$ there is a sentence $\theta_i$ with $\M_i\models\theta_i$ and $\Nstr\models\neg\theta_i$, after negating the separator if necessary. The conjunction $\theta=\bigwedge_i\theta_i$ is true on all $p$-selected structures and false at $\Nstr$. Hence $\theta\in\Th_X(\mathsf{Upd}_p(X))$ but $\theta\notin\Th_X(\mathsf{Upd}_{p\oplus q}(X))$, contradicting (iii). Thus every $(p\oplus q)$-selected structure lies in $\mathsf{Upd}_p(X)$.
\end{proof}

This theorem turns a vague instruction-following question into a semantic test. A new instruction is safe exactly when it does not move the selected structures outside the old theory. If the relevant selected set is finite and the language distinguishes its points, safety is equivalent to containment of selected sets.

\subsection{Prompt update as posterior restriction}

The previous definitions are set-theoretic. They connect to the measured presentation by conditioning and preference weighting.

\begin{definition}[Soft preferential posterior]
Let $p=(H_p,\rho_p)$ be a prompt and let $\beta>0$. Define the unnormalized weight
\[
w_p^{\beta}(\M):=\mathbf{1}_{\M\models H_p}\exp(-\beta r_p(\M)),
\]
where $r_p:X\to\R_{\ge 0}$ is a real-valued representation of the ranking. If
\[
Z_p^{\beta}:=\int_X w_p^{\beta}(\M)\,d\mu(\M)>0,
\]
then the soft prompt posterior is
\[
d\mu_p^{\beta}(\M):=\frac{w_p^{\beta}(\M)}{Z_p^{\beta}}\,d\mu(\M).
\]
\end{definition}

\begin{proposition}[Zero-temperature limit]\label{prop:zerotemp}
Assume $X$ is finite and $r_p$ attains its minimum over $X\cap\Mod(H_p)$. Let
\[
A_0:=\argmin\{r_p(\M):\M\in X\cap\Mod(H_p)\}.
\]
If $\mu(\M)>0$ for every $\M\in X$, then for every $B\subseteq X$,
\[
\lim_{\beta\to\infty}\mu_p^{\beta}(B)=\frac{\mu(B\cap A_0)}{\mu(A_0)}.
\]
\end{proposition}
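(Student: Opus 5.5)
The limit is computed by writing $\mu_p^{\beta}(B)$ as a ratio of two finite sums and dividing both by the dominant exponential factor. Since $X$ is finite,
\[
\mu_p^{\beta}(B)=\frac{\sum_{\M\in B\cap X_H}\mu(\M)e^{-\beta r_p(\M)}}{\sum_{\M\in X_H}\mu(\M)e^{-\beta r_p(\M)}},
\qquad X_H:=X\cap\Mod(H_p).
\]
Let $r^\ast:=\min\{r_p(\M):\M\in X_H\}$. This minimum exists by hypothesis, and $X_H$ is nonempty, since otherwise $A_0$ would be undefined. Every $\M\in X_H$ has $\mu(\M)>0$, so $Z_p^{\beta}>0$ for all $\beta$ and the posterior is well defined. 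Multiply numerator and denominator by $e^{\beta r^\ast}$. Each term then becomes $\mu(\M)e^{-\beta(r_p(\M)-r^\ast)}$, with exponent $r_p(\M)-r^\ast\ge 0$, and the exponent is zero exactly when $\M\in A_0$.

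Now let $\beta\to\infty$ term by term, which is legitimate because the sums are finite. Terms with $\M\in A_0$ stay equal to $\mu(\M)$. Terms with $r_p(\M)>r^\ast$ tend to $0$. Since $X$ is finite, the gap $\delta:=\min\{r_p(\M)-r^\ast:\M\in X_H\setminus A_0\}$ is strictly positive, so these terms are in fact bounded by $\mu(X)e^{-\beta\delta}$. The numerator therefore converges to $\sum_{\M\in B\cap A_0}\mu(\M)=\mu(B\cap A_0)$, using $A_0\subseteq X_H$. The denominator converges to $\mu(A_0)$.

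The limit of the quotient is the quotient of the limits, provided the denominator's limit is nonzero. Here $A_0$ is nonempty and every point has positive mass, so $\mu(A_0)>0$. This is the only place the positivity assumption is really needed.

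None of these steps is hard. The points that need care are keeping the indicator $\mathbf{1}_{\M\models H_p}$ in both sums, so that $A_0$ is taken inside $X_H$ and not inside $X$, and checking that $\mu(A_0)>0$ before passing to the limit of the quotient. As a consistency check, when $\mu$ is uniform the limit is the uniform measure on $\mathsf{Upd}_p(X)$ whenever $r_p$ represents $\rho_p$. This matches the hard update of Proposition~\ref{prop:promptalgebra}.
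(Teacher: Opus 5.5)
Your proposal is correct and takes essentially the same route as the paper: divide numerator and denominator by the dominant factor $e^{-\beta m}$ (your $e^{\beta r^\ast}$ multiplication), let the finitely many off-minimum terms vanish, and read off the prior restricted to $A_0$ and normalized. The paper leaves implicit your checks that $\mu(A_0)>0$ and that the off-minimum terms decay uniformly at rate $e^{-\beta\delta}$; both checks are right.
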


\begin{proof}
Let $m$ be the minimum of $r_p$ on $X\cap\Mod(H_p)$. Then
\[
Z_p^{\beta}=e^{-\beta m}\mu(A_0)+
\sum_{\M\in (X\cap\Mod(H_p))\setminus A_0}e^{-\beta r_p(\M)}\mu(\M).
\]
After dividing numerator and denominator by $e^{-\beta m}$, every term with $r_p(\M)>m$ tends to $0$. The remaining mass is exactly the prior mass restricted to $A_0$ and normalized.
\end{proof}

Thus hard preferential update is the zero-temperature limit of a familiar probabilistic selection rule.

\subsection{Logical cost models for prompt conflict}

Prompt instructions may be mutually satisfiable as hard constraints while still competing pragmatically. A useful formal abstraction is to assign costs to violations and then recover preferred models by minimization.

\begin{definition}[Violation cost]\label{def:violationcost}
Let $H=\{\eta_1,\ldots,\eta_m\}$ be a finite instruction set and let $w_i>0$. The violation cost of a structure $\M\in X$ is
\[
\ell_H(\M):=\sum_{i:\,\M\models\neg\eta_i}w_i.
\]
The induced update is
\[
\mathsf{CostUpd}_H(X):=\argmin_{\M\in X}\ell_H(\M).
\]
\end{definition}

\begin{proposition}[Hard constraints as infinite-cost limit]\label{prop:hardlimit}
Let $H=H_0\cup H_1$, where $H_0$ is treated as hard and $H_1$ as soft. Assign weight $B$ to every formula in $H_0$ and fixed finite positive weights to formulas in $H_1$. If $X\cap\Mod(H_0)\neq\varnothing$, then for all sufficiently large $B$ every element of $\mathsf{CostUpd}_H(X)$ satisfies $H_0$.
\end{proposition}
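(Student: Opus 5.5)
The plan is a direct comparison of costs. A structure that violates some formula of $H_0$ pays at least $B$, while a structure satisfying $H_0$ pays at most the total soft weight. Let $W:=\sum_{\eta\in H_1}w_\eta$, where the $w_\eta$ are the fixed finite positive weights on $H_1$, and take any $B>W$.

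First, every $\M\in X\cap\Mod(H_0)$ violates only soft formulas. Hence $\ell_H(\M)\le W$. This set is nonempty by hypothesis, so we may fix a witness $\M_0$ in it, and then $\min_{\M\in X}\ell_H(\M)\le\ell_H(\M_0)\le W$.

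Second, any $\Nstr\in X$ with $\Nstr\models\neg\eta$ for some $\eta\in H_0$ has $\ell_H(\Nstr)\ge B>W\ge\ell_H(\M_0)$. Such an $\Nstr$ is strictly beaten by $\M_0$, so it cannot lie in $\mathsf{CostUpd}_H(X)$. Therefore every element of the argmin satisfies $H_0$, and this holds for every $B>W$. That is exactly the statement "for all sufficiently large $B$."

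The only real obstacle is bookkeeping in the case where a formula belongs to both $H_0$ and $H_1$. I would fix the convention that formulas in $H_0$ receive weight $B$ and $H_1$ means $H\setminus H_0$, or else simply note that any extra soft weight only increases the violator's cost, so the bound still holds. A second point is whether the argmin exists when $X$ is infinite. Costs take only finitely many values, namely sums over subsets of the finite set $H$, so the minimum is attained and the argmin is nonempty. In any case, the conclusion concerns only the elements of the argmin, so the statement holds even if the argmin were empty.
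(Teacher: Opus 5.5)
Your proof is correct and is essentially the paper's argument: with $W$ (the paper's $C$) the total soft weight, any $H_0$-violator costs at least $B>W$, while the nonempty set $X\cap\Mod(H_0)$ contains a structure of cost at most $W$, so no violator can be a minimizer. Your side remarks are sound refinements that the paper leaves implicit: the convention for formulas lying in both $H_0$ and $H_1$, and the fact that the minimum is attained for infinite $X$ because costs take finitely many values.
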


\begin{proof}
Because $H_1$ is finite and has fixed weights, the maximum possible soft cost is some finite number $C$. Any model satisfying $H_0$ has hard cost $0$ and total cost at most $C$. Any model violating at least one hard formula has hard cost at least $B$. For $B>C$, no hard-violating model can be cost-minimal.
\end{proof}

\begin{theorem}[Cost update agrees with preferential update]\label{thm:costpref}
Assume $X$ is finite. For every real-valued ranking $r:X\to\R$ there exists a cost function $\ell:X\to\R_{\ge0}$ such that
\[
\argmin_X r=\argmin_X \ell.
\]
Conversely, every cost update is a preferential update for the ranking $\rho(\M)=\ell(\M)$.
\end{theorem}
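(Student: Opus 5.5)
The plan is to handle both directions by a shift of the ranking that leaves the set of minimizers unchanged, using finiteness of $X$ only to guarantee that minima exist. Throughout I read $\argmin_X f$ as $\{\M\in X: f(\M)\le f(\Nstr)\ \text{for all}\ \Nstr\in X\}$, which matches the definition of $\Min_\rho$ in the prompt-specification definition.

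For the forward direction, fix $r:X\to\R$. Since $X$ is finite, $m:=\min_{\M\in X} r(\M)$ exists whenever $X\neq\varnothing$; if $X=\varnothing$ both argmin sets are empty and any $\ell$ works. I will define
\[
\ell(\M):=r(\M)-m\ \ge 0 .
\]
Subtracting a constant is strictly order-preserving, so for all $\M,\Nstr$ we have $\ell(\M)\le\ell(\Nstr)$ iff $r(\M)\le r(\Nstr)$. Hence the two minimizer sets coincide. If one wants $\ell$ realized literally as a violation cost $\ell_H$ in the sense of Definition~\ref{def:violationcost}, I would add one step: pick, for each $\M\in X$, a sentence $\eta_\M$ that fails exactly at $\M$, with weight $w_\M=\ell(\M)$. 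This requires that the language separate points of $X$, and weights of zero must be dropped or replaced by the convention that a zero-weight instruction is omitted. I expect this realizability step to be the only real obstacle, and it is why I would state the theorem for abstract nonnegative cost functions first.

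For the converse, let $\ell$ be a cost function, for example $\ell_H$. Set $\rho(\M):=\ell(\M)$, taking values in the finite set $\ell(X)\subseteq\R_{\ge0}$. A finite subset of $\R$ is well-ordered, so $\rho$ is a legitimate ranking in the sense of a prompt specification. Then consider the prompt $p=(\varnothing,\rho)$. Since $\Mod(\varnothing)$ is every structure, $\mathsf{Upd}_p(X)=\Min_\rho(X)$, and this is by definition $\argmin_X\ell=\mathsf{CostUpd}_H(X)$. No finiteness is needed beyond ensuring well-ordering of the image; in general, well-foundedness of $\ell(X)$ suffices.
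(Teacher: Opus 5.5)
Your proposal is correct and follows the paper's proof: the forward direction uses the shift $\ell=r-\min_X r$, and the converse reads the cost as the rank (your explicit prompt $(\varnothing,\rho)$ with the finite, hence well-ordered, image $\ell(X)$ just spells out what the paper calls ``the definition'').

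Your extra remarks go beyond the paper but are not needed for the statement as given, which only asks for an abstract cost function $\ell:X\to\R_{\ge0}$. These are the empty-$X$ case and the realization of $\ell$ as a violation cost $\ell_H$ via point-separating sentences.
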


\begin{proof}
For the first direction, choose $m=\min_X r$ and set $\ell(\M)=r(\M)-m$. Then $\ell\ge0$ and the minimizers are exactly the minimizers of $r$. The converse is the definition of preferential update with rank equal to cost.
\end{proof}

This shows that weighted-instruction semantics and ranked-model semantics are two presentations of the same selection principle on finite model classes.

\section{In-context learning as semantic model expansion}\label{sec:icl}

In-context learning can be formalized without assuming that parameters change. The semantic state changes because the context extends the language with observed examples and adds the finite diagram of those examples. The direct-limit theory then represents what the entire example stream would determine.

\begin{definition}[Example-driven expansion chain]
Let $\Lang_0$ contain sorts $\mathbf{X}$ and $\mathbf{Y}$ and a function symbol $f:\mathbf{X}\to\mathbf{Y}$. Let $T_0\subseteq\Sent(\Lang_0)$ be consistent. Given an example stream
\[
E=(e_n)_{n\in\N}=((a_n,b_n))_{n\in\N},
\]
where $a_n$ and $b_n$ are represented by closed terms of the appropriate sorts, define recursively
\begin{align*}
\Lang_n&:=\Lang_{n-1}\cup\{c_n,d_n\},\\
\Delta_n&:=\{c_n=a_n,\ d_n=b_n,\ f(c_n)=d_n\},\\
T_n&:=\Cn_{\Lang_n}(T_{n-1}^{\up}\cup\Delta_n),\\
\mathcal{K}_n&:=\Mod(T_n).
\end{align*}
The chain $(\Lang_n,T_n,\mathcal{K}_n)_{n\in\N}$ is the semantic expansion chain generated by $E$.
\end{definition}

\begin{proposition}[Monotone expansion of information]\label{prop:monotoneT}
For every $n$,
\[
T_n^{\up}\subseteq T_{n+1}.
\]
Consequently, if $\M\models T_{n+1}$, then the $\Lang_n$-reduct $\M\restr\Lang_n$ is a model of $T_n$.
\end{proposition}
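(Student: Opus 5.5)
The inclusion $T_n^{\up}\subseteq T_{n+1}$ should follow directly from the recursive definition, and the reduct claim from the standard behaviour of satisfaction under reducts. I read $T_n^{\up}$ as $T_n$ regarded as a set of $\Lang_{n+1}$-sentences via the inclusion $\Lang_n\subseteq\Lang_{n+1}$. Every $\Lang_n$-sentence is literally an $\Lang_{n+1}$-sentence, since we only add the new constants $c_{n+1},d_{n+1}$, so this inclusion of sentence sets is unproblematic.

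\textbf{Step 1 (the inclusion).} By definition,
\[
T_{n+1}=\Cn_{\Lang_{n+1}}(T_n^{\up}\cup\Delta_{n+1}).
\]
The consequence operator $\Cn_{\Lang_{n+1}}$ is extensive: any set of $\Lang_{n+1}$-sentences is contained in its own deductive closure, because each member entails itself. Hence
\[
T_n^{\up}\subseteq T_n^{\up}\cup\Delta_{n+1}\subseteq T_{n+1}.
\]

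\textbf{Step 2 (reducts).} Let $\M\models T_{n+1}$ be an $\Lang_{n+1}$-structure and take any $\varphi\in T_n$. By Step 1, $\varphi\in T_{n+1}$, so $\M\models\varphi$. The reduct lemma says that for an $\Lang_n$-sentence $\varphi$ and an $\Lang_{n+1}$-structure $\M$,
\[
\M\models\varphi\iff \M\restr\Lang_n\models\varphi.
\]
This holds by induction on formulas, since interpretations of symbols of $\Lang_n$ are unchanged in the reduct. Therefore $\M\restr\Lang_n\models T_n$.

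\textbf{Main obstacle.} The only point needing care is bookkeeping of languages. One must confirm that $T_n^{\up}$ denotes the same sentences read in the larger signature, rather than some translation, so that the reduct lemma applies verbatim. In the many-sorted setting the sorts $\mathbf{X},\mathbf{Y}$ are shared, and the new symbols are only constants, so no sort-domain issues arise. I would state the reduct lemma explicitly and cite it as standard, e.g. from Hodges \cite{Hodges1993}.
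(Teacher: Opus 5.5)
Your proposal is correct and follows the paper's proof: the inclusion holds because $\Cn_{\Lang_{n+1}}$ contains its premises, and the reduct claim holds because $\Lang_n$-sentences have the same truth value in $\M$ and in $\M\restr\Lang_n$. Your remark that $T_n^{\up}$ is just $T_n$ read in the larger signature is a harmless clarification that the paper leaves implicit.
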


\begin{proof}
By construction,
\[
T_{n+1}=\Cn_{\Lang_{n+1}}(T_n^{\up}\cup\Delta_{n+1}).
\]
A consequence set contains its premises. The reduct statement follows because the $\Lang_n$-sentences true in $\M$ are exactly the $\Lang_n$-sentences true in the reduct.
\end{proof}

\begin{definition}[Admissible answer sets]
Let $q$ be a closed $\Lang_0$-term of sort $\mathbf{X}$ and let $\mathcal{C}_{\mathbf Y}$ be a fixed set of closed terms of sort $\mathbf{Y}$. Define
\[
\mathrm{Ans}_n(q):=\{t\in\mathcal{C}_{\mathbf Y}:T_n\cup\{f(q)=t\}\text{ is satisfiable}\}.
\]
\end{definition}

\begin{proposition}[Monotone narrowing of admissible answers]\label{prop:narrowing}
For every query $q$ and every $n$,
\[
\mathrm{Ans}_{n+1}(q)\subseteq\mathrm{Ans}_n(q).
\]
\end{proposition}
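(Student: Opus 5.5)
The argument takes a candidate $t\in\mathrm{Ans}_{n+1}(q)$ and shows directly that it lies in $\mathrm{Ans}_n(q)$, by restricting a witnessing model to the smaller language.

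\textbf{Step 1 (a witnessing model).} Since $t\in\mathrm{Ans}_{n+1}(q)$, the set $T_{n+1}\cup\{f(q)=t\}$ is satisfiable. We therefore fix an $\Lang_{n+1}$-structure $\M$ with $\M\models T_{n+1}$ and $\M\models f(q)=t$.

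\textbf{Step 2 (restrict to $\Lang_n$).} We pass to the reduct $\M\restr\Lang_n$. By Proposition~\ref{prop:monotoneT}, this reduct is a model of $T_n$.

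\textbf{Step 3 (the equation survives the restriction).} The sentence $f(q)=t$ uses only symbols of $\Lang_n$. Indeed, $q$ is a closed $\Lang_0$-term, $f\in\Lang_0$, and $t\in\mathcal{C}_{\mathbf Y}$ is a fixed closed term, which we read as an $\Lang_0$-term, so all three lie in $\Lang_0\subseteq\Lang_n$. Truth of an $\Lang_n$-sentence is unchanged under reduct, so $\M\restr\Lang_n\models f(q)=t$. This step needs care only at one point: the terms in $\mathcal{C}_{\mathbf Y}$ must be understood in the base language, or at least in $\Lang_n$. This holds because $\mathcal{C}_{\mathbf Y}$ is fixed independently of $n$, which is implicit in the definition of $\mathrm{Ans}_n$, since otherwise $\mathrm{Ans}_n$ would not be well-formed for small $n$.

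\textbf{Step 4 (conclude).} Steps 2 and 3 together show that $\M\restr\Lang_n$ witnesses the satisfiability of $T_n\cup\{f(q)=t\}$. Hence $t\in\mathrm{Ans}_n(q)$, which gives $\mathrm{Ans}_{n+1}(q)\subseteq\mathrm{Ans}_n(q)$.
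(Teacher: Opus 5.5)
Your proof is correct and matches the paper's argument step for step: take a model of $T_{n+1}\cup\{f(q)=t\}$, pass to its $\Lang_n$-reduct (which satisfies $T_n$ by Proposition~\ref{prop:monotoneT}), and observe that $f(q)=t$ is an $\Lang_n$-sentence and so survives the reduct. Your extra remark about $\mathcal{C}_{\mathbf Y}$ corresponds to the paper's tacit assumption that $t$ already lies in $\Lang_n$. That assumption is not actually needed: $\M$ itself satisfies $T_n\cup\{f(q)=t\}$ because $T_n^{\up}\subseteq T_{n+1}$.
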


\begin{proof}
If $t\in\mathrm{Ans}_{n+1}(q)$, choose $\M\models T_{n+1}\cup\{f(q)=t\}$. By Proposition~\ref{prop:monotoneT}, the $\Lang_n$-reduct of $\M$ satisfies $T_n$. Since $f,q,t$ are already in $\Lang_n$, the reduct also satisfies $f(q)=t$. Hence $t\in\mathrm{Ans}_n(q)$.
\end{proof}

\subsection{Types and answer isolation}

The shrinking of answer sets can be restated as type isolation. This formulation is useful because it expresses what examples do semantically: they reduce the space of complete types compatible with the context.

\begin{definition}[Contextual query type]
Let $x$ be a variable of sort $\mathbf{X}$. The $n$-stage type of a query term $q$ is
\[
\tp_n(q):=\{\theta(x)\in\Lang_n: T_n\models \theta(q)\}.
\]
For an answer term $t$, say that $\tp_n(q)$ \emph{isolates} $t$ if
\[
T_n\models f(q)=t.
\]
\end{definition}

\begin{lemma}[Isolation equals singleton admissibility]\label{lem:singleton}
Assume distinct terms in $\mathcal{C}_{\mathbf Y}$ denote distinct candidate answers in all models of $T_n$. Then $\tp_n(q)$ isolates $t$ if and only if
\[
\mathrm{Ans}_n(q)=\{t\}.
\]
\end{lemma}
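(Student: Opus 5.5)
The plan is to prove the two directions separately. Both rest on reading the hypothesis as saying, for distinct $s,t\in\mathcal{C}_{\mathbf Y}$, that $T_n\models s\neq t$. I also read the lemma as tacitly assuming that $T_n$ is consistent, which holds by Proposition~\ref{prop:monotoneT} when $T_0\cup\bigcup_n\Delta_n$ is satisfiable.

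For the forward direction, assume $T_n\models f(q)=t$. Since $T_n$ is consistent, pick $\M\models T_n$. Then $\M\models f(q)=t$, so $t\in\mathrm{Ans}_n(q)$. Now take any $s\in\mathcal{C}_{\mathbf Y}$ with $s\neq t$. Any model of $T_n\cup\{f(q)=s\}$ would satisfy $s=f(q)=t$, which contradicts $T_n\models s\neq t$. Hence $s\notin\mathrm{Ans}_n(q)$, and therefore $\mathrm{Ans}_n(q)=\{t\}$.

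For the converse, assume $\mathrm{Ans}_n(q)=\{t\}$ and suppose toward a contradiction that $T_n\not\models f(q)=t$. Then there is $\M\models T_n$ with $\M\models f(q)\neq t$. The step I expect to be the main obstacle is finding some $s\in\mathcal{C}_{\mathbf Y}$ with $\M\models f(q)=s$. That would put $s\in\mathrm{Ans}_n(q)$ with $s\neq t$, contradicting the singleton assumption. The difficulty is that such an $s$ exists only if the candidate terms exhaust the possible values of $f(q)$ in models of $T_n$.

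Without exhaustiveness the converse fails. For example, if $T_n$ says only $f(q)\neq s$ for a two-element candidate set $\{s,t\}$, then $\mathrm{Ans}_n(q)=\{t\}$, yet $T_n\not\models f(q)=t$. I would therefore make explicit that "distinct terms denote distinct candidate answers" is intended to include the condition that $\mathcal{C}_{\mathbf Y}$ names every candidate answer. Concretely, every model of $T_n$ interprets $f(q)$ as the denotation of some term in $\mathcal{C}_{\mathbf Y}$. When $\mathcal{C}_{\mathbf Y}$ is finite, this is the single sentence $\bigvee_{s\in\mathcal{C}_{\mathbf Y}} f(q)=s$ lying in $T_n$.

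Under that reading the converse goes through. The model $\M$ yields $s$ with $\M\models f(q)=s$. Since $\M\models f(q)\neq t$, we get $s\neq t$, which gives the contradiction and hence $T_n\models f(q)=t$, i.e. $\tp_n(q)$ isolates $t$.
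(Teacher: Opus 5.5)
Your argument is the paper's proof. The forward direction uses distinctness of candidates to exclude every $u\neq t$, and the converse takes a model of $T_n\cup\{f(q)\neq t\}$ and finds a candidate $u\neq t$ naming $f(q)$ there. The paper's proof also assumes, without stating it, that $f(q)$ is always interpreted among the candidate terms, so your explicit exhaustiveness hypothesis is exactly the assumption it relies on; your forward direction is slightly cleaner, since nonemptiness needs only consistency of $T_n$ and $t\in\mathcal{C}_{\mathbf Y}$. One small fix: for your counterexample to satisfy the lemma's distinctness hypothesis, $T_n$ should also contain $s\neq t$.
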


\begin{proof}
If $T_n\models f(q)=t$, then any satisfiable extension $T_n\cup\{f(q)=u\}$ forces $u=t$ by distinctness of candidates. Hence the answer set is $\{t\}$, provided it is nonempty. Nonemptiness follows from consistency of $T_n$ and interpretation of $f(q)$ among the candidate terms. Conversely, if $\mathrm{Ans}_n(q)=\{t\}$ and $T_n\not\models f(q)=t$, then $T_n\cup\{f(q)\neq t\}$ is satisfiable. In such a model $f(q)$ must be denoted by some candidate $u\neq t$, giving $u\in\mathrm{Ans}_n(q)$, a contradiction.
\end{proof}

\begin{theorem}[Finite stabilization by compactness]\label{thm:stabilization}
Let
\[
T_{\omega}:=\bigcup_{n\in\N}T_n.
\]
If $q$ is a query term and
\[
T_{\omega}\models f(q)=t,
\]
then there exists $N\in\N$ such that
\[
T_N\models f(q)=t.
\]
In particular, if the direct-limit theory determines a unique answer to $q$, some finite prefix of the context already determines it.
\end{theorem}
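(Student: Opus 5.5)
The plan is to apply the compactness theorem for first-order logic to the union theory $T_\omega$, which lives in the union language $\Lang_\omega:=\bigcup_n\Lang_n$. The entailment $T_\omega\models f(q)=t$ means that $T_\omega\cup\{f(q)\neq t\}$ has no $\Lang_\omega$-model. By compactness, some finite subset $\Gamma\cup\{f(q)\neq t\}$ with $\Gamma\subseteq T_\omega$ finite is already unsatisfiable. Equivalently, by the completeness theorem, a proof of $f(q)=t$ from $T_\omega$ uses only finitely many premises. Either way, we obtain a finite $\Gamma\subseteq T_\omega$ with $\Gamma\models f(q)=t$.

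Next I locate $\Gamma$ inside a single stage. Each $\gamma\in\Gamma$ lies in some $T_{n_\gamma}$. By Proposition~\ref{prop:monotoneT}, the chain satisfies $T_n\subseteq T_{n+1}$, where $T_n^{\up}$ is read as $T_n$ regarded as a set of $\Lang_{n+1}$-sentences. Setting $N:=\max_{\gamma\in\Gamma}n_\gamma$ (or $N=0$ if $\Gamma=\varnothing$) therefore gives $\Gamma\subseteq T_N$. Every sentence in $\Gamma\cup\{f(q)=t\}$ is then an $\Lang_N$-sentence, since $q$, $t$ and $f$ already belong to $\Lang_0$.

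The step I expect to need care is the change of language: $\Gamma\models f(q)=t$ was obtained relative to $\Lang_\omega$-structures, but I need $T_N\models f(q)=t$ relative to $\Lang_N$-structures. I would handle this with the standard expansion argument. Let $\M\models T_N$ be an $\Lang_N$-structure. Expand it arbitrarily to an $\Lang_\omega$-structure $\M^+$ by interpreting the new constants $c_n,d_n$ for $n>N$ as any elements of the appropriate sorts. The sorts are nonempty, as is usual in many-sorted semantics, so such interpretations exist. Satisfaction of $\Lang_N$-sentences is unchanged by expansion, so $\M^+\models\Gamma$, hence $\M^+\models f(q)=t$, and therefore $\M\models f(q)=t$. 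This gives $T_N\models f(q)=t$. Since $T_N=\Cn_{\Lang_N}(\cdots)$ is deductively closed, we in fact get $f(q)=t\in T_N$.

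For the ``in particular'' clause, suppose the direct-limit theory determines a unique answer, meaning $T_\omega\models f(q)=t$ for some candidate $t$. The above argument applies directly. Moreover, by Proposition~\ref{prop:narrowing} and Lemma~\ref{lem:singleton}, the answer set $\mathrm{Ans}_n(q)$ equals $\{t\}$ for all $n\ge N$, provided the lemma's distinctness hypothesis holds.
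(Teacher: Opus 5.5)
Your proposal is correct and is essentially the paper's proof: compactness gives a finite $\Sigma\subseteq T_\omega$ with $\Sigma\cup\{f(q)\neq t\}$ unsatisfiable, and because the chain is increasing, $\Sigma$ lies inside a single $T_N$. Your explicit expansion argument for passing from $\Lang_\omega$-structures to $\Lang_N$-structures is a detail the paper leaves implicit; the one tacit assumption, which the paper shares, is that $t$ is an $\Lang_0$-term, and if $t$ mentioned a later constant $d_m$ you would simply also take $N\ge m$.
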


\begin{proof}
The entailment $T_{\omega}\models f(q)=t$ means that
\[
T_{\omega}\cup\{f(q)\neq t\}
\]
is inconsistent. By compactness, some finite $\Sigma\subseteq T_{\omega}$ already makes
\[
\Sigma\cup\{f(q)\neq t\}
\]
inconsistent. Since the chain is increasing, there exists $N$ with $\Sigma\subseteq T_N$. Thus $T_N\cup\{f(q)\neq t\}$ is inconsistent, equivalently $T_N\models f(q)=t$.
\end{proof}

The theorem is not a claim that every finite prompt determines every answer. It says that any answer determined by the infinite semantic expansion has a finite certificate. This is the compactness core behind finite in-context evidence.

\begin{corollary}[Finite certificate for answer exclusion]\label{cor:exclusion}
If $t\notin\mathrm{Ans}_{\omega}(q)$, where
\[
\mathrm{Ans}_{\omega}(q):=\{u:T_{\omega}\cup\{f(q)=u\}\text{ is satisfiable}\},
\]
then there exists $N$ such that $t\notin\mathrm{Ans}_N(q)$.
\end{corollary}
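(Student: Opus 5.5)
The plan is to mirror the proof of Theorem~\ref{thm:stabilization}: we apply compactness to the unsatisfiable set witnessing exclusion, not to an entailment.

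First, unpack the hypothesis. By the definition of $\mathrm{Ans}_{\omega}(q)$, the assumption $t\notin\mathrm{Ans}_{\omega}(q)$ means that
\[
T_{\omega}\cup\{f(q)=t\}
\]
is unsatisfiable. Here $f$, $q$ and $t$ are $\Lang_0$-symbols or closed $\Lang_0$-terms, so this is a set of sentences in the union language $\bigcup_n\Lang_n$.

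Second, apply the compactness theorem. Some finite $\Sigma\subseteq T_{\omega}$ already makes $\Sigma\cup\{f(q)=t\}$ unsatisfiable. Each element of $\Sigma$ lies in some $T_n$. By Proposition~\ref{prop:monotoneT} the chain is increasing, with $T_n^{\up}\subseteq T_{n+1}$, so taking $N$ to be the maximum of these finitely many indices gives $\Sigma\subseteq T_N$. If $\Sigma$ is empty, take $N=0$.

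Third, conclude. Any model of $T_N\cup\{f(q)=t\}$ would satisfy $\Sigma\cup\{f(q)=t\}$, which is impossible. Hence $t\notin\mathrm{Ans}_N(q)$.

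The only point needing care is the passage between languages in the second and third steps. Satisfiability in $\Lang_N$ versus the union language is harmless: a model of $T_N\cup\{f(q)=t\}$ in $\Lang_N$ satisfies $\Sigma\subseteq\Sent(\Lang_N)$, since satisfaction of $\Lang_N$-sentences does not depend on how further symbols are interpreted. Conversely, compactness applied in the union language yields a finite $\Sigma$ involving only finitely many constants, all lying in some $\Lang_N$.

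As a remark, Proposition~\ref{prop:narrowing} then shows that $t\notin\mathrm{Ans}_M(q)$ for every $M\ge N$, so the exclusion certificate persists along the rest of the chain.
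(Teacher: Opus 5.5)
Your proof is correct and is essentially the paper's argument. The paper rewrites $t\notin\mathrm{Ans}_{\omega}(q)$ as $T_{\omega}\models f(q)\neq t$ and invokes Theorem~\ref{thm:stabilization}, whose proof is the same compactness-plus-increasing-chain step you carry out directly on $T_{\omega}\cup\{f(q)=t\}$; working directly also avoids applying that theorem, stated only for $f(q)=t$, to a sentence of the form $f(q)\neq t$.
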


\begin{proof}
The statement $t\notin\mathrm{Ans}_{\omega}(q)$ means $T_{\omega}\models f(q)\neq t$. Apply Theorem~\ref{thm:stabilization} to the formula $f(q)\neq t$.
\end{proof}

\begin{proposition}[Indistinguishable-prefix lower bound]\label{prop:prefixlower}
Let $\chi(f(q))$ be a target property. If there exist two models $\M,\Nstr\models T_n$ such that
\[
\M\models\chi(f(q))
\qquad\text{and}\qquad
\Nstr\models\neg\chi(f(q)),
\]
then no prefix of length at most $n$ certifies the target. In particular, the least certificate length is strictly larger than $n$.
\end{proposition}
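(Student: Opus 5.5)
The plan is a direct semantic argument using only the monotonicity of the expansion chain (Proposition~\ref{prop:monotoneT}) and the definition of a certificate: a prefix of length $m$ certifies the target when $T_m\models\chi(f(q))$. Fix $m\le n$ and assume, for contradiction, that $T_m\models\chi(f(q))$. Before doing so, I will note that $\chi(f(q))$ is an $\Lang_0$-sentence (the query $q$ and the symbol $f$ already live in $\Lang_0$), so it belongs to every $\Lang_m$ and the entailment $T_m\models\chi(f(q))$ is meaningful at each stage.

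The key step is to transport the model $\Nstr$ down to stage $m$. By iterating Proposition~\ref{prop:monotoneT} we get $T_m^{\up}\subseteq T_n$ for $m\le n$, since each $T_{k+1}$ contains $T_k$ and the chain is increasing by induction on $n-m$. Hence the $\Lang_m$-reduct $\Nstr\restr\Lang_m$ is a model of $T_m$. Satisfaction of the $\Lang_0$-sentence $\neg\chi(f(q))$ is unchanged by taking reducts to any language containing $\Lang_0$. Therefore $\Nstr\restr\Lang_m\models T_m\cup\{\neg\chi(f(q))\}$, which contradicts $T_m\models\chi(f(q))$. So no prefix of length $m\le n$ certifies the target, and the least certificate length, if one exists at all, is at least $n+1$.

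The model $\M$ is not needed for this argument. It only shows that the failure is genuine indistinguishability: the prefix is consistent with both $\chi(f(q))$ and $\neg\chi(f(q))$, so it also fails to certify $\neg\chi(f(q))$. I will remark on this symmetric conclusion in passing.

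The only delicate point, and where I expect care to be needed, is the statement that the prefix of length $0$ (the theory $T_0$) and the smaller languages behave correctly under reducts. The reduct argument needs the target sentence to be in the language of every stage, which is why I will state explicitly that $\chi$ is an $\Lang_0$-formula. If $\chi$ were allowed to mention later constants $c_k,d_k$, then prefixes shorter than $k$ could not even express the target, and the claim would hold vacuously at those stages.
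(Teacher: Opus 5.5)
Your argument is correct and is essentially the paper's: the paper lifts the hypothetical entailment $T_m\models\chi(f(q))$ up to $T_n$ by monotonicity of the chain and refutes it with $\Nstr$, whereas you push $\Nstr$ down to the reduct $\Nstr\restr\Lang_m\models T_m$, which is the unpacked form of that same monotonicity step. Two points the paper leaves implicit are made explicit in your version: only $\Nstr$ is actually used, and $\chi(f(q))$ must lie in $\Lang_m$ for the entailment at stage $m$ to make sense.
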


\begin{proof}
If some $T_m$ with $m\le n$ entailed $\chi(f(q))$, then $T_n$ would also entail $\chi(f(q))$ by monotonicity of the expansion chain. But $\Nstr\models T_n\cup\{\neg\chi(f(q))\}$, contradicting entailment. Hence no such $m$ exists.
\end{proof}

This lower bound is the model-theoretic form of underdetermination in in-context learning. To prove that a context is too short, it is enough to build two extensions of the same finite prefix that agree with all observed examples but disagree on the queried property.

\subsection{Pair-separator certificates and teaching dimension}

Compactness proves that first-order certificates are finite when they exist, but it does not say which examples are responsible. In finite deterministic task families the responsible examples have an exact combinatorial description.

\begin{definition}[Finite deterministic task family]
Let $H$ be a finite set of hypotheses, let $X$ be a finite set of possible example inputs, let $B$ be a finite output alphabet, and let
\[
h:X\cup\{q\}\to B
\]
be the response function associated with each $h\in H$. After observing a labeled example set $E\subseteq X$, the version space is
\[
H_E:=\{h\in H:(\forall x\in E)\ h(x)=b_x\},
\]
where $b_x$ is the observed label. The query $q$ is \emph{determined by $E$} if all hypotheses in $H_E$ have the same value at $q$.
\end{definition}

For a fixed realized version space $V\subseteq H$, define the query-disagreement pairs
\[
P_q(V):=\{\{h,h'\}\subseteq V:h(q)\neq h'(q)\}.
\]
For $x\in X$, let
\[
\Sep_x(V):=\{\{h,h'\}\in P_q(V):h(x)\neq h'(x)\}.
\]
Thus $x$ separates precisely those pairs of surviving hypotheses that would answer the query differently.

\begin{theorem}[Exact pair-separator characterization]\label{thm:pairseparator}
Let $V$ be the version space after a fixed background context. A set $E\subseteq X$ determines $q$ relative to $V$ if and only if
\[
P_q(V)\subseteq \bigcup_{x\in E}\Sep_x(V).
\]
Equivalently, finite-context determination of $q$ is exactly a hitting problem over query-disagreement pairs.
\end{theorem}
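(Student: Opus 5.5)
The plan is to unfold both sides of the equivalence into statements about pairs of surviving hypotheses, and then prove each direction by a direct argument on a single pair. First I will fix what ``$E$ determines $q$ relative to $V$'' means. The observed labels $b_x$ are produced by some target $h^\ast\in V$, and the refined version space is $V_E:=\{h\in V:(\forall x\in E)\ h(x)=h^\ast(x)\}$. The hitting condition $P_q(V)\subseteq\bigcup_{x\in E}\Sep_x(V)$ does not mention $h^\ast$. So the equivalence must be read with determination holding for every realizable labeling, that is, for every choice of target $h^\ast\in V$; this is the certificate reading used throughout this subsection.

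\textbf{Sufficiency.} Assume the hitting condition and let $h^\ast\in V$ be arbitrary. Take $h,h'\in V_E$. Both agree with $h^\ast$ on $E$, so $h(x)=h'(x)$ for every $x\in E$. If we had $h(q)\neq h'(q)$, then $\{h,h'\}\in P_q(V)$. By the hitting condition some $x\in E$ would then satisfy $h(x)\neq h'(x)$, which is a contradiction. Hence every hypothesis in $V_E$ takes the same value at $q$, so $E$ determines $q$.

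\textbf{Necessity.} I will argue by contraposition. Suppose some pair $\{h,h'\}\in P_q(V)$ lies in no $\Sep_x(V)$ with $x\in E$. Then $h(x)=h'(x)$ for all $x\in E$. Take $h^\ast:=h$ as the target. Then $h$ survives trivially, and $h'$ survives because it agrees with $h$ on $E$. Both lie in $V_E$ and disagree at $q$, so $E$ fails to determine $q$ for this realizable labeling.

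The ``equivalently, a hitting problem'' clause is then a restatement. The ground set is $P_q(V)$, each $x\in X$ covers the set $\Sep_x(V)$, and a determining $E$ is exactly a cover. Minimum forcing subcontexts are therefore minimum set covers, or equivalently minimum hitting sets in the dual formulation, which is what Theorem~\ref{thm:certnp} will use.

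The only real obstacle is the quantifier over labels, not the combinatorics. For one fixed realized labeling the ``only if'' direction can fail: a pair that is never separated can still disagree at $q$ if both hypotheses are eliminated by $E$. I would state the uniform-over-targets reading explicitly at the start of the proof. I would also add a remark that for a fixed target $h^\ast$ the correct criterion replaces $P_q(V)$ by the disagreement pairs inside the realized version space. That variant follows from the same sufficiency argument, with $V_E$ taken as the realized version space after the fixed background context.
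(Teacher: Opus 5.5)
Your proof is correct and uses the same pairwise argument as the paper in both directions. The one real difference is that you state the uniform-over-targets reading explicitly and choose $h^\ast:=h$ in the necessity direction, so that both members of an unseparated pair survive. That choice is exactly what the paper's forward step (``the two hypotheses cannot both remain compatible with the labels on $E$, therefore some $x\in E$ separates them'') silently needs. For a single fixed labeling that step is a non sequitur, since both hypotheses could be eliminated while agreeing on $E$.
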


\begin{proof}
Assume first that $E$ determines $q$. Let $\{h,h'\}\in P_q(V)$. Since $h(q)\neq h'(q)$, the two hypotheses cannot both remain compatible with the labels on $E$. Therefore there exists some $x\in E$ such that $h(x)\neq h'(x)$, so $\{h,h'\}\in\Sep_x(V)$.

Conversely, suppose every pair in $P_q(V)$ is separated by some $x\in E$. Let $h,h'\in V$ agree on all labels in $E$. Then $\{h,h'\}$ cannot lie in any $\Sep_x(V)$ with $x\in E$. By the assumed covering condition, it cannot belong to $P_q(V)$. Hence $h(q)=h'(q)$. All hypotheses compatible with $E$ therefore agree on $q$, so $E$ determines $q$.
\end{proof}

\begin{definition}[Query teaching dimension]
The \emph{query teaching dimension} of $q$ relative to $V$ is
\[
\TD_q(V):=\min\{|E|:E\subseteq X\text{ determines }q\text{ relative to }V\}.
\]
If no such $E$ exists, set $\TD_q(V)=\infty$.
\end{definition}

Theorem~\ref{thm:pairseparator} says that $\TD_q(V)$ is the minimum size of a set of examples whose separator sets cover $P_q(V)$. This turns semantic context selection into an exact finite optimization problem.

\begin{theorem}[Minimal certificate extraction is NP-complete]\label{thm:certnp}
The decision problem
\[
\TD_q(V)\le k
\]
is NP-complete even when the output alphabet is binary.
\end{theorem}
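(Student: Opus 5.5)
Membership in NP and hardness are handled separately; throughout, the instance consists of finite $H$ (or $V$), $X$, $B$, $q$, and an explicit table of the values $h(x)$ and $h(q)$, together with $k$. For \textbf{membership in NP}, the certificate is a set $E\subseteq X$ with $|E|\le k$. By Theorem~\ref{thm:pairseparator}, $E$ determines $q$ relative to $V$ iff every pair in $P_q(V)$ lies in some $\Sep_x(V)$ with $x\in E$. This is checked by enumerating the $O(|V|^2)$ pairs and, for each, scanning the $|E|$ columns. That is polynomial in the size of the table.

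For \textbf{NP-hardness}, I would reduce from \textsc{Set Cover}. Let the instance be a universe $U=\{u_1,\dots,u_m\}$, a family $S_1,\dots,S_n\subseteq U$, and a bound $k$. The plan is to make the query-disagreement pairs correspond exactly to elements of $U$, and the separator sets correspond exactly to the $S_j$. Take binary outputs $B=\{0,1\}$ and example inputs $X=\{x_1,\dots,x_n\}$. Take hypotheses $V=\{h_0\}\cup\{h_{u}:u\in U\}$, defined as follows.
\begin{itemize}
\item The base hypothesis $h_0$ has $h_0(q)=0$ and $h_0(x_j)=0$ for all $j$.
\item Each element hypothesis $h_u$ has $h_u(q)=1$ and $h_u(x_j)=1$ iff $u\in S_j$.
\end{itemize}
Then $P_q(V)=\{\{h_0,h_u\}:u\in U\}$, since all the $h_u$ agree at $q$. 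Moreover $\{h_0,h_u\}\in\Sep_{x_j}(V)$ iff $u\in S_j$. By Theorem~\ref{thm:pairseparator}, a set $E=\{x_j:j\in J\}$ determines $q$ iff $\{S_j\}_{j\in J}$ covers $U$. Hence $\TD_q(V)\le k$ iff the set-cover instance has a cover of size at most $k$.

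The reduction is clearly polynomial: the table has $(m+1)(n+1)$ bits. We may take $V=H$, with no background context, so the theorem's ``relative to $V$'' formulation is met directly.

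The main point of care, rather than a real obstacle, is twofold. First, the disagreement pairs must be exactly the $m$ intended ones, which requires all the $h_u$ to share a query value; if they did not, spurious pairs $\{h_u,h_{u'}\}$ would appear. Second, the hypotheses need not be pairwise distinct as functions for the argument to work. If two elements $u,u'$ lie in the same sets, $h_u$ and $h_{u'}$ coincide as functions. This is harmless: either identify them, or treat $V$ as an indexed family, and the covering equivalence is unaffected. A further edge case is an element $u$ lying in no $S_j$. Then $\{h_0,h_u\}$ is unseparable, $\TD_q=\infty$, and the set-cover instance is likewise infeasible, so the equivalence still holds.
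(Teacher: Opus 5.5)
Your proof is correct and follows the paper's route: NP membership comes from checking the covering condition of Theorem~\ref{thm:pairseparator}, and hardness comes from the same binary-label Set Cover reduction, in which query-disagreement pairs encode universe elements and separator sets encode the $S_j$. The only difference is cosmetic: you use a single all-zero base hypothesis $h_0$, whereas the paper introduces one copy $a_u$ per element; those copies are identical as functions, so the two constructions coincide.
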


\begin{proof}
Membership in NP is immediate: a candidate set $E$ of at most $k$ examples can be checked by verifying the covering condition in Theorem~\ref{thm:pairseparator}.

For hardness, reduce Set Cover to the certificate problem. Let a Set Cover instance have universe $U=\{u_1,\ldots,u_m\}$, subsets $S_1,\ldots,S_r\subseteq U$, and budget $k$. Create examples $x_1,\ldots,x_r$, one for each subset, and a query $q$. For every element $u\in U$, create two hypotheses $a_u$ and $b_u$. Define binary labels by
\[
a_u(q)=0,
\qquad
b_u(q)=1.
\]
For example $x_i$, set
\[
a_u(x_i)=0,
\qquad
b_u(x_i)=
\begin{cases}
1,&u\in S_i,\\
0,&u\notin S_i.
\end{cases}
\]
Let $V:=\{a_u,b_u:u\in U\}$. The pair $\{a_u,b_u\}$ is separated by $x_i$ exactly when $u\in S_i$. If a collection of examples $E$ determines $q$, then in particular it separates every pair $\{a_u,b_u\}$, so the corresponding subsets cover every element of $U$. Conversely, if subsets indexed by $I$ cover $U$, then the examples $\{x_i:i\in I\}$ separate every pair $\{a_v,b_u\}$ with different query value, because $u$ is covered by some selected $S_i$ and then $b_u(x_i)=1$ while every $a_v(x_i)=0$. Hence they determine $q$. Therefore a size-$k$ context certificate exists exactly when the Set Cover instance has a size-$k$ cover. Since Set Cover is NP-complete \cite{Karp1972}, the certificate decision problem is NP-complete.
\end{proof}

This theorem is the main finite-context obstruction. Even if the semantic presentation is exact and the hypothesis class is finite, extracting the smallest explanatory context is computationally hard. Thus benchmark success does not automatically yield a short human-checkable certificate; the certificate itself is a nontrivial combinatorial object.

\subsection{Conservative expansion and genuine information}

Adding names to the language does not itself add information about old symbols. The information comes from diagrams and constraints.

\begin{definition}[Conservative context step]
The step $T_n\subseteq T_{n+1}$ is \emph{conservative over $\Lang_n$} if every $\Lang_n$-sentence $\varphi$ satisfying
\[
T_{n+1}\models\varphi
\]
already satisfies
\[
T_n\models\varphi.
\]
\end{definition}

\begin{proposition}[When examples change old consequences]\label{prop:nonconservative}
The step from $T_n$ to $T_{n+1}$ is nonconservative over $\Lang_n$ exactly when there exists an old-language sentence $\varphi\in\Sent(\Lang_n)$ such that
\[
T_n\cup\{\neg\varphi\}\text{ is satisfiable}
\qquad\text{but}\qquad
T_{n+1}\models\varphi.
\]
\end{proposition}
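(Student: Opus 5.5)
The plan is to unfold the definition of conservativity and negate it, using completeness of first-order logic to move between ``$T_n\not\models\varphi$'' and ``$T_n\cup\{\neg\varphi\}$ is satisfiable''. By the definition of a conservative context step, the step is \emph{nonconservative} over $\Lang_n$ exactly when some $\varphi\in\Sent(\Lang_n)$ satisfies $T_{n+1}\models\varphi$ but $T_n\not\models\varphi$. So the only work is to identify $T_n\not\models\varphi$ with satisfiability of $T_n\cup\{\neg\varphi\}$.

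For the forward direction, suppose the step is nonconservative and pick such a $\varphi$. Since $T_n\not\models\varphi$, there is a model $\M\models T_n$ with $\M\not\models\varphi$. Because $\varphi$ is a sentence, $\M\models\neg\varphi$, so $\M$ witnesses that $T_n\cup\{\neg\varphi\}$ is satisfiable. Together with $T_{n+1}\models\varphi$, this is the required witness.

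For the converse, suppose $\varphi\in\Sent(\Lang_n)$ has $T_n\cup\{\neg\varphi\}$ satisfiable and $T_{n+1}\models\varphi$. Take a model $\M$ of $T_n\cup\{\neg\varphi\}$. Then $T_n\not\models\varphi$, since $\M$ is a model of $T_n$ in which $\varphi$ fails. Hence $\varphi$ is an $\Lang_n$-sentence entailed by $T_{n+1}$ but not by $T_n$, which is exactly failure of conservativity.

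I do not expect a real obstacle here. The one point to state explicitly is that ``$T_n\models\varphi$'' is read in $\Lang_n$-structures, while $T_{n+1}\models\varphi$ is read in $\Lang_{n+1}$-structures. For an $\Lang_n$-sentence these readings agree under reducts and expansions, by the reduct remark in Proposition~\ref{prop:monotoneT}. It is also worth remarking that, by Proposition~\ref{prop:monotoneT}, any such $\varphi$ is genuinely new information carried by the diagram $\Delta_{n+1}$ rather than by the added names.
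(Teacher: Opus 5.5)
Your proof is correct and is essentially the paper's argument: both simply negate the definition of a conservative step and use that $T_n\not\models\varphi$ is equivalent to satisfiability of $T_n\cup\{\neg\varphi\}$. Since $\models$ here is semantic entailment, that equivalence holds by definition, so your appeal to completeness is unnecessary, though harmless.
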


\begin{proof}
This is the negation of conservativity written out. If the extension is nonconservative, there is an old sentence entailed by the extension but not by the base. Not being entailed by the base is equivalent to satisfiability of $T_n\cup\{\neg\varphi\}$. The converse is immediate.
\end{proof}

Thus an example is semantically informative about prior vocabulary exactly when it eliminates at least one old-language possibility.

\subsection{No purely infinite first-order answer jumps}

Compactness also gives a negative result: first-order entailments cannot appear only at the infinite limit while being absent from every finite stage.

\begin{theorem}[No infinite-only first-order certificate]\label{thm:noinfiniteonly}
Let $\psi$ be a sentence in the union language $\bigcup_n\Lang_n$. If
\[
T_n\not\models\psi
\qquad\text{for every }n,
\]
then
\[
T_{\omega}\not\models\psi.
\]
Equivalently, if $T_{\omega}\models\psi$, then $T_N\models\psi$ for some finite $N$.
\end{theorem}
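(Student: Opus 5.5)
The two formulations in the statement are contrapositives of each other, so it suffices to prove the second: if $T_{\omega}\models\psi$, then $T_N\models\psi$ for some finite $N$. The argument is the same compactness argument as in Theorem~\ref{thm:stabilization}, with the atomic query formula $f(q)=t$ replaced by an arbitrary sentence $\psi$. The one new point is that $\psi$ may use constants $c_k,d_k$ that first appear at some finite stage $k$. So before applying compactness I will fix the language so that $T_N\models\psi$ is meaningful.

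\textbf{Step 1: fix the language.} Since $\psi$ is a single sentence, it mentions only finitely many symbols of $\bigcup_n\Lang_n$. Hence there is $n_0$ with $\psi\in\Sent(\Lang_{n_0})$, and $T_n\models\psi$ makes sense for every $n\ge n_0$. I read the hypothesis ``$T_n\not\models\psi$ for every $n$'' as ranging over the stages where $\psi$ is in the language. I will note explicitly that for $n<n_0$ the question does not arise, or can be handled by viewing $T_n$ as an $\Lang_{n_0}$-theory, which does not change its consequences among the old sentences.

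\textbf{Step 2: apply compactness.} Suppose $T_\omega\models\psi$. Then $T_\omega\cup\{\neg\psi\}$ has no model in the union language. By compactness there is a finite $\Sigma\subseteq T_\omega$ such that $\Sigma\cup\{\neg\psi\}$ is unsatisfiable. Each element of $\Sigma$ lies in some $T_n$. By Proposition~\ref{prop:monotoneT}, $T_n\subseteq T_{n+1}$, because $T_n^{\up}$ is just $T_n$ read in the larger language. Taking $N\ge n_0$ larger than all these indices gives $\Sigma\subseteq T_N$ and $\psi\in\Sent(\Lang_N)$.

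\textbf{Step 3: transfer unsatisfiability between languages.} This is the only delicate point, and I expect it to be the main obstacle. Unsatisfiability of $T_N\cup\{\neg\psi\}$ was obtained for structures in the union language, but $T_N\models\psi$ is a statement about $\Lang_N$-structures. I will close the gap with the standard fact that any $\Lang_N$-structure can be expanded to a structure for the larger language by interpreting the extra constants arbitrarily, and that such an expansion preserves the truth of $\Lang_N$-sentences. Therefore, if some $\Lang_N$-structure satisfied $T_N\cup\{\neg\psi\}$, its expansion would satisfy $\Sigma\cup\{\neg\psi\}$, a contradiction. Hence $T_N\models\psi$.

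Taking the contrapositive yields the first formulation of the theorem.
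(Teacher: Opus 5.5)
Your proposal is correct and is essentially the paper's argument: compactness applied to $T_\omega\cup\{\neg\psi\}$, plus the observation that a finite subset of the increasing union lies in some $T_N$, with $N$ chosen large enough that $\psi\in\Sent(\Lang_N)$ (the paper does the same language bookkeeping). The only difference is cosmetic: the paper proves the first formulation directly via finite satisfiability and cites Theorem~\ref{thm:stabilization} for the second, whereas you prove the second and take the contrapositive; your Step~3 expansion argument spells out the language-independence of satisfiability that the paper uses tacitly.
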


\begin{proof}
The second statement is the compactness argument used in Theorem~\ref{thm:stabilization}. For the first statement, assume $T_n\not\models\psi$ for every $n$. Then $T_n\cup\{\neg\psi\}$ is satisfiable for every $n$ large enough to contain the language of $\psi$. Let $\Sigma$ be any finite subset of $T_{\omega}$. Since the chain is increasing, $\Sigma\subseteq T_N$ for some $N$. Then $T_N\cup\{\neg\psi\}$ is satisfiable, so $\Sigma\cup\{\neg\psi\}$ is satisfiable. Every finite subset of $T_{\omega}\cup\{\neg\psi\}$ is satisfiable. By compactness, $T_{\omega}\cup\{\neg\psi\}$ is satisfiable, hence $T_{\omega}\not\models\psi$.
\end{proof}

The theorem is a useful guardrail. If a claimed in-context phenomenon is expressible in first-order form and is entailed by the entire context stream, then it has a finite logical certificate. If no finite prefix can certify it, then the claimed property is either not first-order in this presentation, not an entailment, or depends on an external limiting operation.

\subsection{Quantifier rank and finite evidence profiles}

The previous compactness statements do not quantify the size of the finite certificate. A finer analysis tracks formulas by syntactic complexity.

\begin{definition}[Rank-restricted consequence]
Let $\Sent_{\le r}(\Lang)$ be the set of sentences of quantifier rank at most $r$. Define
\[
T_n^{\le r}:=T_n\cap\Sent_{\le r}(\Lang_n).
\]
For a sentence $\psi$, write
\[
T_n\models_r\psi
\]
if there is a finite $\Sigma\subseteq T_n^{\le r}$ such that $\Sigma\models\psi$.
\end{definition}

\begin{proposition}[Rank-restricted monotonicity]\label{prop:rankmono}
If $T_n\models_r\psi$, then $T_m\models_r\psi$ for every $m\ge n$, after viewing formulas in the larger language.
\end{proposition}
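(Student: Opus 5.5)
The argument is a direct witness-transfer. It uses monotonicity of the expansion chain (Proposition~\ref{prop:monotoneT}) together with the fact that quantifier rank is a purely syntactic invariant, unchanged when the language is enlarged.

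Assume $T_n\models_r\psi$, and fix a finite $\Sigma\subseteq T_n^{\le r}$ with $\Sigma\models\psi$. Let $m\ge n$. The first step is to show $\Sigma\subseteq T_m^{\le r}$. By iterating Proposition~\ref{prop:monotoneT} (induction on $m-n$), we have $T_n^{\up}\subseteq T_m$, so each $\sigma\in\Sigma$, regarded as an $\Lang_m$-sentence, lies in $T_m$. Expanding the language from $\Lang_n$ to $\Lang_m$ only adds new constant symbols and does not alter the syntax of $\sigma$. Hence its quantifier rank is still at most $r$, and $\sigma\in T_m\cap\Sent_{\le r}(\Lang_m)=T_m^{\le r}$.

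The second step is to check that $\Sigma\models\psi$ persists when both are read in the larger language $\Lang_m$. Let $\M$ be any $\Lang_m$-structure satisfying $\Sigma$. Its reduct $\M\restr\Lang_n$ satisfies the same $\Lang_n$-sentences, so it satisfies $\Sigma$ and therefore $\psi$. Since $\psi$ also lies in a language contained in $\Lang_n$, $\M$ itself satisfies $\psi$. The same finite $\Sigma$ thus witnesses $T_m\models_r\psi$.

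There is no real obstacle here. The only care needed is bookkeeping: ``viewing formulas in the larger language'' changes neither their quantifier rank nor their satisfaction, which the reduct argument above makes precise. I would state that invariance explicitly rather than leave it implicit.
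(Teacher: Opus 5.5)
Your proof is correct and follows the paper's argument: the same finite witness $\Sigma$ transfers to $T_m$ because $T_n^{\up}\subseteq T_m$, and quantifier rank is unchanged when a sentence is read in the larger language. Your extra reduct check, that $\Sigma\models\psi$ survives the change of language, is a detail the paper leaves implicit rather than a different route.
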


\begin{proof}
Choose finite $\Sigma\subseteq T_n^{\le r}$ with $\Sigma\models\psi$. Since $T_n^{\up}\subseteq T_m$, the same $\Sigma$ is contained in $T_m$. Quantifier rank is unchanged by viewing a sentence in a larger language.
\end{proof}

\begin{definition}[Evidence profile]
For a property $\psi$, define its evidence profile along the context chain by
\[
\mathcal{E}_{\psi}:=\{(n,r):T_n\models_r\psi\}.
\]
\end{definition}

\begin{proposition}[Upward closure of evidence profiles]\label{prop:evidenceup}
If $(n,r)\in\mathcal{E}_{\psi}$, then $(m,s)\in\mathcal{E}_{\psi}$ for every $m\ge n$ and $s\ge r$.
\end{proposition}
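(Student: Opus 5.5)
The plan is to treat the two coordinates separately and then combine them. Fix $(n,r)\in\mathcal{E}_{\psi}$, so by definition there is a finite $\Sigma\subseteq T_n^{\le r}$ with $\Sigma\models\psi$. Given $m\ge n$ and $s\ge r$, we will show that the \emph{same} finite set $\Sigma$ witnesses $(m,s)\in\mathcal{E}_{\psi}$.

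\textbf{Step 1 (increasing the rank bound).} Every sentence of quantifier rank at most $r$ also has quantifier rank at most $s$, because $\Sent_{\le r}(\Lang)\subseteq\Sent_{\le s}(\Lang)$ whenever $r\le s$. Hence $T_n^{\le r}\subseteq T_n^{\le s}$. It follows that $\Sigma\subseteq T_n^{\le s}$, and so $(n,s)\in\mathcal{E}_{\psi}$.

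\textbf{Step 2 (extending the context).} Apply Proposition~\ref{prop:rankmono} with rank bound $s$. From $T_n\models_s\psi$ it gives $T_m\models_s\psi$ for every $m\ge n$. Concretely, $T_n^{\up}\subseteq T_m$ by iterating Proposition~\ref{prop:monotoneT}. Moreover, quantifier rank does not change when a sentence is read in the larger language $\Lang_m$. Therefore $\Sigma\subseteq T_m^{\le s}$, and the entailment $\Sigma\models\psi$ is unaffected.

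\textbf{Main check.} The only point needing care is that entailment $\Sigma\models\psi$ is preserved when both are viewed in $\Lang_m\supseteq\Lang_n$. This holds because every $\Lang_m$-model of $\Sigma$ reducts to an $\Lang_n$-model of $\Sigma$. That reduct satisfies $\psi$, and the truth of $\psi$ is unchanged under expansion. The same observation is already implicit in Proposition~\ref{prop:rankmono}, so I expect no real obstacle: the result is a two-line monotonicity argument combining inclusion of rank classes with the monotone chain.
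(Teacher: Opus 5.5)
Your proof is correct and is essentially the paper's argument: the paper likewise handles $m\ge n$ by Proposition~\ref{prop:rankmono} and $s\ge r$ by the inclusion $\Sent_{\le r}(\Lang_m)\subseteq\Sent_{\le s}(\Lang_m)$. You apply the two steps in the opposite order, which makes no difference, and your remark about reducts correctly justifies that $\Sigma\models\psi$ survives passing to $\Lang_m$.
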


\begin{proof}
The condition $m\ge n$ is handled by Proposition~\ref{prop:rankmono}. The condition $s\ge r$ holds because $\Sent_{\le r}(\Lang_m)\subseteq\Sent_{\le s}(\Lang_m)$.
\end{proof}

Thus a certified property has an upward-closed region in the two-dimensional plane of context length and logical complexity. This is a precise way to ask whether a behavior needs more examples, more expressive formulas, or both.

\section{Latent task families and exact identification}\label{sec:latenttasks}

The preceding section is abstract. This section records two concrete identification mechanisms. The first is linear and algebraic. The second is purely definability-based.

\subsection{Linear task family over a finite field}

Let $\F$ be a finite field and $d\ge 1$. Let $\mathbf{X}=\F^d$ and $\mathbf{Y}=\F$. Suppose the background theory asserts the existence of a latent parameter $w\in\F^d$ such that
\[
f(x)=w^{\top}x
\qquad\text{for all }x\in\F^d.
\]
Given examples $(a_i,b_i)$ for $1\le i\le n$, define
\[
A_n=\begin{pmatrix}
a_1^{\top}\\
\vdots\\
a_n^{\top}
\end{pmatrix}\in\F^{n\times d},
\qquad
b^{(n)}=\begin{pmatrix}
b_1\\
\vdots\\
b_n
\end{pmatrix}\in\F^n,
\]
and
\[
W_n:=\{w\in\F^d:A_nw=b^{(n)}\}.
\]

\begin{theorem}[Exact identifiability in the linear case]\label{thm:linear}
Assume $W_n\neq\varnothing$. Then:
\begin{enumerate}[label=(\roman*),leftmargin=2.5em]
    \item for any $w_0\in W_n$,
    \[
    W_n=w_0+\Ker(A_n),
    \]
    so $\dim W_n=d-\rank(A_n)$;
    \item for a query $q\in\F^d$, all $w\in W_n$ induce the same answer $w^{\top}q$ if and only if
    \[
    q\in\Row(A_n);
    \]
    \item if $\rank(A_n)=d$, then $W_n$ is a singleton and every query is determined;
    \item the number of remaining latent parameters is
    \[
    |W_n|=|\F|^{d-\rank(A_n)}.
    \]
\end{enumerate}
\end{theorem}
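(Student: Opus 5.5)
The plan is to treat $W_n$ as the solution set of a consistent linear system over the finite field $\F$ and read all four items off standard linear algebra, i.e.\ rank--nullity together with the duality $\Ker(A_n)^{\perp}=\Row(A_n)$ in $\F^d$.

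For (i), fix $w_0\in W_n$. For any $w\in W_n$ one has $A_n(w-w_0)=b^{(n)}-b^{(n)}=0$, so $w\in w_0+\Ker(A_n)$. Conversely, if $v\in\Ker(A_n)$, then $A_n(w_0+v)=b^{(n)}$. Hence $W_n$ is an affine coset of $\Ker(A_n)$. Rank--nullity over $\F$ gives $\dim\Ker(A_n)=d-\rank(A_n)$. Item (iv) follows immediately, since a $k$-dimensional $\F$-subspace has exactly $|\F|^k$ elements and translation is a bijection. Item (iii) is the special case $\rank(A_n)=d$: the kernel is $\{0\}$, so $W_n=\{w_0\}$ and every query has the single answer $w_0^{\top}q$.

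The substantive step is (ii). By (i), every element of $W_n$ has the form $w_0+v$ with $v\in\Ker(A_n)$, and its answer is $w_0^{\top}q+v^{\top}q$. So the answer is constant on $W_n$ exactly when $v^{\top}q=0$ for all $v\in\Ker(A_n)$, that is, when $q\in\Ker(A_n)^{\perp}$.

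The direction $\Row(A_n)\subseteq\Ker(A_n)^{\perp}$ is direct: if $q=A_n^{\top}u$, then $v^{\top}q=(A_nv)^{\top}u=0$.

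The reverse inclusion is the only place needing care. Over a finite field the bilinear form $x^{\top}y$ can be isotropic, so one cannot argue via orthogonal complements being complementary. Instead the argument is a dimension count, using that $x^{\top}y$ is nondegenerate on $\F^d$. This gives $\dim U^{\perp}=d-\dim U$ for every subspace $U$, since $U^{\perp}$ is the kernel of a surjective map $\F^d\to\F^{\dim U}$ obtained by pairing against a basis of $U$. Applying this with $U=\Ker(A_n)$ yields
\[
\dim\Ker(A_n)^{\perp}=d-(d-\rank(A_n))=\rank(A_n)=\dim\Row(A_n).
\]
Together with the inclusion already shown, this forces equality. Hence the answer is determined iff $q\in\Row(A_n)$.

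A concrete witness makes the negative direction explicit. If $q\notin\Row(A_n)$, pick $v\in\Ker(A_n)$ with $v^{\top}q\neq0$. Then $w_0$ and $w_0+v$ both lie in $W_n$ but give different answers. This matches the indistinguishable-prefix principle of Proposition~\ref{prop:prefixlower}.
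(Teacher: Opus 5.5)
Your proposal is correct and follows the paper's proof essentially step for step. You describe $W_n$ as the coset $w_0+\Ker(A_n)$ and apply rank--nullity for (i), (iii) and (iv). For (ii) you use the computation $(A_nv)^{\top}u=0$ in one direction and a kernel vector with $v^{\top}q\neq0$ giving the two disagreeing parameters $w_0$ and $w_0+v$ in the other.

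The one difference is that you justify the inclusion $\Ker(A_n)^{\perp}\subseteq\Row(A_n)$ by the count $\dim U^{\perp}=d-\dim U$. The paper uses this inclusion without comment when it passes from $q\notin\Row(A_n)$ to $q\notin\Ker(A_n)^{\perp}$. Your observation that isotropy over $\F$ rules out a complementary-subspace argument is the right reason to prove it by a dimension count instead.
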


\begin{proof}
Fix $w_0\in W_n$. For $w\in W_n$, $A_n(w-w_0)=0$, so $w-w_0\in\Ker(A_n)$. Conversely, if $u\in\Ker(A_n)$, then $A_n(w_0+u)=b^{(n)}$, so $w_0+u\in W_n$. This proves (i). The dimension statement follows from rank-nullity.

For (ii), assume first $q\in\Row(A_n)$. Then $q=A_n^{\top}\alpha$ for some $\alpha\in\F^n$. If $w_1,w_2\in W_n$, then $u=w_1-w_2\in\Ker(A_n)$, hence
\[
(w_1-w_2)^{\top}q=u^{\top}A_n^{\top}\alpha=(A_nu)^{\top}\alpha=0.
\]
Thus all consistent parameters agree on $q$.

Conversely, if $q\notin\Row(A_n)$, then $q\notin\Ker(A_n)^{\perp}$. Hence there exists $u\in\Ker(A_n)$ with $u^{\top}q\neq0$. The two parameters $w_0$ and $w_0+u$ both lie in $W_n$, but they give different values on $q$. This proves (ii).

Item (iii) follows because $\rank(A_n)=d$ gives $\Ker(A_n)=\{0\}$. Item (iv) follows because an affine subspace over $\F$ of dimension $d-\rank(A_n)$ has $|\F|^{d-\rank(A_n)}$ elements.
\end{proof}

This example distinguishes global identification from query-local identification. A context may fail to determine the whole latent parameter while still determining the queried answer. The condition $q\in\Row(A_n)$ is the exact finite certificate.

\begin{corollary}[Residual entropy in the uniform linear case]\label{cor:entropy}
Assume the posterior on $W_n$ is uniform. The residual entropy of the latent parameter is
\[
H(w\mid E_{\le n})=(d-\rank(A_n))\log |\F|.
\]
The residual entropy of the scalar answer $w^{\top}q$ is zero exactly when $q\in\Row(A_n)$.
\end{corollary}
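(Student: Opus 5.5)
The plan is to read both claims of Corollary~\ref{cor:entropy} directly off Theorem~\ref{thm:linear}, using only the fact that a uniform distribution on a finite set of size $N$ has Shannon entropy $\log N$.

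\textbf{Parameter entropy.} Under the uniform-posterior hypothesis, $w$ given $E_{\le n}$ is uniform on $W_n$. Hence
\[
H(w\mid E_{\le n})=\log|W_n|.
\]
By Theorem~\ref{thm:linear}(iv), $|W_n|=|\F|^{d-\rank(A_n)}$. Taking logarithms gives $(d-\rank(A_n))\log|\F|$, which is the first formula.

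\textbf{Answer entropy.} The scalar $w^{\top}q$ is the pushforward of the uniform law on $W_n$ under the map $w\mapsto w^{\top}q$. A discrete random variable has zero entropy exactly when it is almost surely constant. Since every point of $W_n$ carries positive mass under the uniform posterior, this means $w^{\top}q$ is constant on all of $W_n$. By Theorem~\ref{thm:linear}(ii), that constancy holds if and only if $q\in\Row(A_n)$.

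It is worth noting, though not required, what happens when $q\notin\Row(A_n)$. Pick $u\in\Ker(A_n)$ with $u^{\top}q\ne0$. The map $w\mapsto w^{\top}q$ is then a surjective affine functional from $W_n=w_0+\Ker(A_n)$ onto $\F$, and all of its fibres have equal size. So the answer is uniform on $\F$ and has entropy $\log|\F|$. This refinement is not needed for the iff.

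\textbf{Main subtlety.} The only real care point is the step "zero entropy implies constant on $W_n$". It uses full support of the uniform posterior, which is what rules out ignoring a disagreeing null set. Everything else is a direct citation of Theorem~\ref{thm:linear}.
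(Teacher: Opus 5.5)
Your proposal is correct and follows essentially the same route as the paper: the parameter entropy is $\log|W_n|$ with $|W_n|=|\F|^{d-\rank(A_n)}$, and the answer-entropy claim is Theorem~\ref{thm:linear}(ii) combined with full support of the uniform posterior on $W_n$. Your extra observation, that for $q\notin\Row(A_n)$ the answer is exactly uniform on $\F$ with entropy $\log|\F|$, is a valid sharpening of the paper's remark that the entropy is then merely positive.
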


\begin{proof}
The first formula follows from $|W_n|=|\F|^{d-\rank(A_n)}$. For the answer, Theorem~\ref{thm:linear} says that $w^{\top}q$ is constant on $W_n$ exactly when $q\in\Row(A_n)$. A constant random variable has entropy zero. If $q\notin\Row(A_n)$, the same theorem gives at least two possible answer values, so the entropy is positive under a full-support uniform posterior on $W_n$.
\end{proof}

\begin{theorem}[Counting unresolved hypotheses and queries]\label{thm:linearcount}
Let $|\F|=Q$ and assume $W_n\neq\varnothing$. If $r=\rank(A_n)$, then:
\begin{enumerate}[label=(\roman*),leftmargin=2.5em]
    \item the number of latent parameters consistent with the context is
    \[
    |W_n|=Q^{d-r};
    \]
    \item the set of queries whose answers are already determined is exactly $\Row(A_n)$ and has size $Q^r$;
    \item the number of query vectors whose answers remain underdetermined is
    \[
    Q^d-Q^r.
    \]
\end{enumerate}
\end{theorem}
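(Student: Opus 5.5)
The plan is to read all three items off Theorem~\ref{thm:linear}, with only a dimension count added. For (i), I would invoke Theorem~\ref{thm:linear}(iv) directly with $|\F|=Q$. The hypothesis $W_n\neq\varnothing$ is exactly the one that theorem needs, so $|W_n|=Q^{d-r}$ with $r=\rank(A_n)$ follows at once.

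For (ii), I would use Theorem~\ref{thm:linear}(ii). It says that a query $q\in\F^d$ is determined, in the sense that all $w\in W_n$ give the same value $w^{\top}q$, if and only if $q\in\Row(A_n)$. So the set of determined queries is exactly $\Row(A_n)$. To count it, I note that $\Row(A_n)$ is a subspace of $\F^d$ of dimension equal to the row rank. Row rank equals column rank, which is $r$, so the subspace has dimension $r$. A vector space over a field with $Q$ elements of dimension $r$ has exactly $Q^r$ elements, because fixing a basis gives a bijection with $\F^r$. This yields $|\Row(A_n)|=Q^r$.

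For (iii), the underdetermined queries are the complement of $\Row(A_n)$ inside $\F^d$, since (ii) is a characterization in both directions. Their number is therefore $Q^d-Q^r$.

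There is no real obstacle; the only points that need care are these:
\begin{itemize}
\item Theorem~\ref{thm:linear}(ii) is stated for a nonempty solution set, and that hypothesis is carried over here.
\item The rank used in the exponent is the same $r=\rank(A_n)$ that appears in Theorem~\ref{thm:linear}(i).
\item Consistency of the context does not affect the row space, so the query counts depend only on $A_n$ and not on the labels $b^{(n)}$.
\end{itemize}
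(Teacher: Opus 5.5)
Your proposal is correct and takes essentially the same route as the paper, which also reads (i) off the affine-subspace description in Theorem~\ref{thm:linear} (recomputing $Q^{d-r}$ from $\dim\Ker(A_n)=d-r$ rather than citing item (iv)), gets (ii) from the row-space criterion plus $\dim\Row(A_n)=r$, and obtains (iii) by complement. Your note that $W_n\neq\varnothing$ is what keeps the characterization of determined queries from holding vacuously is a useful detail that the paper leaves implicit.
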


\begin{proof}
By Theorem~\ref{thm:linear}, $W_n$ is an affine subspace parallel to $\Ker(A_n)$, and $\dim\Ker(A_n)=d-r$. Therefore $|W_n|=Q^{d-r}$. The same theorem states that a query is determined exactly when it lies in $\Row(A_n)$. Since the row space has dimension $r$, it contains $Q^r$ vectors. The remaining $Q^d-Q^r$ vectors are outside the row space and therefore underdetermined.
\end{proof}

\begin{theorem}[Random-context identification probability]\label{thm:randomlinear}
Let the example inputs $a_1,\ldots,a_n$ be independent uniform vectors in $\F^d$, with $|\F|=Q$, and let outputs be generated by a fixed latent parameter $w^\ast$ through $b_i=(w^\ast)^\top a_i$. Then full semantic identification occurs exactly when $A_n$ has rank $d$, and
\[
\Prb(\rank(A_n)=d)=
\begin{cases}
0,& n<d,\\[2mm]
\displaystyle\prod_{i=0}^{d-1}(1-Q^{i-n}),& n\ge d.
\end{cases}
\]
Consequently, the probability that every query is semantically determined after $n$ random examples is given by the same expression.
\end{theorem}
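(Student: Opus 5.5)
The argument has three steps: reduce identification to rank using Theorem~\ref{thm:linear}, dispose of the case $n<d$, and count full-rank matrices for $n\ge d$.

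\emph{Reduction to rank.} Since $b_i=(w^\ast)^\top a_i$, we have $w^\ast\in W_n$, so $W_n\neq\varnothing$ and Theorem~\ref{thm:linear} applies. By item (iv), $|W_n|=Q^{d-\rank(A_n)}$. Hence $W_n$ is a singleton exactly when $\rank(A_n)=d$. By item (ii), every query $q\in\F^d$ is determined exactly when $\Row(A_n)=\F^d$, which again means $\rank(A_n)=d$. So full identification and ``every query determined'' are the same event, namely $\{\rank(A_n)=d\}$. This already gives the final ``consequently'' clause once the probability is computed.

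\emph{Case $n<d$.} Here $\rank(A_n)\le n<d$ for every matrix, so the probability is $0$.

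\emph{Case $n\ge d$.} The event $\rank(A_n)=d$ says that the $d$ columns of $A_n$, viewed as vectors in $\F^n$, are linearly independent. The entries of $A_n$ are i.i.d.\ uniform on $\F$, so the columns $c_1,\dots,c_d$ are independent and uniform in $\F^n$. Switching to columns avoids counting spanning families of rows.

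Reveal the columns one at a time. Condition on $c_1,\dots,c_i$ being independent; their span then has exactly $Q^i$ elements. The next column $c_{i+1}$ is uniform and independent of the earlier ones, so it falls outside that span with probability
\[
1-\frac{Q^i}{Q^n}=1-Q^{i-n}.
\]
The chain rule over $i=0,\dots,d-1$ gives
\[
\Prb(\rank(A_n)=d)=\prod_{i=0}^{d-1}\bigl(1-Q^{i-n}\bigr).
\]
Equivalently, one can count: there are $\prod_{i=0}^{d-1}(Q^n-Q^i)$ full-column-rank matrices among $Q^{nd}$ total.

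The only step needing care is the switch from the row picture (the $a_i$ spanning $\F^d$) to the column picture. It is justified because row rank equals column rank and because i.i.d.\ uniform entries make the columns independent and uniform. Everything else is routine.
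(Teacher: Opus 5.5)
Your proof is correct and takes essentially the same route as the paper. Both reduce full identification, and determination of every query, to $\rank(A_n)=d$ via Theorem~\ref{thm:linear}, note that full rank is impossible when $n<d$, and for $n\ge d$ count $\prod_{i=0}^{d-1}(Q^n-Q^i)$ full-column-rank matrices out of $Q^{nd}$; your column-by-column conditioning simply supplies the justification for the count that the paper states without argument.
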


\begin{proof}
The consistency set is
\[
W_n=\{w\in\F^d:A_nw=A_nw^\ast\}=w^\ast+\Ker(A_n).
\]
Thus $W_n$ is a singleton exactly when $\Ker(A_n)=\{0\}$, equivalently $\rank(A_n)=d$. If $n<d$, full rank is impossible. If $n\ge d$, the number of full-column-rank $n\times d$ matrices over $\F$ is
\[
(Q^n-1)(Q^n-Q)\cdots(Q^n-Q^{d-1}).
\]
Dividing by the total number $Q^{nd}$ of $n\times d$ matrices gives
\[
\prod_{i=0}^{d-1}\frac{Q^n-Q^i}{Q^n}
=
\prod_{i=0}^{d-1}(1-Q^{i-n}).
\]
The last statement follows from Theorem~\ref{thm:linear}: full rank is exactly the condition under which all queries are determined.
\end{proof}

This gives a genuine certificate-complexity statement rather than a philosophical analogy. The context length $n$ controls the rank distribution of the observed design matrix, and the rank distribution exactly controls semantic identifiability.

For many evaluations, full recovery of $w^\ast$ is stronger than necessary. A fixed query can become determined before the latent parameter is globally identified. The next theorem gives the query-local analogue of the full identification curve.

\begin{definition}[Rank distribution]
For $0\le r\le \min(n,d)$, let
\[
R_{n,d,Q}(r):=\frac{1}{Q^{nd}}
\frac{\prod_{i=0}^{r-1}(Q^n-Q^i)(Q^d-Q^i)}{\prod_{i=0}^{r-1}(Q^r-Q^i)}
\]
with the empty product interpreted as $1$. This is the probability that a uniformly random $n\times d$ matrix over $\mathbb F_Q$ has rank $r$.
\end{definition}

\begin{theorem}[Query-local identification probability]\label{thm:querylocal}
Let $q\in\mathbb F_Q^d$ be nonzero and let $A_n$ have independent uniform rows in $\mathbb F_Q^d$. Then
\[
\Prb\bigl(q\in\Row(A_n)\bigr)
=
\sum_{r=0}^{\min(n,d)} R_{n,d,Q}(r)\,\frac{Q^r-1}{Q^d-1}.
\]
Equivalently, the probability that the answer to the fixed query $q$ is semantically determined after $n$ random examples is the expression above. The zero query is determined with probability $1$ for every $n$.
\end{theorem}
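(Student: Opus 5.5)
The plan is to condition on the row space of $A_n$ and use symmetry under $GL_d(\F_Q)$. I would write
\[
\Prb\bigl(q\in\Row(A_n)\bigr)=\sum_{r}\Prb(\rank A_n=r)\,\Prb\bigl(q\in\Row(A_n)\mid \rank A_n=r\bigr).
\]
The factor $\Prb(\rank A_n=r)$ is $R_{n,d,Q}(r)$ by the definition preceding the theorem. I would briefly justify that count as follows. The number of $n\times d$ matrices of rank $r$ equals the number of $r$-dimensional subspaces $U\subseteq\F^d$ times the number of surjections $\F^n\to U$ (viewing the matrix via its column map, or dually its rows spanning $U$). That product is $\binom{d}{r}_Q\prod_{i<r}(Q^n-Q^i)$. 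Writing the Gaussian binomial as $\prod_{i<r}(Q^d-Q^i)/\prod_{i<r}(Q^r-Q^i)$ and dividing by $Q^{nd}$ gives $R_{n,d,Q}(r)$.

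The key step is the conditional probability. For any $g\in GL_d(\F_Q)$, the map $A\mapsto Ag^{\top}$ sends the rows $a_i$ to $ga_i$. It preserves the uniform distribution on $n\times d$ matrices and preserves rank, and it maps $\Row(A)$ to $g\Row(A)$. I would use this to show that, conditional on $\rank A_n=r$, the row space is uniformly distributed over the $r$-dimensional subspaces of $\F^d$. The justification is that $GL_d$ acts transitively on $r$-subspaces, and the conditional law is invariant under this transitive action. Then
\[
\Prb\bigl(q\in\Row(A_n)\mid\rank A_n=r\bigr)=\frac{\#\{U:\dim U=r,\ q\in U\}}{\#\{U:\dim U=r\}}.
\]

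To evaluate this ratio I would double count incidences between nonzero vectors and $r$-subspaces. There are $\binom{d}{r}_Q$ subspaces, each containing $Q^r-1$ nonzero vectors. $GL_d$ is transitive on nonzero vectors, so each nonzero $q$ lies in the same number of $r$-subspaces. Hence that number is $\binom{d}{r}_Q(Q^r-1)/(Q^d-1)$, and the ratio is $(Q^r-1)/(Q^d-1)$. Summing over $r$ yields the formula. The $r=0$ term vanishes, consistent with $q\ne 0$. The zero query lies in every row space and so is determined with probability $1$; by Theorem~\ref{thm:linear}(ii), row-space membership is exactly determination, which gives the ``equivalently'' clause.

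The main obstacle is stating cleanly the uniformity of the row space conditional on rank. I would handle it by the invariance argument above, noting that the conditioning event $\{\rank=r\}$ is itself $GL_d$-invariant. This avoids any explicit computation of the joint law of the rows.
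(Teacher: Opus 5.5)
Your proposal is correct and follows the paper's proof: condition on $\rank(A_n)=r$, use symmetry to make the row space uniform over $r$-dimensional subspaces, and read off the conditional probability $(Q^r-1)/(Q^d-1)$ before averaging over $R_{n,d,Q}(r)$. Your $GL_d$-invariance argument and the incidence double count make explicit the symmetry and transitivity steps that the paper compresses into one line.
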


\begin{proof}
Condition on the event $\rank(A_n)=r$. By symmetry, conditional on this event the row space of $A_n$ is uniformly distributed among all $r$-dimensional subspaces of $\mathbb F_Q^d$. The number of nonzero vectors in such a subspace is $Q^r-1$, while the total number of nonzero vectors in $\mathbb F_Q^d$ is $Q^d-1$. Hence for fixed nonzero $q$,
\[
\Prb(q\in\Row(A_n)\mid \rank(A_n)=r)=\frac{Q^r-1}{Q^d-1}.
\]
Averaging over the rank distribution gives the stated formula. The equivalence with semantic determination follows from Theorem~\ref{thm:linear}(ii). The zero query belongs to every row space, including the zero subspace.
\end{proof}

\begin{corollary}[Expected fraction of determined queries]\label{cor:expecteddetermined}
For random $A_n$ over $\mathbb F_Q$, the expected fraction of all query vectors whose answers are determined is
\[
\E\left[Q^{\rank(A_n)-d}\right]
=
\sum_{r=0}^{\min(n,d)} R_{n,d,Q}(r)Q^{r-d}.
\]
\end{corollary}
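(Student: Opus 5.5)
The plan is to express the fraction of determined queries as a random variable and then take its expectation by conditioning on the rank. By Theorem~\ref{thm:linearcount}(ii), for a realized design matrix of rank $r$ the set of determined queries is exactly $\Row(A_n)$, which has $Q^r$ elements. So the fraction of all $Q^d$ query vectors that are determined is the random variable $Q^{\rank(A_n)}/Q^d=Q^{\rank(A_n)-d}$. This first identity holds pointwise, for every realization of $A_n$. It needs no consistency hypothesis, because the outputs come from a fixed latent parameter, so $W_n\neq\varnothing$ automatically.

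Next I will take expectations by conditioning on the value of the rank. The rank takes values in $\{0,\ldots,\min(n,d)\}$. By the definition of $R_{n,d,Q}$, the probability of each value $r$ is $R_{n,d,Q}(r)$. Since $Q^{\rank(A_n)-d}$ is a function of the rank alone, the law of total expectation gives the sum $\sum_r R_{n,d,Q}(r)Q^{r-d}$ directly.

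As a cross-check, I would also derive the result by linearity of expectation over queries. The expected fraction equals $Q^{-d}\sum_{q}\Prb(q\in\Row(A_n))$. The zero query contributes $1$, and each of the $Q^d-1$ nonzero queries contributes the expression from Theorem~\ref{thm:querylocal}. The sum becomes $Q^{-d}\sum_r R_{n,d,Q}(r)\bigl(1+(Q^r-1)\bigr)$, which is the same formula. This confirms consistency with the query-local curve.

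The only step needing care is that $R_{n,d,Q}$ really is the rank distribution of a uniform random matrix. The paper's definition asserts this, so I will take it as given. If a justification is wanted, I would count rank-$r$ matrices as pairs consisting of an $r$-dimensional row space and a surjection from $\F^n$. A cleaner route is to count the injective and surjective factorizations: the product $\prod_{i<r}(Q^n-Q^i)(Q^d-Q^i)$ divided by $|GL_r(\F_Q)|=\prod_{i<r}(Q^r-Q^i)$. This is the standard formula, and it gives the stated expression.
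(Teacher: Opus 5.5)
Your proposal is correct and follows the paper's proof: Theorem~\ref{thm:linearcount} gives exactly $Q^{r}$ determined queries out of $Q^{d}$ when $\rank(A_n)=r$, and averaging $Q^{r-d}$ over the rank distribution $R_{n,d,Q}$ yields the formula. Two of your additions go beyond the paper's two-line argument and are also correct: the observation that $W_n\neq\varnothing$ holds automatically (which Theorem~\ref{thm:linearcount} needs), and the linearity-of-expectation cross-check through Theorem~\ref{thm:querylocal}.
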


\begin{proof}
By Theorem~\ref{thm:linearcount}, if $\rank(A_n)=r$ then exactly $Q^r$ of the $Q^d$ query vectors are determined. Divide by $Q^d$ and average over $r$.
\end{proof}

\subsection{Definable hypothesis quotients}

The linear case is only a witness. The model-theoretic pattern is a quotient of hypotheses by the answers they induce.

\begin{definition}[Hypothesis-answer equivalence]
Let $\mathcal{H}$ be a definable family of structures or latent parameters, and let $Q$ be a set of queries. For $h,h'\in\mathcal{H}$ define
\[
h\equiv_Q h'
\quad\Longleftrightarrow\quad
(\forall q\in Q)\ f_h(q)=f_{h'}(q).
\]
For a finite example set $E_{\le n}$, define the version space
\[
\mathcal{H}_n:=\{h\in\mathcal{H}:h\text{ is consistent with }E_{\le n}\}.
\]
\end{definition}

\begin{proposition}[Query determination by quotient collapse]\label{prop:quotientcollapse}
For a single query $q$, the context $E_{\le n}$ determines the answer to $q$ if and only if $\mathcal{H}_n$ is contained in one $\equiv_{\{q\}}$-equivalence class.
\end{proposition}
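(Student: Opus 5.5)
The plan is to unfold the definitions until the claim becomes a tautology. The context $E_{\le n}$ \emph{determines the answer to $q$} means what it meant for finite deterministic task families and in Theorem~\ref{thm:linear}(ii): all hypotheses consistent with the context induce the same value at $q$. In symbols,
\[
(\forall h,h'\in\mathcal{H}_n)\ f_h(q)=f_{h'}(q).
\]
The other side of the equivalence can be written just as concretely. By definition, $h\equiv_{\{q\}}h'$ holds exactly when $f_h(q)=f_{h'}(q)$, since the universal quantifier over the one-element query set $\{q\}$ collapses to this single equation.

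Before comparing the two sides, I will check that $\equiv_{\{q\}}$ is an equivalence relation on $\mathcal{H}$. This is immediate, because it is the kernel of the map $h\mapsto f_h(q)$. Its classes are therefore the fibres of that map, so the phrase ``contained in one class'' is meaningful.

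The proof then has two directions.

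\textbf{Determination implies containment.} Suppose $E_{\le n}$ determines $q$ and $\mathcal{H}_n\neq\varnothing$. Fix any $h_0\in\mathcal{H}_n$. Every $h\in\mathcal{H}_n$ satisfies $f_h(q)=f_{h_0}(q)$, so $h$ lies in the class of $h_0$. Hence $\mathcal{H}_n$ is contained in that single class.

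\textbf{Containment implies determination.} Suppose $\mathcal{H}_n\subseteq[h_0]_{\equiv_{\{q\}}}$. Any two $h,h'\in\mathcal{H}_n$ are then both equivalent to $h_0$. By transitivity $h\equiv_{\{q\}}h'$, that is, $f_h(q)=f_{h'}(q)$, so the answer to $q$ is determined.

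The only real obstacle is the degenerate case of an inconsistent context, where $\mathcal{H}_n=\varnothing$. There determination holds vacuously, and the empty set is contained in every class, so containment also holds as long as $\mathcal{H}$ is nonempty. I will state this convention explicitly, matching the hypothesis $W_n\neq\varnothing$ in Theorem~\ref{thm:linear}, so that the two sides agree in this case too.

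Finally, I will remark that Theorem~\ref{thm:linear}(ii) is the instance $\mathcal{H}_n=W_n$, where the class of $w_0$ meets $W_n$ in $w_0+(\Ker(A_n)\cap q^{\perp})$. Collapse therefore means $\Ker(A_n)\subseteq q^{\perp}$, which is equivalent to $q\in\Row(A_n)$. This recovers the earlier result as a consistency check.
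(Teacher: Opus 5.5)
Your proposal is correct and matches the paper's proof, which likewise just unfolds the definitions of determination and of $\equiv_{\{q\}}$ in both directions. Your handling of the empty version space $\mathcal{H}_n=\varnothing$ and your consistency check against Theorem~\ref{thm:linear}(ii) go slightly beyond what the paper writes, but they are accurate and do not change the argument.
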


\begin{proof}
If $\mathcal{H}_n$ is contained in one equivalence class, all hypotheses in $\mathcal{H}_n$ give the same value on $q$, so the answer is determined. Conversely, if the answer is determined, then any two hypotheses in $\mathcal{H}_n$ agree on $q$, hence belong to the same $\equiv_{\{q\}}$-class.
\end{proof}

\begin{theorem}[Finite quotient certificate]\label{thm:quotientfinite}
Suppose $\mathcal{H}_\omega:=\bigcap_n\mathcal{H}_n$ is contained in one $\equiv_{\{q\}}$-class and the consistency of $h\in\mathcal{H}_n$ is first-order expressible over the expansion chain. Then there exists $N$ such that $\mathcal{H}_N$ is contained in one $\equiv_{\{q\}}$-class.
\end{theorem}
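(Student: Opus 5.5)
The plan is to reduce the statement to Theorem~\ref{thm:stabilization} (equivalently Theorem~\ref{thm:noinfiniteonly}), by reading the hypothesis ``consistency of $h\in\mathcal{H}_n$ is first-order expressible over the expansion chain'' in the natural way. We identify the hypotheses $h$ with (expansions of) models of the theories $T_n$, so that $h\in\mathcal{H}_n$ iff the structure realizing $h$ satisfies $T_n$. Consequently $\mathcal{H}_\omega$ corresponds to the models of $T_\omega=\bigcup_n T_n$.

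\textbf{Step 1: translate the hypothesis into an entailment.} Saying that $\mathcal{H}_\omega$ lies in one $\equiv_{\{q\}}$-class means that every pair of hypotheses surviving the whole stream agrees on $q$. To make this a single first-order statement, we pass to a doubled signature carrying two copies $f,f'$ of the task symbol, with $T_n'$ the theory asserting $T_n$ for both copies. The hypothesis then becomes $T'_\omega\models f(q)=f'(q)$.

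If $\mathcal{H}_\omega$ is nonempty and the answer set is named by closed terms $\mathcal{C}_{\mathbf Y}$, we can instead use a single model. Fix the common value $t$; the hypothesis is then $T_\omega\models f(q)=t$. This is exactly the situation of Theorem~\ref{thm:stabilization}.

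\textbf{Step 2: apply compactness.} By Theorem~\ref{thm:stabilization} applied to $f(q)=f'(q)$ over the increasing chain $T'_n$, some finite $N$ already gives $T'_N\models f(q)=f'(q)$. The doubled chain is still increasing and its union is $T'_\omega$, so the theorem's proof transfers verbatim.

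\textbf{Step 3: translate back.} Any two $h,h'\in\mathcal{H}_N$ give a model of $T'_N$ by first-order expressibility of consistency. Hence they agree on $q$, so $\mathcal{H}_N$ lies in one class, by Proposition~\ref{prop:quotientcollapse}.

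\textbf{Main obstacle.} The difficulty is the interface in Steps 1 and 3: making precise that membership in $\mathcal{H}_n$ coincides with satisfaction of $T_n$ (or $T'_n$), and that $\mathcal{H}_\omega$ captures \emph{all} models of $T_\omega$ rather than only standard ones. Compactness may produce nonstandard models of $T_\omega$ in which $f(q)\ne f'(q)$ even though no actual hypothesis disagrees. I would therefore state the expressibility assumption in exactly the form ``$\mathcal{H}_n$ is the class of (pairs of) models of $T_n$,'' so that the hypothesis on $\mathcal{H}_\omega$ is genuinely the entailment needed. With that formulation, the rest is a direct invocation of Theorem~\ref{thm:stabilization}.

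Two side cases need a remark. If $\mathcal{H}_\omega$ is empty, the same compactness argument applied to $\bot$ gives an $N$ with $\mathcal{H}_N$ empty. The empty set is trivially contained in one class.
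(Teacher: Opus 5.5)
Your proposal is correct and follows the same route as the paper: read the hypothesis as saying that $T_\omega$ entails agreement on $q$ across all surviving hypotheses, use compactness to get a finite subset of $T_\omega$ lying in some $T_N$, and translate back to $\mathcal{H}_N$. Your doubled signature with $f,f'$, and your caveat that $\mathcal{H}_n$ must coincide with all (pairs of) models of $T_n$---not only the standard ones---make precise the step the paper compresses into ``written as a first-order sentence in the expanded language, this is an entailment of $T_\omega$.''
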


\begin{proof}
The assumption says that the direct-limit theory entails equality of the answer on $q$ across all remaining hypotheses. Written as a first-order sentence in the expanded language, this is an entailment of $T_\omega$. By compactness, a finite subset of $T_\omega$ already entails it. Since the theory chain is increasing, that finite subset is contained in some $T_N$. Therefore $\mathcal{H}_N$ has already collapsed to one answer class for $q$.
\end{proof}

This theorem expresses the same idea as the linear row-space criterion, but without assuming linearity. Examples eliminate answer-equivalence classes. In-context learning is exact when the remaining classes collapse for the query under consideration.

\section{Emergence as asymptotic latent entailment}\label{sec:emergence}

Scale introduces a directed family of semantic presentations. We suppress the context parameter in this section when it is fixed. For each sentence $\varphi$, write
\[
s_{\lambda}(\varphi):=s_{\lambda,c_{\lambda}}(\varphi),
\]
where $c_\lambda$ may itself depend on scale.

\begin{definition}[Decisive scale family and limit theory]
Let $\mathcal{F}\subseteq\Sent(\Lang)$ be closed under negation and finite conjunction. A scale family is \emph{decisive on $\mathcal{F}$} if for every $\varphi\in\mathcal{F}$ the limit
\[
\lim_{\lambda}s_{\lambda}(\varphi)
\]
exists and belongs to $\{0,1\}$. The almost-sure limit theory on $\mathcal{F}$ is
\[
T_{\infty}^{\mathcal F}:=\{\varphi\in\mathcal{F}:\lim_{\lambda}s_{\lambda}(\varphi)=1\}.
\]
When $\mathcal{F}=\Sent(\Lang)$, write $T_{\infty}$.
\end{definition}

\begin{theorem}[Limit theory is complete and consistent]\label{thm:limit}
If the family is decisive on $\mathcal{F}$, then $T_{\infty}^{\mathcal F}$ is complete on $\mathcal{F}$ and every finite subset of it is satisfiable at all sufficiently large scales.
\end{theorem}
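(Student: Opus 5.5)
Completeness comes first, and it follows from decisiveness together with closure of $\mathcal{F}$ under negation. Fix $\varphi\in\mathcal{F}$. Since $\neg\varphi\in\mathcal{F}$ and $s_\lambda(\neg\varphi)=1-s_\lambda(\varphi)$ at every scale (as in the proof of Corollary~\ref{cor:decisivecomplete}), the limits satisfy $\lim_\lambda s_\lambda(\neg\varphi)=1-\lim_\lambda s_\lambda(\varphi)$. Both limits lie in $\{0,1\}$, so exactly one of them equals $1$. Hence exactly one of $\varphi,\neg\varphi$ lies in $T_\infty^{\mathcal F}$, which is completeness on $\mathcal{F}$ in the sense of deciding every formula of the fragment.

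For finite satisfiability, take $\Delta=\{\delta_1,\ldots,\delta_m\}\subseteq T_\infty^{\mathcal F}$ and set $\delta:=\bigwedge_i\delta_i$, which lies in $\mathcal{F}$ by closure under finite conjunction. There are two routes to $\lim_\lambda s_\lambda(\delta)=1$, and I would give the more robust one. By the union bound on the complements,
\[
s_\lambda(\delta)\ \ge\ 1-\sum_{i=1}^m\bigl(1-s_\lambda(\delta_i)\bigr),
\]
and each summand tends to $0$ along the directed set $\Lambda$. Finitely many eventualities in a directed set have a common upper bound, so for any $\epsilon>0$ there is $\lambda_0$ with $s_\lambda(\delta)>1-\epsilon$ for all $\lambda\ge\lambda_0$. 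The alternative route uses decisiveness directly: the limit of $s_\lambda(\delta)$ exists and lies in $\{0,1\}$, and the bound above rules out $0$, so it is $1$. Either way, taking $\epsilon=1/2$ gives a $\lambda_0$ with $\mu_{\lambda,c_\lambda}([\![\delta]\!]_{\mathcal{K}_\lambda})>0$ for all $\lambda\ge\lambda_0$.

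A set of positive measure is nonempty, so for each $\lambda\ge\lambda_0$ some $\M\in\mathcal{K}_\lambda$ satisfies $\delta$, and hence every $\delta_i$. This shows that $\Delta$ is satisfiable in $\mathcal{K}_\lambda$ at all sufficiently large scales. As a corollary, $T_\infty^{\mathcal F}$ is syntactically consistent, by compactness of proofs as in Proposition~\ref{prop:consistency}.

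The main point to watch is the word ``eventually'' over a directed set rather than $\N$. The step of combining finitely many thresholds $\lambda_i$ into a single $\lambda_0$ must use directedness of $\Lambda$, with $\lambda_0$ an upper bound of $\lambda_1,\ldots,\lambda_m$. The rest is routine. Note also that, unlike Proposition~\ref{prop:consistency}, we need only positive measure, not measure one, so no countable additivity is required.
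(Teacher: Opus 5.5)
Your proof is correct and follows the paper's argument. Completeness comes from $s_\lambda(\neg\varphi)=1-s_\lambda(\varphi)$ together with decisiveness, and finite satisfiability comes from the subadditivity bound on $\delta=\bigwedge_i\delta_i$, which makes $s_\lambda(\delta)>0$ eventually and so yields a witnessing structure in $\mathcal{K}_\lambda$; your use of a common upper bound in the directed set $\Lambda$ to combine the finitely many thresholds spells out a step the paper leaves implicit.
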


\begin{proof}
For completeness, fix $\varphi\in\mathcal{F}$. Since
\[
s_{\lambda}(\neg\varphi)=1-s_{\lambda}(\varphi),
\]
exactly one of $\varphi$ and $\neg\varphi$ has limiting confidence $1$.

For finite satisfiability, let $\Delta=\{\delta_1,\ldots,\delta_m\}\subseteq T_{\infty}^{\mathcal F}$ and put $\delta=\bigwedge_i\delta_i$. Since $\mathcal{F}$ is closed under finite conjunction, $\delta\in\mathcal{F}$. By subadditivity,
\[
1-s_{\lambda}(\delta)\le \sum_{i=1}^m(1-s_{\lambda}(\delta_i)).
\]
Each summand tends to $0$, so $s_{\lambda}(\delta)\to1$. Hence $s_{\lambda}(\delta)>0$ at all sufficiently large scales. Therefore some structure at those scales satisfies $\Delta$.
\end{proof}

\begin{definition}[Threshold theory and manifestation]
For $\tau\in(0,1)$ define
\[
T_{\lambda,\tau}:=\{\varphi\in\Sent(\Lang):s_{\lambda}(\varphi)>\tau\}.
\]
A sentence $\varphi$ is a \emph{latent entailment} when $\varphi\in T_{\infty}$. It is \emph{$\tau$-manifest from scale $\lambda_0$} when
\[
\varphi\in T_{\lambda,\tau}
\qquad\text{for all }\lambda\ge\lambda_0.
\]
It is \emph{$\tau$-emergent} when it is latent, hidden below threshold at some smaller scale, and $\tau$-manifest from some later scale onward.
\end{definition}

\begin{theorem}[Threshold manifestation of latent entailments]\label{thm:manifest}
If $\varphi\in T_{\infty}$ and $\tau<1$, then $\varphi$ is $\tau$-manifest from some scale onward. If it is below threshold at some earlier scale, then it is $\tau$-emergent.
\end{theorem}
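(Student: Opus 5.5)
The argument unwinds the definitions and uses the limit $\lim_\lambda s_\lambda(\varphi)=1$ supplied by $\varphi\in T_\infty$, which holds along the directed set $\Lambda$.

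\emph{Manifestation.} Set $\epsilon:=1-\tau>0$, which is positive because $\tau<1$. By the definition of a limit along a directed set, there is $\lambda_0\in\Lambda$ with $|s_\lambda(\varphi)-1|<\epsilon$ for all $\lambda\ge\lambda_0$. For every such $\lambda$ this gives $s_\lambda(\varphi)>1-\epsilon=\tau$, so $\varphi\in T_{\lambda,\tau}$. This is exactly the statement that $\varphi$ is $\tau$-manifest from $\lambda_0$. Convergence is used only here, and only through the upper tail beyond $\lambda_0$; no decisiveness of other sentences is needed.

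\emph{Emergence.} Suppose $s_{\lambda_1}(\varphi)\le\tau$ at some scale $\lambda_1$, so $\varphi$ is hidden below threshold at $\lambda_1$. We need a later scale from which $\varphi$ is manifest. Using directedness, choose $\lambda_2\ge\lambda_0$ and $\lambda_2\ge\lambda_1$. Since manifestation from $\lambda_0$ holds at every $\lambda\ge\lambda_0$, $\varphi$ is also $\tau$-manifest from $\lambda_2$, and $\lambda_2\ge\lambda_1$. We also have $\lambda_2\neq\lambda_1$, because $s_{\lambda_2}(\varphi)>\tau\ge s_{\lambda_1}(\varphi)$. Latency, hiddenness at the smaller scale $\lambda_1$, and manifestation from the later scale $\lambda_2$ together give $\tau$-emergence.

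\emph{Main obstacle.} The only delicate point is the phrase ``at some earlier scale'' in a directed set that need not be totally ordered. The hypothesis might provide a hidden scale $\lambda_1$ that is not comparable to $\lambda_0$. The directedness step above handles this case, since it produces a common upper bound $\lambda_2$ that serves as the later manifestation scale. If ``earlier'' is read as $\lambda_1<\lambda_0$, we simply take $\lambda_2=\lambda_0$.
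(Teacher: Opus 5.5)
Your proof is correct and follows the same route as the paper: $\varphi\in T_\infty$ gives $s_\lambda(\varphi)\to1$, taking $\epsilon=1-\tau$ produces a $\lambda_0$ beyond which $s_\lambda(\varphi)>\tau$, and the emergence clause then reduces to the definition of $\tau$-emergence. Your extra step of passing to a common upper bound $\lambda_2$ of $\lambda_0$ and the hidden scale $\lambda_1$ spells out what the paper leaves implicit when $\Lambda$ is not linearly ordered. If $\Lambda$ is only preordered, $\lambda_1$ is still strictly below $\lambda_2$, since $\lambda_1\ge\lambda_2$ would give $\lambda_1\ge\lambda_0$ and hence $s_{\lambda_1}(\varphi)>\tau$.
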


\begin{proof}
Since $\varphi\in T_{\infty}$, $s_{\lambda}(\varphi)\to1$. For $\tau<1$, convergence gives a scale $\lambda_0$ such that $s_{\lambda}(\varphi)>\tau$ for all $\lambda\ge\lambda_0$. The second statement is the definition of $\tau$-emergence.
\end{proof}

This definition makes emergence a relation between a limit theory and an observation threshold. It is not an unexplained creation of a semantic fact at large scale.

\begin{theorem}[Rate-sensitive threshold bound]\label{thm:ratebound}
Assume $\varphi\in T_\infty$ and suppose there are constants $a>0$, $\alpha>0$, and $\lambda_1$ such that
\[
1-s_\lambda(\varphi)\le a\lambda^{-\alpha}
\qquad\text{for all }\lambda\ge\lambda_1.
\]
Then for every threshold $\tau<1$,
\[
\varphi\in T_{\lambda,\tau}
\]
for all
\[
\lambda\ge \max\left\{\lambda_1,\left(\frac{a}{1-\tau}\right)^{1/\alpha}\right\}
\]
after an arbitrarily small enlargement of the right-hand side if the threshold is strict.
\end{theorem}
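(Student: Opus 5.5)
The plan is to turn the rate hypothesis into a lower bound on confidence and then solve for the scale at which that bound clears $\tau$. Throughout, $\lambda$ is treated as a positive real scale parameter, so that $\lambda^{-\alpha}$ makes sense and is decreasing.

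\emph{Step 1.} For every $\lambda\ge\lambda_1$ the hypothesis gives
\[
s_\lambda(\varphi)\ \ge\ 1-a\lambda^{-\alpha}.
\]
It therefore suffices to find the scales at which $1-a\lambda^{-\alpha}\ge\tau$, or $>\tau$ when strictness is needed.

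\emph{Step 2.} Since $1-\tau>0$, $a>0$ and $\alpha>0$, the map $\lambda\mapsto a\lambda^{-\alpha}$ is strictly decreasing on $(0,\infty)$. Hence
\[
a\lambda^{-\alpha}\le 1-\tau
\quad\Longleftrightarrow\quad
\lambda^{\alpha}\ge \frac{a}{1-\tau}
\quad\Longleftrightarrow\quad
\lambda\ge\lambda_\tau:=\Bigl(\frac{a}{1-\tau}\Bigr)^{1/\alpha}.
\]
So for $\lambda\ge\max\{\lambda_1,\lambda_\tau\}$ both the hypothesis and this inequality apply, and together they give $s_\lambda(\varphi)\ge\tau$.

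\emph{Step 3 (the only subtlety).} The threshold theory $T_{\lambda,\tau}$ uses a strict inequality, $s_\lambda(\varphi)>\tau$. At $\lambda=\lambda_\tau$ we only know $a\lambda^{-\alpha}=1-\tau$, so the bound yields $s_\lambda(\varphi)\ge\tau$, not $>\tau$. This is what the phrase ``arbitrarily small enlargement'' in the statement addresses. For any $\lambda>\lambda_\tau$, strict monotonicity gives $a\lambda^{-\alpha}<1-\tau$ and hence $s_\lambda(\varphi)>\tau$. Equivalently, for any $\varepsilon>0$ one can run Step 2 with $\tau+\varepsilon<1$ in place of $\tau$. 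This gives $\varphi\in T_{\lambda,\tau}$ for all $\lambda>\max\{\lambda_1,\lambda_\tau\}$, or for all $\lambda\ge\max\{\lambda_1,\lambda_\tau\}$ when $\lambda_1>\lambda_\tau$.

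There is no real obstacle beyond this. The conclusion is also consistent with Theorem~\ref{thm:manifest}: the bound simply makes the manifestation scale from that theorem explicit.
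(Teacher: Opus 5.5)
Your proposal is correct and follows essentially the same route as the paper: use the rate hypothesis to get $s_\lambda(\varphi)\ge 1-a\lambda^{-\alpha}$, then solve $a\lambda^{-\alpha}\le 1-\tau$ to obtain $\lambda\ge(a/(1-\tau))^{1/\alpha}$. For strictness the paper enlarges the bound to $(a/(1-\tau-\epsilon))^{1/\alpha}$, which is your $\tau+\varepsilon$ substitution; your strict-monotonicity remark, giving $s_\lambda(\varphi)>\tau$ for every $\lambda>\max\{\lambda_1,\lambda_\tau\}$, is a slightly sharper form of that same step.
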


\begin{proof}
If $\lambda$ satisfies the displayed lower bound before the final strict-threshold adjustment, then
\[
a\lambda^{-\alpha}\le 1-\tau.
\]
Therefore
\[
1-s_\lambda(\varphi)\le 1-\tau,
\]
so $s_\lambda(\varphi)\ge\tau$. For a strict threshold, replace the displayed lower bound by $(a/(1-\tau-\epsilon))^{1/\alpha}$ for any $\epsilon\in(0,1-\tau)$. Then $s_\lambda(\varphi)>\tau$.
\end{proof}

This theorem converts a qualitative limit-theory statement into a crossing-scale estimate. It is the form needed for theoretical comparison with empirical scaling curves: a rate of semantic convergence implies a bound on the scale at which the property becomes visible to a chosen threshold.

\subsection{Ultraproduct representation of scale limits}

The asymptotic commitments can be represented by a single structure through an ultraproduct. The following construction avoids assuming that one fixed structure satisfies all limit sentences at each finite scale.

\begin{theorem}[Ultraproduct witness for the limit theory]\label{thm:ultra}
Let $T_{\infty}$ be countable. Suppose for every finite $\Delta\subseteq T_{\infty}$ there exists $N_\Delta$ such that for all $n\ge N_\Delta$ some $\M_n\in\mathcal{K}_n$ satisfies $\Delta$. Then there are structures $\M_n\in\mathcal{K}_n$ and a nonprincipal ultrafilter $\mathcal{U}$ on $\N$ such that
\[
\prod_{n\to\mathcal U}\M_n\models T_{\infty}.
\]
\end{theorem}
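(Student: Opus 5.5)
We use the standard diagonal construction combined with Łoś's theorem. Enumerate the countable theory as $T_\infty=\{\varphi_1,\varphi_2,\ldots\}$ and set $\Delta_k:=\{\varphi_1,\ldots,\varphi_k\}$. By hypothesis each $\Delta_k$ has a threshold $N_{\Delta_k}$. We replace these by a nondecreasing sequence $N_1\le N_2\le\cdots$, namely $N_k:=\max_{j\le k}N_{\Delta_j}$. This is legitimate because the hypothesis holds for all $n\ge N_{\Delta}$, so enlarging a threshold is harmless.

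Next we choose the structures $\M_n$. For $n<N_1$, take $\M_n\in\mathcal{K}_n$ arbitrary; this uses that each $\mathcal{K}_n$ is nonempty, which is part of the definition of a semantic presentation. For $n\ge N_1$, let $k(n):=\max\{k:N_k\le n\}$ whenever this maximum exists. Then pick $\M_n\in\mathcal{K}_n$ with $\M_n\models\Delta_{k(n)}$, which the hypothesis guarantees.

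One wrinkle needs care. If $N_k$ stays bounded, then $k(n)$ is undefined. In that case every $\Delta_k$ is satisfiable at one fixed $n$, but not necessarily by a single structure there. We handle this by a different indexing: set $k(n):=\max\{k\le n: N_k\le n\}$. This quantity is always defined once $n\ge N_1$, and it tends to $\infty$ in both cases. Indeed, for any fixed $k$, once $n\ge\max(k,N_k)$ we have $k(n)\ge k$.

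We then take any nonprincipal ultrafilter $\mathcal U$ on $\N$, which exists by the ultrafilter lemma (Zorn) extending the Fréchet filter. Fix $\varphi_j\in T_\infty$. For every $n\ge\max(j,N_j)$ we have $k(n)\ge j$, so $\M_n\models\varphi_j$. The set $\{n:\M_n\models\varphi_j\}$ is therefore cofinite, and hence belongs to $\mathcal U$. Łoś's theorem then gives $\prod_{n\to\mathcal U}\M_n\models\varphi_j$.

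Two technical points must be addressed before Łoś applies. First, the structures $\M_n$ must all be $\Lang$-structures for one common signature. In Section~\ref{sec:emergence} the language $\Lang$ is fixed across scales, so this holds. Second, all sorts must be nonempty in the many-sorted setting; this is standard and holds automatically, since the factors are structures. If the scale set $\Lambda$ is not $\N$, the statement already indexes by $\N$, so nothing more is needed.

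The main obstacle is the uniformity issue addressed above. The hypothesis gives satisfiability of each finite $\Delta$ only from its own threshold onward. A single sequence $(\M_n)$ must therefore be chosen so that every individual sentence holds on a cofinite set. The diagonal indexing $k(n)$, nondecreasing and unbounded, is exactly what achieves this. It also ensures the result does not depend on the ultrafilter: any nonprincipal $\mathcal U$ works, because membership sets are cofinite.
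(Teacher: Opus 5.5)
Your proof is correct and follows the paper's argument: you enumerate $T_\infty$, choose $\M_n\models\Delta_{k(n)}$ with $k(n)\to\infty$ so that each set $\{n:\M_n\models\varphi_j\}$ is cofinite, and apply \L{}o\'s's theorem with any nonprincipal ultrafilter. Your truncated index $k(n)=\max\{k\le n:N_k\le n\}$ is a real improvement on the paper's proof, which takes $k(n)$ maximal with $N_{\Delta_{k(n)}}\le n$; that maximum need not exist when the thresholds are bounded.
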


\begin{proof}
Enumerate $T_{\infty}=\{\varphi_1,\varphi_2,\ldots\}$ and let
\[
\Delta_k:=\{\varphi_1,\ldots,\varphi_k\}.
\]
For each $n$, choose $k(n)$ maximal such that $N_{\Delta_{k(n)}}\le n$, taking $k(n)=0$ if no such $k$ exists. Choose $\M_n\in\mathcal{K}_n$ satisfying $\Delta_{k(n)}$ whenever $k(n)>0$, and choose any element of $\mathcal{K}_n$ otherwise. For each fixed $j$, all sufficiently large $n$ satisfy $k(n)\ge j$, so
\[
\{n:\M_n\models\varphi_j\}
\]
contains a cofinite set. Every nonprincipal ultrafilter contains all cofinite sets. By \L{}o\'s's theorem, the ultraproduct satisfies each $\varphi_j$.
\end{proof}

The ultraproduct is a limit witness. It packages the eventual finite satisfiability of semantic commitments into one mathematical object.

\subsection{Metric thresholds and apparent jumps}

A benchmark score is an observation of semantic confidence through a metric. If the metric is discontinuous, it can create threshold jumps without any discontinuity in the score being measured.

\begin{definition}[Observation functional]\label{def:obsfunctional}
An observation functional is a map
\[
\Omega:[0,1]\to\R.
\]
It is \emph{thresholded} at $\tau$ if
\[
\Omega_{\tau}(x)=\begin{cases}
0,&x<\tau,\\
1,&x\ge\tau.
\end{cases}
\]
\end{definition}

\begin{theorem}[Threshold jumps do not imply semantic discontinuity]\label{thm:mirage}
Let $s:\N\to[0,1]$ be nondecreasing and $s(n)\to1$. Fix $\tau\in(0,1)$ and define $m_{\tau}(n)=\Omega_{\tau}(s(n))$. If there exists $N$ such that
\[
s(N-1)<\tau\le s(N),
\]
then $m_{\tau}$ jumps at $N$. This can occur even when every increment $s(n)-s(n-1)$ is at most any prescribed $\epsilon>0$.
\end{theorem}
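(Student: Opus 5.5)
The statement has two parts: a direct jump claim and an existence claim about small increments. I would prove them in that order.

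\textbf{The jump.} This follows immediately from Definition~\ref{def:obsfunctional}. If $s(N-1)<\tau$, then $m_\tau(N-1)=\Omega_\tau(s(N-1))=0$. If $\tau\le s(N)$, then $m_\tau(N)=1$. So $m_\tau(N)-m_\tau(N-1)=1$, a unit jump at $N$.

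Monotonicity of $s$ also gives $m_\tau(n)=0$ for every $n<N$ and $m_\tau(n)=1$ for every $n\ge N$. Hence the observed metric is a step function with a single discontinuity, regardless of how $s$ behaves elsewhere.

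\textbf{Small increments.} For the second part I would give an explicit construction. Fix $\epsilon>0$ and choose an integer $K\ge 1/\epsilon$. Set
\[
s(n):=\min\{1,\,n/K\}.
\]
This sequence is nondecreasing and reaches $1$ at $n=K$, so $s(n)\to1$. Every increment satisfies $0\le s(n)-s(n-1)\le 1/K\le\epsilon$.

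It remains to show the crossing index exists. Let $N:=\min\{n:s(n)\ge\tau\}$, which is well defined because $s(n)\to1>\tau$. Since $s(0)=0<\tau$, we have $N\ge1$, and minimality of $N$ gives $s(N-1)<\tau\le s(N)$. The first part then yields the unit jump at $N$, even though $s$ never moves by more than $\epsilon$ in one step.

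\textbf{Main obstacle.} There is essentially none. The only point needing care is the existence of a crossing index $N$ with $N-1$ still in the domain, and this is secured by choosing $s(0)=0<\tau$. If the domain is taken to be $\N_{\ge1}$, I would shift the construction to $s(n)=\min\{1,(n-1)/K\}$, so that $s(1)=0<\tau$ and the same minimality argument applies.

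To connect with the anti-mirage interpretation, I would also note that the construction has uniformly bounded increments. The function $m_\tau$ therefore jumps while the confidence values vary with modulus $\epsilon$, and $\epsilon$ is arbitrary.
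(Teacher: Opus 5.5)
Your proposal is correct and follows the paper's proof. The jump is read off directly from the definition of $\Omega_\tau$, and the small-increment claim is witnessed by a fine-mesh sequence rising from $0$ toward $1$, which must cross $\tau$. The only differences are cosmetic: you use the explicit ramp $\min\{1,n/K\}$ where the paper uses a fine discretization of $1-e^{-t}$, and you spell out, via minimality, that a crossing index $N$ exists, which the paper leaves implicit.
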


\begin{proof}
The displayed inequality gives $m_{\tau}(N-1)=0$ and $m_{\tau}(N)=1$. For the final claim, fix $\epsilon>0$ and choose an increasing sequence from $0$ to $1$ whose mesh is at most $\epsilon$, for instance a sufficiently fine discretization of $1-e^{-t}$. It crosses $\tau$ at some index, so the thresholded metric jumps, while the underlying increments are bounded by $\epsilon$.
\end{proof}

\begin{proposition}[Continuous observation preserves graduality]\label{prop:continuousobs}
Let $\Omega:[0,1]\to\R$ be uniformly continuous. For every $\eta>0$ there exists $\epsilon>0$ such that if
\[
|s(n)-s(n-1)|<\epsilon,
\]
then
\[
|\Omega(s(n))-\Omega(s(n-1))|<\eta.
\]
\end{proposition}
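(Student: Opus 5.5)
This follows directly from the definition of uniform continuity, applied to the pair of points $s(n)$ and $s(n-1)$. Since $\Omega$ is uniformly continuous on $[0,1]$, for the given $\eta>0$ there is $\epsilon>0$ such that $|x-x'|<\epsilon$ implies $|\Omega(x)-\Omega(x')|<\eta$ for all $x,x'\in[0,1]$. We fix this $\epsilon$; it depends only on $\Omega$ and $\eta$, not on $n$ or on the sequence $s$.

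Next, we take any index $n$ with $|s(n)-s(n-1)|<\epsilon$. Because $s$ takes values in $[0,1]$, both $s(n)$ and $s(n-1)$ are admissible inputs. Setting $x=s(n)$ and $x'=s(n-1)$ in the defining implication then gives $|\Omega(s(n))-\Omega(s(n-1))|<\eta$, which is exactly the claim.

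The step that needs the most care is the choice of hypothesis. Mere continuity would not give an $\epsilon$ uniform over the base point, so the bound could not hold for all $n$ simultaneously. Uniform continuity is what provides this, and it is in any case automatic: a continuous function on the compact interval $[0,1]$ is uniformly continuous by Heine--Cantor. We would add a remark contrasting this with Theorem~\ref{thm:mirage}. There $\Omega_\tau$ fails to be uniformly continuous, in fact fails to be continuous, at $\tau$, so no $\epsilon$ works for $\eta<1$. This is why bounded increments of $s$ can still produce a unit jump in the thresholded metric.
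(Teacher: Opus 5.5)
Your proposal is correct and matches the paper's proof, which is the one-line observation that the claim is uniform continuity of $\Omega$ applied to the two points $s(n)$ and $s(n-1)$. Your extra remarks are accurate but not needed: that $\epsilon$ does not depend on $n$, and that $\Omega_\tau$ is discontinuous at $\tau$, which is why Theorem~\ref{thm:mirage} escapes this bound.
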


\begin{proof}
This is precisely uniform continuity applied to the two points $s(n)$ and $s(n-1)$.
\end{proof}

The mathematical distinction is now explicit. A jump in a thresholded metric is evidence about the metric and the crossing point. It is not, by itself, evidence that semantic confidence changed discontinuously.

A common way for a benchmark to manufacture a threshold is to score an answer as correct only when several fields are simultaneously correct. Exact match over a multi-field certificate is exactly such a conjunction, and it sharpens the apparent jump in a way the next proposition makes precise; it is the mechanism observed in Section~\ref{sec:benchmark}.

\begin{proposition}[Conjunctive thresholding of exact match]\label{prop:conj}
Let an item require $k$ answer fields, and suppose a generator returns each field correctly and independently with probability $s\in[0,1]$. Score the item by its graded value, the expected fraction of correct fields, and by exact match, the event that all $k$ fields are correct. Then:
\begin{enumerate}[label=(\roman*),leftmargin=2.5em]
\item the graded value is $s$ and the exact-match probability is $s^{k}$, so $s^{k}\le s$, with equality iff $k=1$ or $s\in\{0,1\}$;
\item the rise of exact match is compressed toward $s=1$: for every $\eta\in(0,1)$, $s^{k}<\eta$ whenever $s<\eta^{1/k}$, and the width $1-\eta^{1/k}$ of that interval tends to $0$ as $k\to\infty$;
\item the exact match of one item is the conjunction of its $k$ field-correctness events, hence a thresholded observation in the sense of Definition~\ref{def:obsfunctional}; by Theorem~\ref{thm:mirage} it can flip from $0$ to $1$ while the graded field-confidence rises smoothly, and aggregated over items the exact accuracy follows the compressed envelope $s^{k}$ of (i)--(ii). For a single field $(k=1)$ the observation is the identity and Proposition~\ref{prop:continuousobs} applies, so no compression occurs;
\item if moreover $1-s_{\lambda}\le a\lambda^{-\alpha}$ for a scale parameter $\lambda$ (the rate hypothesis of Theorem~\ref{thm:ratebound}), then exact match on a $k$-field item clears a threshold $\tau$ only once
\[
\lambda\ge\Bigl(\tfrac{ka}{1-\tau}\Bigr)^{1/\alpha},
\]
a factor $k^{1/\alpha}$ later than the single-field crossing scale of Theorem~\ref{thm:ratebound}.
\end{enumerate}
\end{proposition}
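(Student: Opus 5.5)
The plan is to prove the four items in order, since each uses only elementary facts about $s\mapsto s^k$ together with earlier results in the paper.

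For (i), write the item's score in terms of the field indicators $Z_1,\ldots,Z_k$, which are independent Bernoulli$(s)$. The graded value is $\E[\frac1k\sum_j Z_j]=s$ by linearity of expectation. Exact match is the event $\prod_j Z_j=1$, and independence gives probability $s^k$. The inequality $s^k\le s$ holds because $s^{k-1}\le 1$ on $[0,1]$. Equality means $s(1-s^{k-1})=0$, which forces $s=0$, or $s^{k-1}=1$ (so $s=1$), or $k=1$.

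For (ii), use strict monotonicity of $s\mapsto s^k$ on $[0,1]$: if $s<\eta^{1/k}$ then $s^k<\eta$. Since $\eta\in(0,1)$, we have $\eta^{1/k}=e^{(\log\eta)/k}\to1$. Hence the width $1-\eta^{1/k}\to0$.

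For (iii), the argument is interpretive. The item-level exact-match score is $\Omega_1$ applied to the fraction of correct fields, or equivalently the indicator of the conjunction. This is a thresholded observation of the kind in Definition~\ref{def:obsfunctional}. Theorem~\ref{thm:mirage} then applies to any smoothly increasing confidence sequence crossing the threshold. Averaging over independent items gives expected accuracy $s^k$ by (i). For $k=1$, exact match coincides with the graded value, so $\Omega$ is the identity, which is uniformly continuous, and Proposition~\ref{prop:continuousobs} applies.

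Item (iv) is the main obstacle, because the exact crossing condition is $s_\lambda^k\ge\tau$, which is not linear in $1-s_\lambda$. I would use Bernoulli's inequality, $s^k\ge 1-k(1-s)$, which gives $1-s_\lambda^k\le k(1-s_\lambda)\le ka\lambda^{-\alpha}$. So the conjunction obeys the rate hypothesis of Theorem~\ref{thm:ratebound} with constant $ka$ in place of $a$. Invoking that theorem shows exact match is $\tau$-manifest once $\lambda\ge\max\{\lambda_1,(ka/(1-\tau))^{1/\alpha}\}$, with the same strict-threshold adjustment as there. This is $k^{1/\alpha}$ times the single-field scale $(a/(1-\tau))^{1/\alpha}$.

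The subtlety is the phrase ``only once''. The rate hypothesis is an upper bound on $1-s_\lambda$, so it yields a sufficient crossing scale, not a necessary one. I would therefore state (iv) as the guaranteed crossing scale obtained from the bound. I would add that the factor $k$ is sharp to first order: when $1-s_\lambda$ is small, $1-s_\lambda^k=k(1-s_\lambda)+O((1-s_\lambda)^2)$. Hence if the rate bound is tight, the exact-match crossing scale really is inflated by $k^{1/\alpha}$ up to lower-order terms.
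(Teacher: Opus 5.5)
Your proposal is correct and follows essentially the same route as the paper's proof. Item (i) uses independence and linearity, (ii) uses monotonicity of $s\mapsto s^k$, (iii) gives the same interpretive reading, and (iv) uses Bernoulli's inequality; your caveat is well taken that ``only once'' is not what the rate hypothesis yields, since the paper's argument also establishes only the sufficient direction (exact match clears $\tau$ as soon as $\lambda\ge(ka/(1-\tau))^{1/\alpha}$).
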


\begin{proof}
(i) By independence the all-correct probability is $\prod_{j=1}^{k}s=s^{k}$, while the expected fraction of correct fields is $\frac1k\sum_{j=1}^{k}s=s$; the inequality and its equality cases are elementary. (ii) $s^{k}<\eta\iff s<\eta^{1/k}$, and $\eta^{1/k}\to1$ as $k\to\infty$. (iii) The all-correct indicator is the meet of the $k$ field indicators, a thresholded observation; Theorem~\ref{thm:mirage} gives the flip from a smooth underlying confidence, and averaging the indicator over a population with per-field reliability $s$ returns $s^{k}$. The case $k=1$ is the identity observation, covered by Proposition~\ref{prop:continuousobs}. (iv) By Bernoulli's inequality $(1-a\lambda^{-\alpha})^{k}\ge1-ka\lambda^{-\alpha}$, so $s_{\lambda}^{k}\ge\tau$ as soon as $ka\lambda^{-\alpha}\le1-\tau$, that is $\lambda\ge(ka/(1-\tau))^{1/\alpha}$.
\end{proof}

This is a purely metric effect: the underlying per-field confidence may rise by a fixed small amount while the exact-match score on a $k$-field item rises arbitrarily more steeply, and the scale at which it becomes visible is inflated by $k^{1/\alpha}$. Section~\ref{sec:benchmark} exhibits both halves on trained models, with the multi-field certificate families jumping and the single-field family staying gradual.

\subsection{Stability of answer sets under posterior perturbation}

The set-theoretic account above can be paired with probabilistic robustness. If two posterior semantic measures are close on definable events, then their confidence assignments are close on all formulas in the tested fragment.

\begin{definition}[Fragment total variation]
For two probability measures $\mu$ and $\nu$ on $\K$, and a fragment $\mathcal{F}\subseteq\Sent(\Lang)$, define
\[
\TV_{\mathcal F}(\mu,\nu):=
\sup_{\varphi\in\mathcal F}\left|\mu([\![\varphi]\!]_{\K})-\nu([\![\varphi]\!]_{\K})\right|.
\]
\end{definition}

\begin{proposition}[Confidence robustness]\label{prop:tvrobust}
If $\TV_{\mathcal F}(\mu,\nu)\le\epsilon$, then for every $\varphi\in\mathcal F$,
\[
|s_{\mu}(\varphi)-s_{\nu}(\varphi)|\le\epsilon,
\]
where $s_{\mu}(\varphi)=\mu([\![\varphi]\!]_{\K})$ and similarly for $\nu$.
\end{proposition}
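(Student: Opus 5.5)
The argument is a direct unfolding of the definitions, with no earlier result needed beyond the notation. The fragment total variation $\TV_{\mathcal F}(\mu,\nu)$ is a supremum over $\varphi\in\mathcal F$ of exactly the quantities $|\mu([\![\varphi]\!]_{\K})-\nu([\![\varphi]\!]_{\K})|$. By the definition of semantic confidence, these are the quantities $|s_\mu(\varphi)-s_\nu(\varphi)|$.

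I will fix an arbitrary $\varphi\in\mathcal F$ and carry out three steps. First, I will rewrite $s_\mu(\varphi)$ and $s_\nu(\varphi)$ as the measures of the same definable event $[\![\varphi]\!]_{\K}$. This is meaningful because both $\mu$ and $\nu$ are defined on a $\sigma$-algebra containing $\mathsf{Def}_{\Lang}(\K)$, by Lemma~\ref{lem:boolean} and the definition of a measured model class. Second, I will observe that the single term $|\mu([\![\varphi]\!]_{\K})-\nu([\![\varphi]\!]_{\K})|$ is one of the numbers over which the supremum defining $\TV_{\mathcal F}(\mu,\nu)$ is taken, so it is at most that supremum. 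Third, I will chain this with the hypothesis $\TV_{\mathcal F}(\mu,\nu)\le\epsilon$ to get $|s_\mu(\varphi)-s_\nu(\varphi)|\le\epsilon$. Since $\varphi$ was arbitrary, the claim holds for every $\varphi\in\mathcal F$.

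There is no genuine obstacle. The only point needing a word is well-definedness: the values must depend only on the event, not on the syntactic representative of $\varphi$. This follows from Lemma~\ref{lem:boolean}, since $\varphi\mapsto[\![\varphi]\!]_{\K}$ factors through equivalence over $\K$. It is also worth noting that the bound is uniform over $\mathcal F$ and needs no closure properties of the fragment. A corollary can be recorded if useful: if $\mathcal F$ is closed under negation, threshold membership at $\tau$ under $\mu$ transfers to membership at $\tau-\epsilon$ under $\nu$.
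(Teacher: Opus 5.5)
Your proof is correct and is the same as the paper's, which simply observes that the bound is immediate from the definition of the supremum defining $\TV_{\mathcal F}(\mu,\nu)$. One small remark on your closing aside: closure of $\mathcal F$ under negation is not needed there, since $s_\nu(\varphi)\ge s_\mu(\varphi)-\epsilon>\tau-\epsilon$ follows directly (this is the paper's Corollary~\ref{cor:thresholdmargin} with $\tau$ shifted).
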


\begin{proof}
This is immediate from the definition of the supremum.
\end{proof}

\begin{corollary}[Threshold stability margin]\label{cor:thresholdmargin}
Let $\varphi\in\mathcal F$ and suppose $s_{\mu}(\varphi)>\tau+\epsilon$. If $\TV_{\mathcal F}(\mu,\nu)\le\epsilon$, then $s_{\nu}(\varphi)>\tau$.
\end{corollary}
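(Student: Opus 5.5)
The plan is to apply Proposition~\ref{prop:tvrobust} to the single sentence $\varphi$ and then chain two strict inequalities; no further machinery is needed. Since $\varphi\in\mathcal F$ and $\TV_{\mathcal F}(\mu,\nu)\le\epsilon$, Proposition~\ref{prop:tvrobust} yields
\[
|s_{\mu}(\varphi)-s_{\nu}(\varphi)|\le\epsilon .
\]

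Only the lower half of this absolute-value bound is needed. I will rewrite it one-sidedly as
\[
s_{\nu}(\varphi)\ge s_{\mu}(\varphi)-\epsilon .
\]
Then I will combine it with the hypothesis $s_{\mu}(\varphi)>\tau+\epsilon$. Subtracting $\epsilon$ from both sides of the hypothesis gives $s_{\mu}(\varphi)-\epsilon>\tau$. Together with the displayed inequality this gives $s_{\nu}(\varphi)>\tau$, which is the claim.

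The two inequalities chain with the correct directions: the TV step is non-strict, the margin hypothesis is strict, and the composite is strict.

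I will close with a remark reading the result in the language of Section~\ref{sec:emergence}. It says that membership of $\varphi$ in the threshold set $\{\psi: s(\psi)>\tau\}$ survives any perturbation of the posterior of fragment total variation at most $\epsilon$, provided the margin above the threshold exceeds $\epsilon$.

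There is no real obstacle. The only point to check is that $\varphi$ lies in the fragment $\mathcal F$ over which the supremum in $\TV_{\mathcal F}$ is taken, and this is an explicit hypothesis.
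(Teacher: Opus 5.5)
Your proposal is correct and matches the paper's proof exactly: apply Proposition~\ref{prop:tvrobust} to $\varphi\in\mathcal F$ to get $s_{\nu}(\varphi)\ge s_{\mu}(\varphi)-\epsilon$, then use the strict margin hypothesis to conclude $s_{\nu}(\varphi)>\tau$.
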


\begin{proof}
By Proposition~\ref{prop:tvrobust}, $s_{\nu}(\varphi)\ge s_{\mu}(\varphi)-\epsilon>\tau$.
\end{proof}

This margin statement is the semantic analogue of robustness: once a property is sufficiently above threshold, small posterior perturbations cannot remove its threshold visibility.

\section{Verification, predictability, and finite certificates}\label{sec:verification}

The semantic calculus yields finite certificates in two directions: finite context certificates and finite scale thresholds. The first comes from compactness. The second comes from convergence in the limit theory.

\begin{theorem}[Finite property certificates]\label{thm:finitecert}
Let $q$ be a query term and let $\chi(y)$ be a formula with one free variable of sort $\mathbf{Y}$. If
\[
T_{\omega}\models\chi(f(q)),
\]
then there exists $N\in\N$ such that
\[
T_N\models\chi(f(q)).
\]
\end{theorem}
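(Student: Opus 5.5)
The plan is to reduce the statement to Theorem~\ref{thm:noinfiniteonly}, applied to the sentence $\psi:=\chi(f(q))$. The only preliminary point is that $\psi$ really is a sentence of the union language $\bigcup_n\Lang_n$. The query $q$ is a closed term and $\chi$ has exactly one free variable $y$ of sort $\mathbf{Y}$. Substituting the closed term $f(q)$ for $y$ therefore gives a closed formula with no free variables. Its symbols are those of $\chi$, together with $f$ and the symbols of $q$. These are finitely many, so they all lie in some $\Lang_{n_0}$. If $q$ is an $\Lang_0$-term, as in the setup of Theorem~\ref{thm:stabilization}, and $\chi$ is an $\Lang_0$-formula, then $n_0=0$.

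With this in place I would repeat the compactness argument of Theorem~\ref{thm:stabilization} directly rather than only cite it.
\begin{itemize}[leftmargin=2em]
\item The hypothesis $T_\omega\models\psi$ means that $T_\omega\cup\{\neg\psi\}$ has no model.
\item By compactness there is a finite $\Sigma\subseteq T_\omega$ such that $\Sigma\cup\{\neg\psi\}$ is unsatisfiable, that is, $\Sigma\models\psi$.
\item Each element of $\Sigma$ lies in some $T_n$. The chain is increasing by Proposition~\ref{prop:monotoneT}, with $T_n\subseteq T_{n+1}$ after viewing sentences in the larger language. So $\Sigma\subseteq T_{N'}$ for $N'$ the maximum of these finitely many indices.
\item Set $N:=\max(N',n_0)$. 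Then $\Sigma\subseteq T_N$ and $\psi\in\Sent(\Lang_N)$.
\end{itemize}
Any model of $T_N$ satisfies $\Sigma$, and hence satisfies $\psi$. This gives $T_N\models\chi(f(q))$.

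The main point to check is the language bookkeeping, since entailment here is relative to expanding languages. I would first note that unsatisfiability of $T_\omega\cup\{\neg\psi\}$ as a $\bigcup_n\Lang_n$-theory gives $\Sigma\models\psi$ in that big language. Satisfaction of $\Lang_N$-sentences is preserved under reducts and expansions, so this transfers to $T_N\models\psi$ in $\Lang_N$. With that checked, the rest is immediate.

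Alternatively, the whole statement is the second equivalent form of Theorem~\ref{thm:noinfiniteonly} with $\psi=\chi(f(q))$. I would present it that way, keeping the explicit compactness steps above only as a remark. The theorem thus generalizes Theorem~\ref{thm:stabilization} from the atomic property $f(q)=t$ to arbitrary first-order output properties, for example $\chi(y)\equiv y\neq t$ recovers Corollary~\ref{cor:exclusion}.
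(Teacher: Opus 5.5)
Your proposal is correct and matches the paper's proof: compactness gives a finite $\Sigma\subseteq T_\omega$ with $\Sigma\cup\{\neg\chi(f(q))\}$ inconsistent, and since the chain is increasing, $\Sigma\subseteq T_N$ for some $N$. Your extra bookkeeping is a harmless refinement of details the paper leaves implicit: you take $N\ge n_0$ so that $\chi(f(q))\in\Sent(\Lang_N)$, and you check the reduct/expansion transfer.
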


\begin{proof}
The entailment is equivalent to inconsistency of
\[
T_{\omega}\cup\{\neg\chi(f(q))\}.
\]
By compactness, a finite subset $\Sigma\subseteq T_{\omega}$ already gives the inconsistency. Since the chain is increasing, $\Sigma\subseteq T_N$ for some $N$. Therefore $T_N\models\chi(f(q))$.
\end{proof}

\begin{definition}[Prompt-stable property]
Let $\mathcal{P}$ be a class of prompt specifications on $X$. A sentence $\psi$ is \emph{stable over $\mathcal{P}$} if
\[
(\forall p\in\mathcal{P})(\forall\M\in\mathsf{Upd}_p(X))\ \M\models\psi.
\]
It is \emph{locally stable at $p$} if there exists a neighborhood $\mathcal{N}(p)\subseteq\mathcal{P}$ such that $\psi$ is stable over $\mathcal{N}(p)$.
\end{definition}

\begin{proposition}[Stability by invariant support]\label{prop:invariant}
If there is a set $Y\subseteq X$ such that
\[
\mathsf{Upd}_p(X)\subseteq Y
\qquad\text{for every }p\in\mathcal{P},
\]
and every $\M\in Y$ satisfies $\psi$, then $\psi$ is stable over $\mathcal{P}$.
\end{proposition}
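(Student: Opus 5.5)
The argument unfolds the definition of stability directly; no measure-theoretic or compactness input is needed. Stability over $\mathcal{P}$ asks that for every prompt $p\in\mathcal{P}$ and every structure $\M\in\mathsf{Upd}_p(X)$ we have $\M\models\psi$. So I will fix an arbitrary $p\in\mathcal{P}$ and an arbitrary $\M\in\mathsf{Upd}_p(X)$, and verify $\M\models\psi$ by routing through the invariant set $Y$.

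The key steps, in order, are as follows.

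\begin{enumerate}[label=(\arabic*),leftmargin=2.5em]
\item Apply the hypothesis $\mathsf{Upd}_p(X)\subseteq Y$ to the chosen $p$. This gives $\M\in Y$.
\item Apply the second hypothesis, that every element of $Y$ satisfies $\psi$. This gives $\M\models\psi$.
\item Since $p$ and $\M$ were arbitrary, the universally quantified condition defining stability over $\mathcal{P}$ holds.
\end{enumerate}

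Equivalently, in the notation of the preservation section, the hypothesis says $Y\subseteq[\![\psi]\!]_X$. Hence $\mathsf{Upd}_p(X)\subseteq[\![\psi]\!]_X$ for every $p$, which means $p\Vdash_X\psi$ uniformly over $\mathcal{P}$.

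I would add one remark on degenerate cases. If some $\mathsf{Upd}_p(X)$ is empty, the condition for that $p$ holds vacuously, so no nonemptiness assumption (unlike in Theorem~\ref{thm:preservation}) is needed.

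There is no genuine obstacle. The only point needing care is that stability is defined by universal quantification over the selected sets. It is not defined through the consequence relation $\Vdash_X$, which presupposes nonempty updates. I would therefore phrase the proof directly in terms of the definition of stability.

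Finally, I would note the immediate corollary for local stability. Applying the same argument with $\mathcal{P}$ replaced by a neighborhood $\mathcal{N}(p)$ shows that an invariant support for the prompts in $\mathcal{N}(p)$ yields local stability at $p$.
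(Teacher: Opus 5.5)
Your proof is correct and matches the paper's argument: for each $p\in\mathcal{P}$ the selected structures lie in $Y$, and every structure in $Y$ satisfies $\psi$. Your side remarks are also correct, namely the vacuous case when an update is empty and the local-stability version over a neighborhood $\mathcal{N}(p)$.
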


\begin{proof}
For each $p\in\mathcal{P}$, all selected structures lie in $Y$, and every structure in $Y$ satisfies $\psi$. Hence all selected structures satisfy $\psi$.
\end{proof}

\begin{corollary}[Asymptotic predictability of certified properties]\label{cor:predictability}
Suppose a scale family is decisive and $\psi\in T_{\infty}$. Then for every $\tau<1$ there exists $\lambda_0$ such that
\[
\psi\in T_{\lambda,\tau}
\qquad\text{for all }\lambda\ge\lambda_0.
\]
If $\psi$ is also entailed by an in-context limit theory $T_{\omega}$, then there exists a finite context stage $N$ such that $T_N\models\psi$.
\end{corollary}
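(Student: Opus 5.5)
The corollary has two independent halves, and I would prove each by reducing it to a result already established. For the first half, decisiveness together with $\psi\in T_\infty$ means that $\lim_\lambda s_\lambda(\psi)$ exists and equals $1$. That is exactly the hypothesis of Theorem~\ref{thm:manifest}. Given $\tau<1$, set $\eta:=1-\tau>0$. Convergence along the directed set $\Lambda$ gives some $\lambda_0$ with $|1-s_\lambda(\psi)|<\eta$ for every $\lambda\ge\lambda_0$. Hence $s_\lambda(\psi)>\tau$ for all such $\lambda$, which says $\psi\in T_{\lambda,\tau}$ for all $\lambda\ge\lambda_0$. I would just cite Theorem~\ref{thm:manifest} and note that the conclusion is $\tau$-manifestation from $\lambda_0$. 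The only subtlety is that ``for all $\lambda\ge\lambda_0$'' must be read in the order of $\Lambda$, which is precisely what net convergence supplies.

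For the second half, suppose $T_\omega\models\psi$ with $\psi$ a sentence of $\bigcup_n\Lang_n$. Theorem~\ref{thm:noinfiniteonly}, in its ``equivalently'' form, gives a finite $N$ with $T_N\models\psi$. Alternatively one can rerun the compactness argument of Theorem~\ref{thm:stabilization}:
\begin{itemize}
\item $T_\omega\cup\{\neg\psi\}$ is unsatisfiable;
\item so some finite $\Sigma\subseteq T_\omega$ already gives $\Sigma\cup\{\neg\psi\}$ unsatisfiable;
\item because the chain is increasing (Proposition~\ref{prop:monotoneT}), $\Sigma\subseteq T_N$ for some $N$;
\item increasing $N$ if needed so that $\Lang_N$ contains the finitely many symbols of $\psi$, we get $T_N\models\psi$.
\end{itemize}

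The step I expect to need the most care is bookkeeping rather than mathematics. The sentence $\psi$ lives in two settings: the scale language $\Lang$ of the presentations $\mathbb{G}_\lambda$, and the expansion-chain language $\bigcup_n\Lang_n$. I would state that the second clause is read with $\psi$ in the chain's language, and that no interaction between the two clauses is claimed. Each clause then follows from its own earlier theorem, and the corollary is their conjunction.
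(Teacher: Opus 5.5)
Your proposal is correct and matches the paper's proof, which cites Theorem~\ref{thm:manifest} for the first clause and a compactness result for the second. The paper invokes Theorem~\ref{thm:finitecert}, which is stated for formulas of the form $\chi(f(q))$, ``applied to the relevant formula''; your appeal to Theorem~\ref{thm:noinfiniteonly}, stated for arbitrary sentences of $\bigcup_n\Lang_n$, fits more directly, and your note on reading $\psi$ in the chain language makes explicit a point the paper leaves implicit.
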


\begin{proof}
The first statement is Theorem~\ref{thm:manifest}. The second statement is Theorem~\ref{thm:finitecert} applied to the relevant formula.
\end{proof}

This separates two verification questions. A finite context stage can certify a property at the logical level. A sufficiently large scale can make that property visible above a confidence threshold. Neither statement requires treating the model as transparent at the level of weights.

\subsection{Certificate-checked refinement}\label{sec:refinement}

The certificates above are finite, and for the three families they are also \emph{checkable without the answer they certify}. This makes a closed-loop use of them possible, in which checking, rather than a reference key, drives improvement.

\begin{definition}[Sound oracle-free checker]\label{def:checker}
For a family of instances $x$ with an admissible-answer relation, a \emph{checker} is a map $V$ sending a pair $(x,c)$ of an instance and a candidate certificate to $\{\mathsf{accept},\mathsf{reject}\}$ that is computable from $(x,c)$ alone. It is \emph{sound} if $V(x,c)=\mathsf{accept}$ implies that the answer encoded by $c$ is correct, and \emph{oracle-free} in that it never reads a reference answer.
\end{definition}

\begin{lemma}[The family criteria are sound checkers]\label{lem:checkers}
Each family admits a sound oracle-free checker.
\begin{enumerate}[label=(\roman*),leftmargin=2.5em]
\item Linear family: accept a forced certificate $(c,v)$ when $\sum_i c_i a_i=q$ and $v=\sum_i c_i b_i$ over $\F$, and an underdetermination certificate $(w_1,w_2)$ when both satisfy $w^\top a_i=b_i$ for all $i$ and $w_1^\top q\neq w_2^\top q$.
\item Threshold family: accept $(\lambda,\delta)$ when $\lambda\ge2$ is an integer, $s_{\lambda}\ge\tau>s_{\lambda-1}$, and $\delta=s_{\lambda}-s_{\lambda-1}$.
\item Preferential family: accept $(S_p,S_{p\oplus q},b)$ when $S_p$ and $S_{p\oplus q}$ are the rank-minimal selected sets and the bit $b$ equals $[\,S_{p\oplus q}\subseteq S_p\,]$.
\end{enumerate}
\end{lemma}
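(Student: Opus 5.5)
The plan is to prove each of the three items separately. In each case there are two things to establish. First, oracle-freeness: the acceptance condition is computable from the instance data and the candidate certificate alone. Second, soundness: acceptance implies that the encoded answer is correct. Oracle-freeness is immediate by inspection in all three cases. The checks are finitely many field operations over $\F$ on the given $(a_i,b_i,q)$; comparisons of the values $s_\lambda,s_{\lambda-1}$, which are part of the threshold instance; and computing rank-minima over the finite set $X\cap\Mod(H)$ with the given rankings. None of these reads a reference answer. The substance is therefore soundness, and for each family I would reduce it to an earlier theorem.

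For the linear family, take a forced certificate $(c,v)$. The condition $\sum_i c_ia_i=q$ says exactly that $q=A_n^\top c\in\Row(A_n)$, so Theorem~\ref{thm:linear}(ii) gives that $q$ is determined. Moreover, for any consistent $w\in W_n$ we have $w^\top q=\sum_i c_i\,w^\top a_i=\sum_i c_i b_i=v$, so the encoded value is the determined answer; this is the same computation as in the proof of Theorem~\ref{thm:linear}(ii). For an underdetermination certificate $(w_1,w_2)$, the acceptance condition places both $w_1,w_2$ in $W_n$ and makes them disagree on $q$. That directly witnesses that $q$ is not determined, which by Theorem~\ref{thm:linear}(ii) is the claim $q\notin\Row(A_n)$. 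It is also exactly the two-model witness of Proposition~\ref{prop:prefixlower}.

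For the threshold family, the encoded answer is the crossing scale and the jump size. If $s_\lambda\ge\tau>s_{\lambda-1}$, then under the observation functional $\Omega_\tau$ of Definition~\ref{def:obsfunctional} we have $m_\tau(\lambda-1)=0$ and $m_\tau(\lambda)=1$, which is precisely the jump of Theorem~\ref{thm:mirage} at $N=\lambda$. The reported $\delta$ is the underlying increment by definition. If the family's notion of correctness is the first crossing, I would invoke monotonicity of $s$ (the standing hypothesis of Theorem~\ref{thm:mirage}): a nondecreasing sequence crosses $\tau$ at most once, so the certified index is the unique crossing.

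For the preferential family, the checker recomputes $\Min_{\rho_p}(X\cap\Mod(H_p))$ and $\Min_{\rho_{p\oplus q}}(X\cap\Mod(H_p\cup H_q))$ and verifies the submitted sets and the bit. The encoded answer is whether prompt extension preserves consequences. Soundness follows from Theorem~\ref{thm:preservation} in both directions. If $S_{p\oplus q}\subseteq S_p$, then every sentence true on $S_p$ is true on $S_{p\oplus q}$, so (iii) holds. Conversely, under the theorem's hypotheses (finite selected set, language separating points), preservation implies containment.

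I expect this last step to be the main obstacle. The converse direction of Theorem~\ref{thm:preservation} needs point separation, so soundness of the bit $b=0$ as meaning ``not preserved'' must be stated under that assumption, or else the encoded answer must be taken to be containment itself. Either way, I would make the finite-$X$ and separation assumptions explicit in this case.
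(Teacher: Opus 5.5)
Your proposal is correct and follows essentially the paper's route. It reads soundness of each acceptance condition off the criterion it restates: Theorem~\ref{thm:linear}(ii), the least-crossing condition for $s_\lambda$, and the finite case of Theorem~\ref{thm:preservation}, with oracle-freeness following because every check is a function of the instance and certificate alone. Your explicit appeals to monotonicity of $s_\lambda$ (needed for ``least crossing'') and to point separation (needed for $b=0$ to mean non-preservation, and built into the benchmark's ``logically distinguishable worlds'') only make visible hypotheses that the paper's proof leaves implicit.
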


\begin{proof}
Each acceptance condition restates the corresponding criterion. (i) is Theorem~\ref{thm:linear}(ii): a row-space combination with $c^\top A=q^\top$ forces $w^\top q=c^\top b$ for every consistent $w$, while two consistent witnesses separating $q$ exhibit underdetermination. (ii) is the crossing-scale condition behind Theorem~\ref{thm:ratebound}: $s_{\lambda}\ge\tau>s_{\lambda-1}$ identifies $\lambda$ as the least crossing scale and fixes the local increment. (iii) is the finite case of Theorem~\ref{thm:preservation}, whose preservation bit is exactly the containment of selected sets. Every condition is a function of $x$ and $c$, so $V$ is oracle-free.
\end{proof}

\begin{definition}[Aversive refinement schedule]\label{def:refine}
Fix a sound checker $V$ and a budget $K$. A refinement maintains an \emph{accepted set} $A_t$ of items, each carrying a verified certificate. The set $A_0$ is whatever a first pass certifies. At round $t\ge1$ a generator proposes certificates for the items outside $A_{t-1}$, conditioned on a memory $M_t$ of its own previously rejected certificates together with the checker's rejection reasons for them; $V$ is applied, and items it accepts enter $A_t$ and are never revised. With $p_t$ the number of still-unverified items, the schedule greedily increases
\[
J=\sum_{t}\bigl(|A_t|-\kappa\,p_t+\nu\,z_t\bigr),\qquad z_t=|A_{t-1}|,
\]
in which a rejection is the aversive term $p$ and the retained certificates are the continuity term $z$; accepted certificates are never revised, so there is no disruption term.
\end{definition}

\begin{proposition}[Monotone soundness of refinement]\label{prop:refine}
For every generator and budget $K$:
\begin{enumerate}[label=(\roman*),leftmargin=2.5em]
\item $A_0\subseteq A_1\subseteq\cdots\subseteq A_K$;
\item every answer in each $A_t$ is correct;
\item $|A_t|$ is non-decreasing and is a lower bound on the generator's true accuracy;
\item the schedule consumes no reference answer, so the gain $|A_K|-|A_0|$ counts additional correct certificates the generator can be driven to produce under checking, not information supplied by a key.
\end{enumerate}
\end{proposition}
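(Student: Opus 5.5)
The proof will unfold the definitions of the aversive refinement schedule (Definition~\ref{def:refine}) and the sound checker (Definition~\ref{def:checker}), proving the four items in order. No quantitative estimate is needed. The content lies in two structural features of the schedule: the update rule only ever adds accepted items and never revises them, and the checker's soundness is a property of $V$ alone, independent of the generator.

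For (i), induct on $t$. By construction $A_t = A_{t-1}\cup\{\text{items outside }A_{t-1}\text{ whose proposed certificate }V\text{ accepts}\}$, and accepted items are never revised or removed, so $A_{t-1}\subseteq A_t$ for each $1\le t\le K$. Chaining these inclusions gives the full chain. This holds for every generator, every memory $M_t$ and every budget, because the generator only influences which new items are added.

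For (ii), use induction again together with soundness. Items in $A_0$ entered because $V$ accepted their first-pass certificate. Items entering at round $t$ likewise carry a certificate with $V(x,c)=\mathsf{accept}$. Soundness then says the encoded answer is correct. Lemma~\ref{lem:checkers} guarantees that such sound checkers exist for the three families, so the hypothesis is not vacuous. Since the certificate attached to an item is frozen once accepted, correctness persists through all later rounds.

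For (iii), monotonicity of $|A_t|$ follows from (i). For the lower-bound claim, I would read the generator's true accuracy at round $t$ as the number of items for which it has produced a correct answer. By (ii), every element of $A_t$ contributes one such correct answer, so $|A_t|$ is at most that count. Some correct answers may have been rejected for bad certificate form, so the inequality can be strict.

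For (iv), observe that $V$ is oracle-free: each acceptance is computed from $(x,c)$ alone. The memory $M_t$ contains only the generator's own rejected certificates and $V$'s rejection reasons, both functions of instances and candidates. By induction, every object consulted in the loop is therefore a function of instances and generator outputs, and no reference answer enters. So $|A_K|-|A_0|$ counts correct certificates, by (ii), produced by the generator under checking alone.

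The main obstacle is making (iii) and (iv) precise. "True accuracy" and "information supplied by a key" are informal, so I would first fix the interpretation: accuracy means the count of correctly answered items, and no-key means that the transcript is measurable with respect to instances and generator outputs. Under that interpretation both items reduce to (ii) plus oracle-freeness.
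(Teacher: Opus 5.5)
Your proposal is correct and follows the paper's proof essentially step for step: (i) from retention of accepted items, (ii) from soundness of $V$, (iii) from (i) and (ii), and (iv) from oracle-freeness of $V$ together with the fact that $M_t$ records only the generator's own rejected certificates and the checker's reasons, which are functions of $(x,c)$. Your explicit reading of ``true accuracy'' as the cumulative count of correctly answered items is a sensible precision that the paper leaves implicit.
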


\begin{proof}
(i) holds because accepted items are retained. (ii) is soundness of $V$ (Lemma~\ref{lem:checkers}). (iii) follows from (i) and (ii). For (iv), $V$ depends only on $(x,c)$ (Definition~\ref{def:checker}), and $M_t$ records only the generator's own rejected certificates and the checker's reasons, themselves functions of $(x,c)$; no external answer enters the loop.
\end{proof}

The count $p_t$ is a logical violation cost in the sense of Definition~\ref{def:violationcost}, and the schedule is its greedy minimization under the preferential reading of Section~\ref{sec:prompt}: each round retains the cost-zero, that is verified, proposals and carries them forward.

Soundness fixes that every accepted answer is correct, but not how fast $A_t$ grows; that rate is set by how the generator uses the checker's reason. The threshold family makes this exact, because its rejection reason is \emph{directional}: a proposed scale $\lambda$ is told that it is too small when $s_\lambda<\tau$, too large when $s_{\lambda-1}\ge\tau$, or, once $\lambda$ is the crossing scale, the value the increment must take.

\begin{proposition}[Reach of the refinement]\label{prop:reach}
Take a threshold item whose crossing scale $\lambda^\ast$ lies in $[2,R]$ with $R=\lceil(a/(1-\tau))^{1/\alpha}\rceil$ (Theorem~\ref{thm:ratebound}), under the directional checker of Lemma~\ref{lem:checkers}(ii) and a budget $K$.
\begin{enumerate}[label=(\roman*),leftmargin=2.5em]
\item A generator that bisects the interval still consistent with the directional replies certifies the item within $\lceil\log_2(R-1)\rceil+1$ rounds; for $K$ at least this large the schedule reaches the whole family.
\item A generator that merely never re-proposes a refuted scale certifies it within $R-1$ rounds.
\item A generator whose proposal in each round is conditionally independent of the reply, accepting with probability at most $q$, certifies the item by round $K$ with probability at most $1-(1-q)^K$; if it cannot localize $\lambda^\ast$ from the signal then $q$ is the chance of jointly guessing the scale and the increment, and the expected $|A_K|$ stays near $|A_0|$.
\end{enumerate}
\end{proposition}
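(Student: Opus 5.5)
The three parts concern three different generators, so I treat them separately. All three use the same setup. The threshold item has integer crossing scale $\lambda^\ast\in[2,R]$. Since $s_\lambda$ is nondecreasing, the directional checker of Lemma~\ref{lem:checkers}(ii) answers a proposal $\lambda$ in one of three ways. It says ``too small'' exactly when $\lambda<\lambda^\ast$ (then $s_\lambda<\tau$). It says ``too large'' exactly when $\lambda>\lambda^\ast$ (then $s_{\lambda-1}\ge\tau$). At $\lambda=\lambda^\ast$ it returns the required increment $\delta=s_{\lambda^\ast}-s_{\lambda^\ast-1}$. So the generator sees the outcome of a three-way comparison against $\lambda^\ast$, plus one extra round to submit the increment the checker disclosed. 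The interval $[2,R]$ comes from Theorem~\ref{thm:ratebound}. Once $\lambda\ge (a/(1-\tau))^{1/\alpha}$ we have $s_\lambda\ge\tau$, so $\lambda^\ast\le R$. I take this as given by the hypothesis rather than re-deriving it, and flag the strict-versus-nonstrict threshold boundary as a point to state carefully.

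For (i), I would run binary search on the candidate set $\{2,\ldots,R\}$, which has $R-1$ elements. The invariant is that after $j$ directional replies, the set of scales still consistent with all replies is an integer interval containing $\lambda^\ast$, of size at most $\lceil (R-1)/2^j\rceil$. Proposing its midpoint either hits $\lambda^\ast$ or halves the interval. After at most $\lceil\log_2(R-1)\rceil$ rounds the generator has proposed $\lambda^\ast$, or has shrunk the interval to a singleton that it proposes next. The checker's reply at $\lambda^\ast$ reveals $\delta$, and one further round submits $(\lambda^\ast,\delta)$, which is accepted. This gives the $+1$. I expect the main obstacle here to be off-by-one bookkeeping: exactly when the singleton is proposed, and whether the increment round is counted separately. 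I would fix the convention that a round consists of one proposal and one reply, and check the edge cases $R=2$ and $R-1$ a power of two. Since items are independent and accepted certificates are never revised (Proposition~\ref{prop:refine}(i)), a budget $K$ at least this bound certifies every item, so $A_K$ is the whole family.

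For (ii), a generator that never re-proposes a refuted scale eliminates at least one element of $\{2,\ldots,R\}$ per rejected round. Within $R-2$ rejections only $\lambda^\ast$ remains, and it is proposed by round $R-1$. I would handle the increment component with the same convention as in (i): when the hit happens, the reply discloses $\delta$. If the statement's count does not absorb that extra round, I would adjust the count or note that the generator may submit the pair $(\lambda,\delta)$ with $\delta$ computed from the published $s$ values, which the checker defines.

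For (iii), I would define $E_t$ as the event of acceptance at round $t$. By hypothesis $\Prb(E_t\mid \text{history})\le q$ regardless of the reply. The chance of no acceptance through $K$ rounds is then a product of conditional failure probabilities, each at least $1-q$, so by the chain rule it is at least $(1-q)^K$. Hence the probability of certification by round $K$ is at most $1-(1-q)^K$. Summing over unverified items by linearity gives $\E|A_K|\le |A_0|+p_0\,(1-(1-q)^K)$. When the generator cannot localize $\lambda^\ast$, $q$ is the joint guessing probability of order $1/((R-1)\cdot|\text{increment range}|)$, and then $Kq$ is small, so $\E|A_K|\approx|A_0|$. This is the ``near'' claim, stated as the Bernoulli bound $1-(1-q)^K\le Kq$.
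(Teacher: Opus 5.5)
Your proposal is correct and follows the paper's proof. It halves the directionally consistent interval for (i), removes at least one scale per refutation for (ii), and for (iii) bounds the acceptance probability by $q$ in each round and multiplies over $K$ rounds, where your explicit bound $1-(1-q)^K\le Kq$ makes the paper's ``stays near $|A_0|$'' precise.

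The off-by-one you flag is settled by the fix you already mention in (ii), and the paper's proof glosses the same point: each proposal must carry the increment $s_\lambda-s_{\lambda-1}$, computed from the item's $s_\lambda=1-a\lambda^{-\alpha}$. Then the round that hits $\lambda^\ast$ is the accepting round, and three-way bisection finishes within $\lfloor\log_2(R-1)\rfloor+1\le\lceil\log_2(R-1)\rceil+1$ rounds. If instead $\delta$ is learned only from the checker's reply, the worst case is $\lfloor\log_2(R-1)\rfloor+2$ rounds in (i), which is one too many when $R-1$ is a power of two (already for $R=2$), and $R$ rounds in (ii).
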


\begin{proof}
The rate hypothesis $1-s_\lambda\le a\lambda^{-\alpha}$ gives $\lambda^\ast\le R$, so the search ranges over $[2,R]$. (i) Each directional reply discards the half of the feasible interval on the refuted side of the proposed midpoint, so after $\lceil\log_2(R-1)\rceil$ rounds the interval is $\{\lambda^\ast\}$; one further round sets the increment, which the checker has already named. (ii) Each refuted proposal removes at least one scale from the feasible set, which holds fewer than $R-1$ members besides $\lambda^\ast$. (iii) Independence gives probability at least $1-q$ of no acceptance in a round, hence at most $1-(1-q)^K$ over $K$ rounds.
\end{proof}

So with a sound directional checker the schedule erases an apparent threshold for any generator able to act on the feedback (cases (i)--(ii)) and fails only for one blind to it (case (iii)). The dividing line is a property of the generator's search competence under the checker, not of the checker or of the certificate; by Proposition~\ref{prop:refine} soundness holds throughout, so wherever the loop advances the gain is genuine. This is the sense in which a finite, checkable certificate is not only a witness but a handle: where the underlying capability is latent it is exposed by a cheap oracle-free check, and where it is absent no amount of checking supplies it.

\section{Numerical verification of the theorems}\label{sec:verify-numeric}

The quantitative theorems are reproduced here by controlled, exact-arithmetic checks of the algebraic and threshold statements, in a setting where the latent task, the field, and the scoring metric are known exactly. These are artifact checks, not experiments on a trained model. They use finite-field arithmetic and a fixed seed, and the ancillary material contains \texttt{anc/verify\_theorems.py} together with the emitted CSV files and the two figures of this section. A separate, complementary question---whether contemporary trained systems can themselves emit the finite certificates that the theorems isolate---is taken up in Section~\ref{sec:benchmark}, and controlled measurement protocols for trained models are stated in Section~\ref{sec:predictions}.

\IfFileExists{fig_identification.pdf}{%
\begin{figure}[H]
\centering
\includegraphics[width=0.66\linewidth]{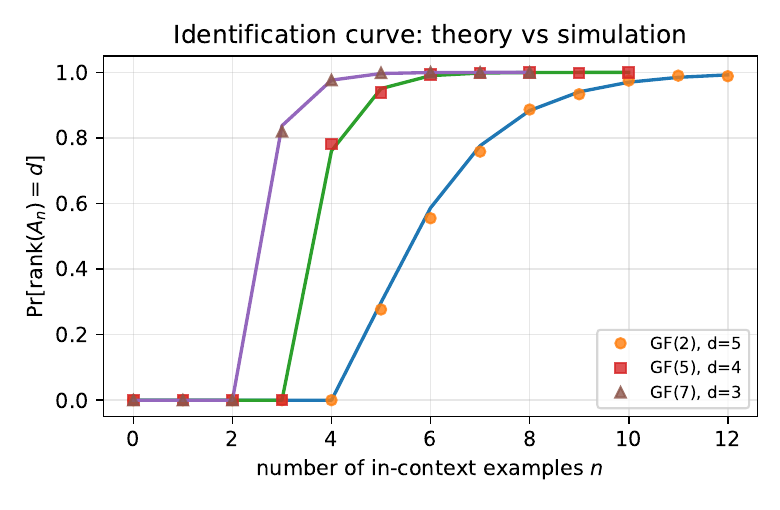}
\caption{Identification curve. Solid lines are the closed form $I_d(n)=\prod_{i=0}^{d-1}(1-Q^{i-n})$ of Theorem~\ref{thm:randomlinear}; markers are Monte-Carlo estimates of $\Pr[\rank(A_n)=d]$ over uniform contexts. The probability that every query becomes determined is the same curve.}
\label{fig:idcurve}
\end{figure}}{}

\paragraph{Identification curve.}
Table~\ref{tab:idcurve} compares the closed form $I_d(n)$ of Theorem~\ref{thm:randomlinear} with a Monte-Carlo estimate of $\Pr[\rank(A_n)=d]$ for uniform examples over $\F^d$. Across the configurations $(\F,d)\in\{(\mathbb F_2,5),(\mathbb F_5,4),(\mathbb F_7,3)\}$ and all context lengths $0\le n\le 2d+2$, the maximum absolute discrepancy is $0.032$ with $600$ trials per row, within the expected small-sample Monte-Carlo fluctuation. The curve is exactly zero for $n<d$ and rises sharply once $n\ge d$, matching the rank-nullity obstruction.

\begin{table}[H]
\centering
\begin{tabular}{rccc}
\toprule
$n$ & $I_d(n)$ (Theorem~\ref{thm:randomlinear}) & Monte Carlo & $|{\cdot}|$ \\
\midrule
$3$ & $0.0000$ & $0.0000$ & $0.000$\\
$4$ & $0.7606$ & $0.7817$ & $0.021$\\
$5$ & $0.9505$ & $0.9383$ & $0.012$\\
$6$ & $0.9900$ & $0.9933$ & $0.003$\\
$8$ & $0.9996$ & $1.0000$ & $0.000$\\
\bottomrule
\end{tabular}
\caption{Theory versus simulation for $\F=\mathbb F_5$, $d=4$ ($600$ trials per row). Full-determination probability matches the identification curve.}
\label{tab:idcurve}
\end{table}

\paragraph{Row-space and query-local determinacy.}
Theorem~\ref{thm:linear}(ii) states that a query $q$ is determined by a context exactly when $q\in\Row(A_n)$. Over $\mathbb F_2^4$ we enumerate, for each sampled context, all latent parameters consistent with the labels and test whether they agree on $w^\top q$. Across $2{,}400$ context--query pairs emitted by the artifact, the predicate ``all consistent parameters agree on $q$'' coincides with ``$q\in\Row(A_n)$'' in every case: zero mismatches. The file \texttt{anc/fixed\_query\_curve.csv} additionally records the fixed-query curve of Theorem~\ref{thm:querylocal}, and \texttt{anc/rowspace\_determinacy.csv} records the expected determined-query fraction of Corollary~\ref{cor:expecteddetermined}.

\paragraph{Threshold mirage and crossing scale.}
Take the smooth confidence $s_\lambda(\varphi)=1-a\lambda^{-\alpha}$ with $a=1$, $\alpha=1$, and threshold $\tau=0.9$. The thresholded metric $\Omega_\tau(s_\lambda)$ flips from $0$ to $1$ at $\lambda=10$, exactly the crossing-scale bound $\lambda_\tau=(a/(1-\tau))^{1/\alpha}=10$ of Theorem~\ref{thm:ratebound}, while the increment of $s_\lambda$ at the crossing is only $s_{10}-s_{9}\approx 0.011$. A discontinuous benchmark jump is therefore produced by a confidence that changes by about one percentage point, confirming Theorem~\ref{thm:mirage}; the same $s_\lambda$ read through a continuous metric stays gradual, confirming Proposition~\ref{prop:continuousobs}.

\IfFileExists{fig_mirage.pdf}{%
\begin{figure}[H]
\centering
\includegraphics[width=0.66\linewidth]{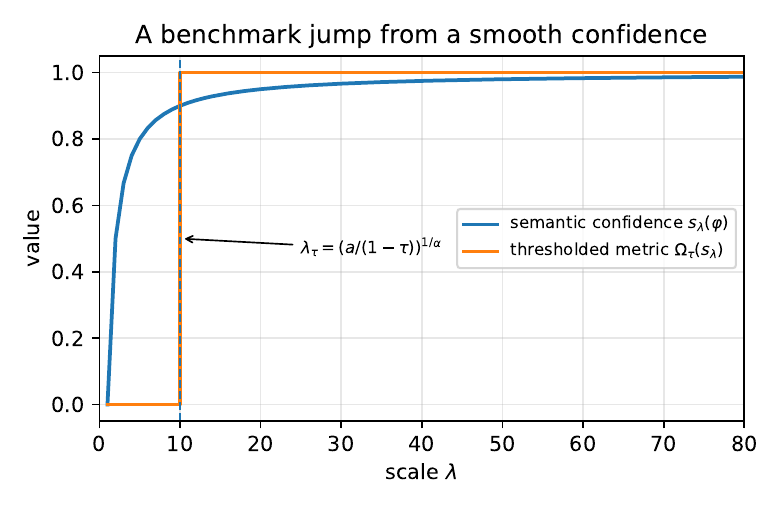}
\caption{A benchmark ``jump'' from a smoothly rising confidence. The thresholded metric (step) flips at the crossing scale $\lambda_\tau$ predicted by Theorem~\ref{thm:ratebound}, although $s_\lambda$ is smooth and changes by $\approx 0.011$ there.}
\label{fig:mirage}
\end{figure}}{}

\section{Certificate emission by trained language models}\label{sec:benchmark}

The simulations above check the theorems where the latent task and the metric are known in advance. A complementary question is whether the certificate objects the theorems isolate are within reach of trained systems: presented with an in-context problem, can a model return the same finite witness the theory uses---a row-space combination, a crossing-scale computation, or a selected-set preservation judgment? We probe this with a deterministic suite whose three item families mirror the three certificate types one to one. The suite is generated from a fixed seed, held identical across models, and graded against ground truth computed by exact arithmetic, so the questions and the answer key are fully reproducible offline.

\begin{itemize}[leftmargin=1.6em]
\item \textbf{DET} ($12$ items) instantiates the row-space criterion of Theorem~\ref{thm:linear}(ii) and the counts of Theorem~\ref{thm:linearcount}. Over $\mathbb F_p$ with $p\in\{2,5,7\}$ and dimensions $3\le d\le 5$, the model must decide whether a query is forced by an in-context linear example set and, when it is, return the forced value with a row-space combination as witness; underdetermined items require two consistent parameters that disagree on the query.
\item \textbf{THR} ($6$ items) instantiates the crossing-scale bound of Theorem~\ref{thm:ratebound} together with the anti-mirage statement of Theorem~\ref{thm:mirage}. For $s_\lambda=1-a\lambda^{-\alpha}$ and threshold $\tau$, the model must return the smallest integer scale at which $s_\lambda\ge\tau$ and the local increment $s_\lambda-s_{\lambda-1}$ there.
\item \textbf{PRS} ($5$ items) instantiates the preservation criterion of Theorem~\ref{thm:preservation}. In a finite preferential model with logically distinguishable worlds, the model must decide whether appending a prompt preserves every prior consequence, equivalently whether the newly selected set is contained in the old one.
\end{itemize}

To expose the transition rather than the ceiling, the panel is a weak-to-mid capability spread of nine systems from six independent laboratories. The frontier tier is omitted on purpose: it certifies the whole suite outright, so it sits past the transition and carries no information about where the transition is. Decoding is uniform---greedy, a high reasoning budget, and a single generous token ceiling so that answers are not truncated---and every response is graded only on its parsed final object, recovered by a brace-balanced extractor that is robust to surrounding reasoning text. Alongside the exact certificate score we record a \emph{graded} score, the mean fraction of an item's answer fields that are correct; this is the continuous proxy of Proposition~\ref{prop:conj}, whose threshold at $1$ returns the exact score. The credential is read only from the environment and is never written to disk. The runner, the exact prompt with its answer key, and the unedited responses are supplied as \texttt{anc/run\_certificate\_benchmark.py}, \texttt{anc/certificate\_benchmark\_protocol.txt}, and \texttt{anc/certificate\_benchmark\_responses.jsonl}.

\begin{table}[H]
\centering
\begin{tabular}{llcccrr}
\toprule
Model & Lab & DET & THR & PRS & Exact & Graded \\
& & ($/12$) & ($/6$) & ($/5$) & ($/23$) & ($/23$) \\
\midrule
\texttt{Qwen3 14B} & Alibaba & 12/12 & \textbf{6/6} & 4/5 & 22 & 22.0 \\
\texttt{Mistral Medium 3.1} & Mistral & 7/12 & 0/6 & 4/5 & 11 & 13.0 \\
\texttt{Mistral Small 3.2} & Mistral & 6/12 & 0/6 & 4/5 & 10 & 12.5 \\
\texttt{Llama 4 Scout} & Meta & 5/12 & 0/6 & 4/5 & 9 & 12.5 \\
\texttt{Qwen2.5 7B} & Alibaba & 6/12 & 0/6 & 4/5 & 10 & 11.5 \\
\texttt{Gemma 3 27B} & Google & 3/12 & 0/6 & 5/5 & 8 & 10.5 \\
\texttt{Llama 4 Maverick} & Meta & 4/12 & 0/6 & 3/5 & 7 & 9.0 \\
\texttt{Llama 3.1 8B} & Meta & 1/12 & 0/6 & 4/5 & 5 & 8.0 \\
\texttt{Claude 3.5 Haiku} & Anthropic & 3/12 & 0/6 & 3/5 & 6 & 7.5 \\
\bottomrule
\end{tabular}
\caption{Certificate emission across a weak-to-mid panel, sorted by graded score. \emph{Exact} counts items whose certificate is fully correct; \emph{Graded} is the mean fraction of correct answer fields (Proposition~\ref{prop:conj}). The threshold family \textbf{THR} is zero for every system except the strongest, which returns all six---an apparent emergent jump examined below.}
\label{tab:benchmark}
\end{table}

\IfFileExists{fig_benchmark.pdf}{%
\begin{figure}[H]
\centering
\includegraphics[width=0.74\linewidth]{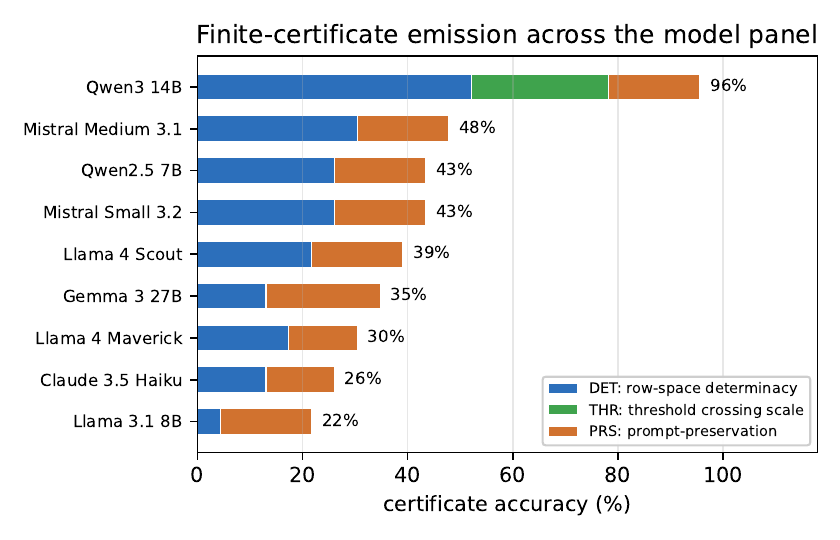}
\caption{Per-family exact certificate accuracy across the panel, as a fraction of the $23$-item suite. The threshold-crossing segment \textbf{THR} (green) is absent for every system but the strongest, where it appears in full.}
\label{fig:benchmark}
\end{figure}}{}

Two regularities appear. First, the graded score exceeds the exact score for every system, and the gap is widest on the multi-field families. Second, and more sharply, the threshold family behaves as an emergent capability: eight of the nine systems return \emph{no} correct threshold-crossing certificate, and the ninth returns all six. The exact \textbf{THR} score is thus flat at zero across most of the panel and then jumps to its maximum, while the graded \textbf{THR} confidence rises gradually through the panel ($0,\,0.5,\,1.0$ before the crossing). This is exactly the conjunctive threshold of Proposition~\ref{prop:conj}: a threshold-crossing certificate is accepted only when both the scale $\lambda$ and the local increment are correct ($k=2$), so its exact score tracks $s^{2}$ and remains near zero until the underlying field-confidence is high.

\IfFileExists{fig_benchmark_mirage.pdf}{%
\begin{figure}[H]
\centering
\includegraphics[width=0.66\linewidth]{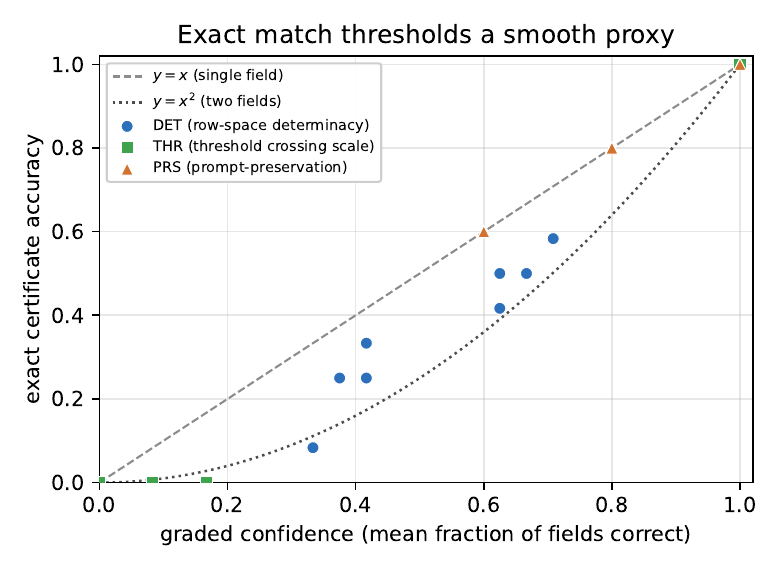}
\caption{Exact certificate accuracy against the graded proxy, one point per system per family. The single-field family \textbf{PRS} ($k=1$) lies on $y=x$; the two-field family \textbf{DET} follows $y=x^{2}$; \textbf{THR} sits at the foot of the $x^{2}$ curve until the single crossing. The apparent emergence is the metric artifact of Theorem~\ref{thm:mirage} and Proposition~\ref{prop:conj}, not a discontinuity in the underlying confidence.}
\label{fig:benchmarkmirage}
\end{figure}}{}

Figure~\ref{fig:benchmarkmirage} reads the same data through the two scores. The single-field family \textbf{PRS} lies on the diagonal $y=x$, exactly as Proposition~\ref{prop:conj}(iii) requires when no conjunction is present; the row-space family \textbf{DET}, dominated by two-field forced items, lies along $y=x^{2}$; and \textbf{THR} occupies the foot of the $x^{2}$ curve until the crossing. The emergence of threshold reasoning is therefore a thresholded-observation artifact in the precise sense of Theorem~\ref{thm:mirage}, reproduced on trained systems and quantified by the conjunction count $k$, rather than a discontinuity in semantic confidence. The witnesses returned are also of the intended form: for the underdetermined instance \texttt{DET6} over $\mathbb F_5$ a high-scoring system reported ``$q=(1,0,0)$ outside span; $w=(0,3,0)\!\mapsto\!0$ vs $w=(1,3,4)\!\mapsto\!1$,'' the two-witness certificate of Theorem~\ref{thm:linear}(ii), and for \texttt{PRS4} it reported ``$p$-selected $=\{W_1\}$, $(p\oplus q)$-selected $=\{W_2,W_3\}$; not a subset,'' the containment test of Theorem~\ref{thm:preservation}.

This probe is deliberately small and is not evidence for any theorem; the theorems are established by proof and reproduced by the exact-arithmetic checks above. Its role is to show that the certificate objects are concrete enough to be requested, produced, and machine-checked, and that the benchmark jump the theory predicts is visible on contemporary systems exactly where the conjunction count makes it sharpest.

\section{Predictions and a falsification protocol}\label{sec:predictions}

The theory gives three controlled measurement protocols for trained models. Each protocol specifies what must be measured and what would falsify the proposed semantic representation. Failure of a trained model to match a protocol would not refute the algebraic theorem; it would refute the claim that the tested behavior is well represented by that semantic task family or by that observation map.

\paragraph{Prediction 1 (row-space determinacy of in-context answers).}
For a model evaluated on a controlled linear in-context task family, answer concentration on a held-out query $q$ should track the row-space predicate $q\in\Row(A_n)$ if the model's behavior is represented by the finite-field semantic family. Full-context recovery should follow $I_d(n)$, while fixed-query recovery should follow Theorem~\ref{thm:querylocal}. \emph{Protocol.} Use synthetic linear probes over a controlled input dimension $d$; vary $n$; for each query record whether it is row-space spanned, whether the model's answer distribution is concentrated, and whether concentration follows the full or query-local curve. \emph{Falsifier.} Systematic concentration outside the row span, or systematic failure on spanned queries, falsifies this semantic representation for the tested model and task family.

\paragraph{Prediction 2 (emergence is a metric threshold, not a semantic discontinuity).}
A capability that appears to emerge under a thresholded or exact-match score should become gradual under a continuous semantic proxy, such as per-token log-probability of the gold answer, answer-distribution Brier score, calibrated confidence, or edit distance. The scale at which the thresholded score flips should be predictable from a fitted rate bound $1-s_\lambda\le a\lambda^{-\alpha}$. \emph{Protocol.} Re-score an emergence benchmark with both the original thresholded metric and a continuous proxy across model scales; fit the pre-threshold confidence curve; predict the crossing scale and compare it to the observed jump. \emph{Falsifier.} A discontinuity in the continuous proxy itself, after removing thresholding and discretization, would be evidence for a genuine semantic discontinuity rather than the metric artifact covered by Theorem~\ref{thm:mirage}.

\paragraph{Prediction 3 (prompt extension can delete consequences).}
Appending text to a prompt can remove previously supported conclusions when the appended text changes the selected preferred models. Which conclusions survive is given by the preservation criterion of Theorem~\ref{thm:preservation}: the surviving formulas are exactly those true throughout the newly selected set. \emph{Protocol.} Construct prompt pairs $(p,\,p\cdot q)$, estimate a fixed battery of high-confidence properties under $p$, and measure retention after appending $q$. \emph{Falsifier.} Uniform monotone retention under all prompt extensions would falsify the preferential-update representation for that prompt family; selective retention inconsistent with the selected-set criterion would falsify the proposed ranking model.

\section{Theoretical problems left open}\label{sec:directions}

The preceding sections settle qualitative determinacy, exact finite-field linear identification, prompt-preservation criteria, and threshold separation. They also expose several problems that are now sharply stated enough to be attacked directly.

\begin{problem}[Bounded-fragment certificate bounds]
Fix a fragment $\mathcal F$, a class of expansion chains, and a target schema $\chi(f(q))$. Determine the least function $B_{\mathcal F}(m,r)$ such that every entailment using $m$ examples and quantifier rank at most $r$ has a certificate of length at most $B_{\mathcal F}(m,r)$.
\end{problem}

The compactness theorem gives existence but no numerical bound. The finite-field results show that such bounds can be exact in algebraic task families. The next theoretical step is to connect certificate length with quantifier rank, VC-style dimension, Littlestone-style dimension, or rank invariants of the hypothesis quotient.

\begin{problem}[Preferential representation dimension]
Given a finite family of observed prompt consequence relations, characterize the smallest rank structure whose preferential models realize all of them.
\end{problem}

This is the prompt analogue of asking for a minimal automaton or minimal Kripke frame. A solution would quantify how much hidden priority structure is necessary to explain instruction-following behavior.

\begin{problem}[Fragment decisiveness under scale]
For a scale family $(\mathbb G_\lambda)_\lambda$ and a logical fragment $\mathcal F$, characterize when every $\varphi\in\mathcal F$ has a $0$-$1$ confidence limit.
\end{problem}

The limit theory in Section~\ref{sec:emergence} assumes decisiveness. The open mathematical issue is to derive decisiveness from structural hypotheses on the measured model classes, such as concentration, exchangeability, definable stability, or convergence of finite-dimensional marginals.

\begin{problem}[Decoder-faithfulness rates]
Given that $\psi\in T_\infty$, bound the rate at which the decoder failure probability for $\psi$ tends to zero, or construct examples where semantic convergence holds but decoder failure remains bounded away from zero.
\end{problem}

This problem matters because verification of a generated output requires both semantic entailment and decoder reliability. The present paper separates these notions; a full predictive theory must relate their rates under explicit assumptions.

\begin{problem}[Ultraproduct invariants of capability]
Identify which properties of the ultraproduct witness in Theorem~\ref{thm:ultra} correspond to stable capabilities across scale families.
\end{problem}

Candidate invariants include saturation level, omitted types, definable cuts, and elementary embeddings between scale subsequences. These are not benchmark labels; they are structural properties that may explain why some capabilities stabilize while others remain prompt-sensitive.

\section{Scope and failure modes}\label{sec:scope}

The construction is intentionally semantic rather than mechanistic. It does not claim to recover transformer circuits, training data, or internal activations. It also does not claim that the finite-field linear family is a universal model of in-context learning. Its claim is conditional and external: if a behavior is represented by a measured model class, a context update, and a decoder, then prompt consequence, finite context certification, and threshold manifestation obey the theorems proved above. This conditional form is necessary for falsifiability. A mismatch with trained-model measurements should be read as a failure of the proposed semantic representation for that behavior, not as a failure of first-order compactness or finite-field linear algebra.

This scope matters because several tempting interpretations are false. High benchmark accuracy does not imply membership in an almost-sure contextual theory. A prompt extension need not behave like monotone axiom addition. A finite example sequence can narrow admissible answers without identifying a unique latent task. A threshold jump can be produced by the scorer even when semantic confidence changes smoothly. Finally, a measure-one semantic property may still fail in sampled output if decoder faithfulness is weak.

The value of the calculus is therefore diagnostic. It states which extra assumption is needed for each inference. To pass from semantic entailment to observed reliability one needs a decoder-faithfulness bound. To pass from finite examples to unique prediction one needs a certificate such as the row-space condition in Theorem~\ref{thm:linear}. To pass from benchmark emergence to semantic emergence one needs evidence about the confidence function, not only about the thresholded score. These distinctions are exactly where verification claims about language models usually become ambiguous.

\section{Conclusion}

The paper has recast three empirical questions as finite semantic certification problems. A pre-trained language model is represented externally as a probability-bearing family of first-order structures equipped with context update and decoding. A prompt is a preferential update on a model class. In-context learning is a chain of semantic expansions by finite diagrams. Emergence is thresholded manifestation of membership in an almost-sure limit theory. The finite-field linear family supplies the fully explicit case: row-space membership is the certificate, rank is the obstruction, and the identification curves are closed-form.

The main mathematical mechanisms are elementary but targeted. Measure-one theories record semantic commitments. Definitional invariance prevents the construction from depending on notational choices. Preferential minima explain nonmonotonic prompt behavior and give exact preservation criteria. Compactness converts direct-limit entailment into finite context certificates, while indistinguishable-prefix arguments give lower bounds. Linear algebra supplies exact finite criteria, counting formulas, and random-context identification probabilities in a representative task family. Ultraproducts package asymptotic scale commitments into a single structure. Threshold theorems separate benchmark jumps from discontinuities in semantic confidence and give rate-sensitive crossing bounds.

This yields a clear verification picture. To verify a behavior, one should identify the formula expressing it, the context stage that entails it, the certificate size required to force it, the scale at which its semantic confidence clears the chosen threshold, and the decoder-faithfulness needed to make the semantic property visible in output. The ancillary package makes the quantitative part reproducible: it regenerates the identification curves, the row-space determinacy checks, and the threshold-mirage figure from the supplied Python script. Each component is explicit, and each can fail independently.

\end{document}